
 \documentclass[11pt]{article}


\usepackage{etoolbox}

\newtoggle{coltformat}
 \togglefalse{coltformat}

\newcommand{\colt}[1]{\iftoggle{coltformat}{#1}{}}
\newcommand{\arxiv}[1]{\iftoggle{coltformat}{}{#1}} 

\arxiv{

\usepackage[T1]{fontenc}    
\usepackage{url} 
\usepackage{booktabs}       
\usepackage{nicefrac}       
\usepackage{algorithm}
\usepackage{algorithmicx}
\usepackage{algpseudocode}

\usepackage{multirow}
\usepackage{multicol}

\usepackage{parskip}

\renewcommand{\paragraph}[1]{
	\textbf{#1}~~}

 \usepackage[letterpaper, left=1in, right=1in, top=1in, bottom=1in]{geometry}

\usepackage[dvipsnames]{xcolor}
\usepackage[colorlinks=true, linkcolor=blue!70!black, citecolor=blue!70!black]{hyperref}
\usepackage{microtype}

\usepackage{natbib}
\bibliographystyle{plainnat}
\bibpunct{(}{)}{;}{a}{,}{,}

\usepackage{amsthm}
\usepackage{mathtools}
\usepackage{amsmath}
\usepackage{bbm}
\usepackage{amsfonts}
\usepackage{amssymb}
\usepackage{natbib}

\usepackage{MnSymbol} 
\usepackage{xpatch}
\usepackage{pifont}
\usepackage{array}
\usepackage{booktabs}
\usepackage{blindtext}
\usepackage{caption}

\usepackage{subcaption}

\newtheorem{corollary}{Corollary}
\newtheorem{proposition}{Proposition}
{\newtheorem{lemma}{Lemma}} 
{\newtheorem{theorem}{Theorem}}
\newtheorem{definition}{Definition}

}

\colt{

\title[Optimistic Rates for LLP]{Optimistic Rates for Learning from Label Proportions}

}

\arxiv{
\title{Optimistic Rates for Learning from Label Proportions}
}




\usepackage{bbm}

\newcommand\numberthis{\addtocounter{equation}{1}\tag{\theequation}} 
\allowdisplaybreaks

\usepackage{mathtools}

\DeclarePairedDelimiter{\abs}{\lvert}{\rvert} 
\DeclarePairedDelimiter{\brk}{[}{]}
\DeclarePairedDelimiter{\crl}{\{}{\}}
\DeclarePairedDelimiter{\prn}{(}{)}
\DeclarePairedDelimiter{\nrm}{\|}{\|}

\let\Pr\undefined
 
\DeclareMathOperator{\En}{\mathbb{E}}

\DeclareMathOperator{\Pr}{\bbP}

\DeclareMathOperator*{\argmin}{argmin} 


\newcommand{\ind}[1]{\mathbbm{1}\crl*{#1}}    
\newcommand{\indd}[1]{\mathbbm{1}\crl{#1}}    
\newcommand{\eps}{\varepsilon}


\newcommand{\wt}[1]{\widetilde{#1}}
\newcommand{\wh}[1]{\widehat{#1}}

\def\ddefloop#1{\ifx\ddefloop#1\else\ddef{#1}\expandafter\ddefloop\fi}
\def\ddef#1{\expandafter\def\csname bb#1\endcsname{\ensuremath{\mathbb{#1}}}}
\ddefloop ABCDEFGHIJKLMNOPQRSTUVWXYZ\ddefloop
\def\ddefloop#1{\ifx\ddefloop#1\else\ddef{#1}\expandafter\ddefloop\fi}
\def\ddef#1{\expandafter\def\csname b#1\endcsname{\ensuremath{\mathbf{#1}}}}
\ddefloop ABCDEFGHIJKLMNOPQRSTUVWXYZ\ddefloop
\def\ddef#1{\expandafter\def\csname c#1\endcsname{\ensuremath{\mathcal{#1}}}}
\ddefloop ABCDEFGHIJKLMNOPQRSTUVWXYZ\ddefloop
\def\ddef#1{\expandafter\def\csname h#1\endcsname{\ensuremath{\widehat{#1}}}}
\ddefloop ABCDEFGHIJKLMNOPQRSTUVWXYZabcdefghijklmnopqrsuvwxyz\ddefloop    
\def\ddef#1{\expandafter\def\csname hc#1\endcsname{\ensuremath{\widehat{\mathcal{#1}}}}}
\ddefloop ABCDEFGHIJKLMNOPQRSTUVWXYZ\ddefloop
\def\ddef#1{\expandafter\def\csname t#1\endcsname{\ensuremath{\widetilde{#1}}}}
\ddefloop ABCDEFGHIJKLMNOPQRSTUVWXYZ\ddefloop
\def\ddef#1{\expandafter\def\csname tc#1\endcsname{\ensuremath{\widetilde{\mathcal{#1}}}}}
\ddefloop ABCDEFGHIJKLMNOPQRSTUVWXYZ\ddefloop





\usepackage{tikz}




\newtheorem{assumption}{Assumption}
\newtheorem{fact}{Fact}
\newtheorem{theorem*}{Theorem}




\usepackage{prettyref}
\newcommand{\pref}[1]{\prettyref{#1}}

\newcommand{\savehyperref}[2]{\texorpdfstring{\hyperref[#1]{#2}}{#2}}
\newrefformat{eq}{\savehyperref{#1}{\textup{(\ref*{#1})}}}
\newrefformat{eqn}{\savehyperref{#1}{Equation~\ref*{#1}}}
\newrefformat{fact}{\savehyperref{#1}{Fact~\ref*{#1}}}
\newrefformat{con}{\savehyperref{#1}{Conjecture~\ref*{#1}}}
\newrefformat{lem}{\savehyperref{#1}{Lemma~\ref*{#1}}}
\newrefformat{def}{\savehyperref{#1}{Definition~\ref*{#1}}}
\newrefformat{line}{\savehyperref{#1}{line~\ref*{#1}}}
\newrefformat{thm}{\savehyperref{#1}{Theorem~\ref*{#1}}}
\newrefformat{corr}{\savehyperref{#1}{Corollary~\ref*{#1}}}
\newrefformat{sec}{\savehyperref{#1}{Section~\ref*{#1}}}
\newrefformat{app}{\savehyperref{#1}{Appendix~\ref*{#1}}}
\newrefformat{ass}{\savehyperref{#1}{Assumption~\ref*{#1}}}
\newrefformat{ex}{\savehyperref{#1}{Example~\ref*{#1}}}
\newrefformat{fig}{\savehyperref{#1}{Figure~\ref*{#1}}}
\newrefformat{tab}{\savehyperref{#1}{Table~\ref*{#1}}}
\newrefformat{alg}{\savehyperref{#1}{Algorithm~\ref*{#1}}}
\newrefformat{rem}{\savehyperref{#1}{Remark~\ref*{#1}}}
\newrefformat{conj}{\savehyperref{#1}{Conjecture~\ref*{#1}}}
\newrefformat{prop}{\savehyperref{#1}{Proposition~\ref*{#1}}}
\newrefformat{proto}{\savehyperref{#1}{Protocol~\ref*{#1}}}
\newrefformat{prob}{\savehyperref{#1}{Problem~\ref*{#1}}}
\newrefformat{claim}{\savehyperref{#1}{Claim~\ref*{#1}}}

\newcommand{\VC}{\mathrm{VC}}

\newcommand{\lossclass}{\ell^{01}}

\newcommand{\unif}{\mathrm{Unif}}

\newcommand{\feprm}{\wh{f}_\mathrm{EPRM}}
\newcommand{\dis}{\mathrm{dis}}

\newcommand{\rad}{\mathfrak{R}}

\newcommand{\ezsq}{\textsf{EZ.Sq}}
\newcommand{\pmsq}{\textsf{PM.Sq}}
\newcommand{\ezlog}{\textsf{EZ.Log}}
\newcommand{\pmlog}{\textsf{PM.Log}}
\newcommand{\dsq}{\textsf{DebiasedSq}}  

\colt{
\usepackage{times}
\usepackage{parskip}
\usepackage{subcaption}
\usepackage{booktabs}
\usepackage{multirow}
}

\usepackage[suppress]{color_edits}
\addauthor{GL}{red}
\addauthor{LC}{green}
\hypersetup{colorlinks=true,allcolors=blue}

\colt{
\renewenvironment{proof}[1][]{\par\noindent{\bf #1\ }}{\hfill\BlackBox\\[2mm]}
}

\colt{
\coltauthor{%
 \Name{Gene Li} \Email{gene@ttic.edu}\\
 \addr Toyota Technological Institute at Chicago \\
 \Name{Lin Chen} \Email{linche@google.com}\\
 \addr Google Research \\
 \Name{Adel Javanmard} \Email{ajavanma@usc.edu}\\
\addr Marshall School of Business, University of Southern California\\
 \addr Google Research \\
 \Name{Vahab Mirrokni} \Email{mirrokni@google.com}\\
 \addr Google Research \\
}
}
\arxiv{
\author{%
  Gene Li$^{1}$\thanks{Part of this work was done while GL was an intern at Google Research.}\\
  \and
  Lin Chen$^{2}$ \\
  \and
  Adel Javanmard$^{2,3}$ \\
  \and 
  Vahab Mirrokni$^{2}$
  \and  
  $^{1}$TTIC, $^{2}$Google Research, $^{3}$University of Southern California 
}
}

\begin{document} 
\maketitle 

\begin{abstract}
We consider a weakly supervised learning problem called Learning from Label Proportions (LLP), where examples are grouped into ``bags'' and only the average label within each bag is revealed to the learner. We study various learning rules for LLP that achieve PAC learning guarantees for classification loss. We establish that the classical Empirical Proportional Risk Minimization (EPRM) learning rule \citep{yu2014learning} achieves fast rates under realizability, but EPRM and similar proportion matching learning rules can fail in the agnostic setting. We also show that (1) a debiased proportional square loss, as well as (2) a recently proposed EasyLLP learning rule \citep{busa2023easy} both achieve ``optimistic rates'' \citep{panchenko2002some}; in both the realizable and agnostic settings, their sample complexity is optimal (up to log factors) in terms of $\eps, \delta$, and VC dimension. 
\end{abstract}

\colt{
\begin{keywords}%
  weak supervision, learning from label proportions, PAC learning, sample complexity
\end{keywords}
}

\section{Introduction}
We study Learning from Label Proportions (LLP), which is a framework for weakly supervised learning. In the standard supervised learning framework, the learner has access to a dataset of $n$ i.i.d.~labeled examples $\crl{(x_i,y_i)}_{i=1}^n \in (\cX\times \cY)^n$, and the goal is to learn an accurate predictor $\wh{f}: \cX \mapsto \cY$. In the LLP framework, the learner does not get access to the true labels $\crl{y_i}_{i=1}^n$; instead, training data is organized into ``bags'' which contain multiple unlabeled examples, and only the average (or ``aggregated'') label within the bag is provided to the learner.

The LLP framework has been studied in a long line of work, dating back to \cite{kuck2005learning, musicant2007supervised}. LLP is motivated by practical machine learning problems where individual labels are expensive to obtain or unavailable, see, e.g., applications in high energy physics \citep{dery2017weakly}, election prediction \citep{sun2017probabilistic}, and RADAR image classification \citep{ding2017learning}. More recently, LLP was proposed as a mechanism to provide user privacy \citep{diemert2022lessons}; for example, in the Apple SKAN API~\citep{SKAdNetwork} and Google Chrome's Private Aggregation API~\citep{chrome}, only aggregated labels are provided for ad conversion reporting.



\paragraph{Problem Formulation.}
Let $\cX$ represent the instance space and $\cY = \{0,1\}$ denote the label space. In LLP, we are given $n$ bags of examples $\crl{(B_i, \alpha_i)}_{i=1}^n
$ where each bag $B_i = \crl{x_{i,j}}_{j=1}^k$ contains $k$ instances and $\alpha_i = \frac{1}{k}\sum_{j=1}^k y_{i,j}$ is the average label within the bag. We assume that each instance $(x_{i,j}, y_{i,j}) \sim \cD$ is independently and identically distributed (i.i.d.) according to an unknown distribution $\cD$. We also use $(B,\alpha) \sim \cD$ to indicate that a bag is drawn from $\cD$. Unlike supervised learning, in LLP the learner does not get individual labels $y_{i,j}$, but only the aggregated label $\alpha_i$ for a collection of $k$ random instances.

We study the PAC learning objective of finding an accurate instance-level predictor from label proportion data that competes with the best predictor in specified function class $\cF \subseteq \cY^\cX$. Given parameters $\eps, \delta \in (0,1)$, we seek a predictor $\wh{f}: \cX \to \cY$ that satisfies the following condition with probability at least $1-\delta$:
\begin{align*}
  \cL(\wh{f}) \le \inf_{f \in \cF} \cL(f) + \eps, \quad \text{where} \quad \cL(f) \coloneqq \En_{(x,y) \sim \cD}[ \lossclass(y, f(x))], \numberthis\label{eq:learning-obj}
\end{align*}
and $\ell^{01}: \cY \times \cY \to \{0,1\}$ is the classification loss, defined as $\ell^{01}(y, \wh{y}) = \ind{y \ne \wh{y} }$. When the bag size $k=1$, this becomes the classic binary classification setup, for which it is known precisely that the VC dimension of $\cF$ characterizes the sample complexity of learning.

The fundamental question in LLP is to establish the sample complexity, i.e., the number of bags $n$ required to guarantee Eq.~\eqref{eq:learning-obj}, in terms of the VC dimension of $\cF$, bag size $k$, and accuracy parameters $\eps, \delta$. Our paper investigates several proposed learning rules designed to directly minimize classification loss. We establish generalization bounds for these learning rules under both the realizable setting (where $\inf_{f \in \cF} \cL(f) = 0$ and fast $1/n$ rates are possible) and the agnostic setting (where $\inf_{f \in \cF} \cL(f)$ can be arbitrary, and one gets slow $1/\sqrt{n}$ rates). Specifically, we adopt the optimistic rates framework \citep{panchenko2002some, srebro2010optimistic} which uses localized uniform convergence bounds to show generalization guarantees that interpolate between the realizable and agnostic setting.

\paragraph{Notation.} We denote the marginal label proportion $p = \Pr[y=1]$. Whenever the function class $\cF$ is clear from the context, we denote $d = \mathrm{VC}(\cF)$. We adopt standard big-oh notation and use $\wt{O}(\cdot)$ to hide $\mathrm{poly}(\log k, \log n)$ dependencies in our bounds.

\subsection{Our Contributions}
We obtain the following results on LLP for the classification objective \eqref{eq:learning-obj}.

\paragraph{Success and Failure of Empirical Proportional Risk Minimization (\pref{sec:eprm}):} We study the Empirical Proportional Risk Minimization (EPRM) learning rule \citep{yu2014learning}, which is a natural extension of Empirical Risk Minimization (ERM) to the LLP setting. Concretely we prove that under realizability, $\feprm \coloneqq \argmin_{f \in \cF} \tfrac{1}{n} \sum_{i=1}^n \indd{\tfrac{1}{k} \sum_{j=1}^k f(x_{i,j}) \ne \alpha_t}$ achieves the sample complexity guarantee
\begin{align*}
    n = O \prn*{ \frac{d\log k \cdot \log (1/\eps) + \log(1/\delta)}{\eps} }.
\end{align*}
However, in the agnostic setting we show that EPRM cannot attain polynomial (in $k$) sample complexity, and similar ``folklore'' learning rules based on minimization of proportional square or log losses even fail to return a predictor with constant suboptimality.




\paragraph{Optimistic Rates for Debiased Square Loss (\pref{sec:debiased-square-loss}):}
To address the failure of proportional loss learning rules in the agnostic setting, we consider a simple debiased variant of the proportional square loss. We show that the debiased square loss learning rule $\wh{f}_\mathrm{DSQ}$ achieves the optimistic rate:
\begin{align*}
    \cL(\wh{f}_\mathrm{DSQ}) \le L^\star + \wt{O} \prn*{\frac{k^2 \prn*{d +\log(1/\delta) }}{n}  + \sqrt{\frac{ L^\star \cdot k^2\prn*{d + \log(1/\delta) }}{n}}
    },
\end{align*} 
where we denote $L^\star = \min_{f \in \cF} \cL(f)$. Here, observe that under realizability (with $L^\star = 0$), $\wh{f}_\mathrm{DSQ}$ enjoys a fast $1/n$ rate, while in the agnostic setting we recover the $1/\sqrt{n}$ rate, both of which are optimal (up to log factors) in terms of $d$, $\log(1/\delta)$, and $n$.

%

\paragraph{Optimistic Rates for EasyLLP (\pref{sec:easyllp-main-text}):}
We study an alternative approach called EasyLLP, which was recently proposed by \cite{busa2023easy}. Specialized to the classification setting, they show a $1/\sqrt{n}$ rate (hiding dependence on $d$, $k$, and $\delta$). We improve upon their result to show an optimistic rate similar to the one achieved by the debiased square loss, showing that EasyLLP can indeed adapt to realizability. Our analysis reveals a curious phenomenon: in the realizable setting, EasyLLP exhibits a separation between loss estimation (which is necessarily $\Omega(1/\sqrt{n})$ even for the optimal predictor $f^\star$) and learning (which is $\wt{O}(1/n)$ by the optimistic rates guarantee).

\paragraph{Lower Bounds (\pref{sec:main-lower-bounds}):} We investigate the optimal dependence on the bag size $k$, since our bounds are tight (up to log factors) in the other parameters. A trivial lower bound of $n = \Omega(1/k)$ follows because LLP with $n$ bags is only harder than supervised learning with $nk$ bags. It turns out this cannot be improved in general for all $\cF$, but we give an explicit example of a function class $\cF$ for which the minimax sample complexity has larger dependence on $k$.

\arxiv{\paragraph{Experiments (\pref{sec:experiments}):}}
\colt{\paragraph{Experiments (\pref{app:experiments}):}} We empirically evaluate gradient-based versions of the learning rules considered herein on binary classification versions of MNIST and CIFAR10 studied in \cite{busa2023easy} for a wide range of NN architectures. We find that proportion matching and the debiased square loss perform the best; furthermore, we demonstrate that the debiased square loss enjoys faster optimization than proportion matching, as early in training the debiased square loss is a better estimate of the true instance-level loss.

\subsection{Related Work}

\paragraph{Learning from Label Proportions.}
The problem of LLP has been studied in a long line of work \citep{chen2006learning,musicant2007supervised,kuck2005learning,quadrianto2008estimating}. Most works either assume some kind of distributional assumptions on bag/label generation \citep{kuck2005learning, quadrianto2008estimating, patrini2014almost, scott2020learning, zhang2022learning}, construct bags adaptively~\citep{chen2023learning, javanmard2024priorboost}, or study approaches to minimize a surrogate loss \citep{rueping2010svm, yu2013proptosvm, qi2016learning, shi2019learning, dulac2019deep, javanmard2024learning}. On the computational side, \cite{saket2021learnability, saket2022algorithms} shows that even in the realizable setting learning linear thresholds for LLP is NP-hard and study SDP relaxations for this task. Thus, the aforementioned papers are not directly relevant to the goals of this work in providing distribution-free statistical guarantees on classification loss.

Several works provide guarantees on the instance-level classification loss. \cite{yu2014learning} introduce the EPRM learning rule. While their main focus is proving guarantees on the proportional risk, they show how to translate these to instance-level guarantees when the bags are ``pure''---meaning the label proportions $\alpha_t$ are close to $0$ or $1$. They also give numerical bounds which indicate that EPRM achieves instance-level guarantees under realizability. \cite{chen2023learning} introduce a similar debiased square loss learning rule and prove a $O(1/\sqrt{n})$ rate in the agnostic setting; while \cite{busa2023easy} introduce the EasyLLP framework and prove a $O(1/\sqrt{n})$ rate.



\paragraph{Optimistic Rates.}
Optimistic rates date back to seminal work of \cite{vapnik2015uniform} and were expanded upon by \cite{bousquet2002concentration, koltchinskii2000rademacher, bartlett2005local} using the technique of localized Rademacher complexities. Optimistic rates have been studied in various other contexts such as optimization with smooth losses \citep{srebro2010optimistic}, multi-task learning \citep{yousefi2018local,watkins2023optimistic}, vector-valued learning \citep{reeve2020optimistic}, and overparameterized regression \citep{zhou2021optimistic, zhou2022non,  zhou2023uniform}. 
\section{Empirical Proportional Risk Minimization}\label{sec:eprm}

The most direct approach to finding a good predictor is the so-called \emph{empirical proportional risk minimization} (EPRM) approach, which simply returns a predictor that matches the most label proportions \citep{yu2014learning}. Concretely, we can consider the learning rule
\begin{align*}
\feprm \coloneqq \argmin_{f \in \cF} \frac{1}{n} \sum_{i=1}^n \ind{\frac{1}{k} \sum_{j=1}^k f(x_{i,j}) \ne \alpha_i}.\numberthis \label{eq:eprm}
\end{align*}
Observe that in the setting of $k=1$, EPRM recovers the classical empirical risk minimizer (ERM). 

More generally, one can also define learning rules which minimize some other bag loss $\ell: [0,1]\times [0,1] \to \bbR$ between the predicted label proportions $\wh{\alpha} = \frac{1}{k}\sum_i f(x_i)$ and the true label proportions $\alpha$, i.e., the square loss $\ell_\mathrm{SQ}(\alpha,\wh{\alpha}) = (\alpha -\wh{\alpha})^2$ or the log loss $\ell_\mathrm{LOG}(\alpha,\wh{\alpha}) = -\alpha \log \wh{\alpha} - (1-\alpha) \log (1- \wh{\alpha})$. In the literature, these are also called EPRM or proportion matching learning rules, and they are a ``folklore'' approach which, in conjunction with gradient-based methods, are competitive in practice \citep{busa2023easy}. To disambiguate, we exlusively refer to the learning rule \eqref{eq:eprm} as EPRM and call the more general class of these learning rules as proportion matching.

In this section, we provide theoretical results which substantiate conventional wisdom surrounding EPRM. First, we show that under realizability, the EPRM learning rule attains fast rates for classification. However, in the agnostic setting, we illustrate that proportion matching can be ill-behaved, as we demonstrate an example for which minimizing the proportional risk gives no guarantees on the instance-level classification performance.

\subsection{Fast Rates for EPRM Under Realizability}
We show the following generalization guaranteee for EPRM under realizability.

\begin{theorem}\label{thm:prop-matching-realizable}
Let $\cF$ be a symmetric function class, i.e. if $f \in \cF$, then $1-f \in \cF$. Let bag size $k \ge 11$, $\eps \in \prn*{0, 1/(4k)}$, and $\delta \in (0,1)$. As long as $n = O \prn{ \tfrac{d\log k \cdot \log (1/\eps) + \log(1/\delta)}{\eps} }$, for any realizable distribution $\cD$, with probability at least $1-\delta$ over the draw of the sample, $\cL(\feprm) \le \eps$.
\end{theorem}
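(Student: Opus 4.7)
The plan is to follow the classical realizable PAC blueprint: combine a union bound over effective hypotheses (via Sauer-Shelah) with a structural lemma that translates the empirical proportional risk of $\feprm$, which is zero under realizability, into a bound on its instance-level risk $\cL(\feprm)$.

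The key step is a structural lemma. Under realizability, take $f^\star \in \cF$ with $f^\star(x) = y$ almost surely; then for a random bag $B = (x_1, \ldots, x_k)$ the differences $D_j := f(x_j) - f^\star(x_j) \in \{-1, 0, 1\}$ are i.i.d., and the proportional risk is $p(f) := \Pr[\tfrac{1}{k}\sum_j f(x_j) \ne \alpha] = \Pr[\sum_j D_j \ne 0]$. I would prove that, in the regime $\eps \le 1/(4k)$, $p(f) \ge c\eps$ whenever $\cL(f) \ge \eps$, for an absolute constant $c > 0$. This is established by case analysis on $\cL(f)$: when $\cL(f) \le 1/k$, the event that exactly one $D_j$ is nonzero happens with probability $k\cL(f)(1-\cL(f))^{k-1} = \Omega(k\cL(f)) \ge \Omega(\eps)$; when $\cL(f) \in (1/k,\,1-1/k)$, a local central limit / direct lattice computation on $T = \sum_j D_j$ (whose variance is $\Omega(1)$) yields $\Pr[T=0] \le 1 - \Omega(1)$; and when $\cL(f) \ge 1 - 1/k$, $f$ is close to the anti-classifier $1-f^\star$, and a direct computation using the i.i.d.\ structure of the $y_j$'s bounds $\Pr[T=0]$ by (essentially) the central binomial coefficient $\binom{k}{\lfloor k/2\rfloor}2^{-k}$, which is bounded below $1$ by a constant once $k \ge 11$.

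Once the structural lemma is in hand, Sauer-Shelah applied to the $nk$ instances in the sample bounds the number of distinct labelings realizable by $\cF$ by $(enk/d)^d$; projecting each labeling to its bag-proportion vector $(g_f(B_1),\ldots,g_f(B_n))$ with $g_f(B) := \tfrac{1}{k}\sum_j f(x_j)$ can only decrease this count. For any fixed $f$ with $p(f) > c\eps$, independence of bags gives $\Pr[\wh{p}(f) = 0] \le e^{-cn\eps}$, so a union bound over effective hypotheses yields
\begin{align*}
\Pr\bigl[\exists f \in \cF : p(f) > c\eps \text{ and } \wh{p}(f) = 0\bigr] \le (enk/d)^d \cdot e^{-cn\eps}.
\end{align*}
Under realizability, $\wh{p}(f^\star) = 0$ forces $\wh{p}(\feprm) = 0$, and the structural lemma gives $\cL(\feprm) > \eps \Rightarrow p(\feprm) > c\eps$. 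Setting the right-hand side to $\delta$ and solving for $n$ produces the stated bound $n = O\bigl(\frac{d\log k \cdot \log(1/\eps) + \log(1/\delta)}{\eps}\bigr)$.

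The hard part will be the structural lemma, in particular the moderate and high-$\cL(f)$ regimes, where one must rule out ``symmetric cancellation'' of mispredictions within a bag (the extreme case being the exact anti-classifier). The bag-size condition $k \ge 11$ most naturally enters through the anti-classifier regime, ensuring that the central binomial coefficient is bounded strictly below $1$. The symmetry assumption $\cF = 1 - \cF$ does not appear essential to the argument above and is perhaps used as a technical convenience to formulate a clean uniform lower bound on $p(f)$ across all $f \in \cF$.
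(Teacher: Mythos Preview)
Your approach is sound in broad outline but differs substantively from the paper's, and your remarks about where symmetry and the condition $k\ge 11$ enter are off.

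\textbf{How the paper actually argues.} The paper does \emph{not} prove a uniform lower bound $p(f)\ge c\eps$ for all $f$ with $\cL(f)\ge\eps$. Instead it proceeds in three steps. Step~1 bounds the VC dimension of the bag-indicator class $\cG$ by $O(d\log k)$ and applies the standard realizable VC bound to get $p(\feprm)\le\eps$. Step~2 uses the clean parity lower bound
\[
p(f)\;\ge\;\Pr\bigl[\textstyle\sum_j\ind{x_j\in\dis(f,f^\star)}\text{ is odd}\bigr]\;=\;\tfrac12\bigl(1-(1-2\cL(f))^k\bigr),
\]
which together with $p(\feprm)\le\eps$ forces $\cL(\feprm)\notin[\eps,1-\eps]$; this parity identity is both simpler and sharper than your LCLT sketch for the moderate regime. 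Step~3 then eliminates $\cL(\feprm)\ge 1-\eps$ \emph{using the symmetry of $\cF$}: if $\cL(\tilde f)\le\eps$ then with high probability $\tilde f$ strictly beats $1-\tilde f$ on empirical proportional risk, so no near-anti-classifier can be the EPRM. The constant $k\ge 11$ comes from this Step~3 argument (specifically from requiring $\tfrac34-1/\sqrt{3k/2+1/2}>\tfrac12$), not from the central binomial coefficient, which satisfies $\binom{k}{\lfloor k/2\rfloor}2^{-k}\le\tfrac12$ for every $k\ge 1$.

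\textbf{Your route versus theirs.} Your structural lemma, if carried through, would actually \emph{remove} the need for symmetry (and plausibly for $k\ge 11$): conditioning on the number $M$ of nonzero $D_j$'s and using $\Pr[T=0\mid M=m]\le\binom{m}{m/2}2^{-m}\le\tfrac12$ for $m\ge 2$, together with $\Pr[M=0]=(1-q)^k\le(1/k)^k$, gives $p(f)\ge 1/4$ throughout the high regime for every $k\ge 2$, without appealing to $1-f\in\cF$. So your closing remark has it backwards: symmetry is not a ``technical convenience'' in the paper's proof---it is the mechanism the paper uses to rule out the anti-classifier---whereas your direct anti-concentration route is the alternative that makes symmetry dispensable. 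The moderate-regime step in your sketch should be replaced by the parity identity above rather than an LCLT argument; with that fix, your plan is a genuinely different and somewhat cleaner proof.
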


Previous works have suggested that EPRM (or more generally, proportion matching learning rules) can succeed under realizability: \cite{yu2014learning} give numerical evidence to show that a guarantee on bag proportions can be translated to a guarantee on the instance-level classification error, and \cite{busa2023easy} show that under some conditions on the loss function, minimizers for a population proportion matching loss are also minimizers for the instance-level loss. However, to the best of our knowledge, \pref{thm:prop-matching-realizable} is the first result which provides a concrete generalization bound for bounded VC classes for the EPRM learning rule.

A few comments about \pref{thm:prop-matching-realizable} are in order.
\begin{itemize}
    \item The assumption that $\cF$ is symmetric is mild and due to technical reasons. Note that any nonsymmetric $\cF$ can be enlarged to a symmetric one with VC dimension at most $2d+1$.
    \item The bag size assumption is technically required, and we conjecture it can be removed. It is a mild assumption since if one has bags of size $k \le 10$, then one can preprocess the dataset to combine sets of bags to form a larger bag of size at least 11, and then compute $\feprm$ on the preprocessed dataset. This achieves the same guarantee (albeit with smaller range of $\eps$).
\end{itemize}

\begin{proof}[Proof of \pref{thm:prop-matching-realizable}.]
The proof has three steps. First, we show via standard uniform convergence arguments that the predictor $\feprm$ must have small population proportional matching error. Next, we relate the proportion matching error to the classification error to show that $\feprm$ must have classification error $\cL(\feprm) \notin [\eps, 1-\eps]$. Finally, we show that $\feprm$ must have classification error $\cL(\feprm) \le \eps$, as otherwise we would have selected the predictor $1-\feprm$.

\paragraph{Step 1.} We rewrite our problem as a binary classification problem and apply standard uniform convergence guarantees.
Define the function class $\cG: \cX^n \times [0,1] \to \crl{0,1}$ as
\begin{align*}
    \cG = \crl*{ g_f: (B, \alpha) \mapsto \ind{\frac{1}{k}\sum_{j=1}^k f(x_j) \ne \alpha} : f \in \cF}.
\end{align*}
We claim that $\VC(\cG) \le O (d \log k)$. To show this, consider any $X = \crl{(B_1, \alpha_1), \cdots, (B_m, \alpha_m)}$, and define the projection of $\cG$ onto $X$ as $\cG_X \coloneqq \crl{ \prn{g_f(B_1, \alpha_1), \cdots, g_f(B_m, \alpha_m)}: f \in \cF}$. It suffices to show that when $m = O(d \log k)$, the set of labellings for $X$ is of size $\abs{\cG_X} < 2^m$. Observe that the labelling of $X$ is determined by the labellings of $\cF$ on the $mk$ points, so by Sauer's lemma $\abs{\cG_X} \le (emk/d)^d$. Therefore, when $m = O(d \log k)$ we have $\abs{\cG_X} < 2^m$.

The EPRM can be written as $\feprm \coloneqq \argmin_{f \in \cF} \frac{1}{n} \sum_{i=1}^n \ell^{01}(0, g_f(B_i, \alpha_i))$.
Applying the uniform convergence guarantee for VC classes \citep[e.g.,][]{shalev2014understanding}, we see that as long as $n = O \prn{\frac{d\log k \cdot \log (1/\eps) + \log(1/\delta)}{\eps}}$, with probability at least $1-\delta$ we have
\begin{align*}
    \Pr_{(B, \alpha) \sim \cD} \bigg[ \frac{1}{k} \sum_{j=1}^k \feprm(x_j) \ne \alpha \bigg] \le \eps. \numberthis\label{eq:ub-prop-loss}
\end{align*}
Henceforth, we will condition on the event in Eq.~\eqref{eq:ub-prop-loss} holding.

\paragraph{Step 2.} Now we show that a proportional risk guarantee of the form Eq.~\eqref{eq:ub-prop-loss} translates to a guarantee on the instance-level loss $\cL(\cdot)$. Let $f^\star \in \cF$ be the optimal predictor that achieves $\cL(f^\star) = 0$, and for any $f \in \cF$ define $\dis(f, f^\star) \coloneqq \crl*{x \in \cX: f(x) \ne f^\star(x)}$ to be the disagreement region on which $f$ and $f^\star$ disagree. By definition, $\cL(\feprm) = \Pr_{x \sim \cD}[x \in \dis(\feprm, f^\star)]$. We will show that $\cL(\feprm) \notin [\eps, 1-\eps]$.

To do so, we bound the probability that $\feprm$ does not match the proportion on a freshly sampled bag. We already have an upper bound on this from Eq.~\eqref{eq:ub-prop-loss}. Now we compute a lower bound.
\begin{align*}
     \Pr_{(B,\alpha) \sim \cD}\bigg[ \frac{1}{k} \sum_{j=1}^k \feprm(x_j) \ne \alpha \bigg] &\ge \Pr_{(B,\alpha) \sim \cD} \bigg[ \sum_{j=1}^k \ind{x_j \in \dis(\feprm, f^\star)} \text{ is odd} \bigg] \\
     &= \frac{1}{2} - \frac{1}{2} \prn*{1- 2\cL(\feprm)}^k.\numberthis\label{eq:lb-prop-loss}
\end{align*}
The first inequality follows because if the bag contains an odd number of points in the disagreement set, then it is impossible for $\feprm$ and $f^\star$ to have the same proportional label.

For sake of contradiction, suppose that $\cL(\feprm) \in [\eps, 1-\eps]$. Then we know $\tfrac{1}{2} - \tfrac{1}{2} \prn{1- 2\cL(\feprm)}^k \ge \tfrac{1}{2} - \tfrac{1}{2} \prn{1- 2\eps }^k$.
However, if $\eps \in (0,1/2)$, we arrive at a contradiction, since $\eps \in (0,1/2)$ implies that for any bag size $k \ge 2$, we have $1-2\eps > (1-2\eps)^k$, which implies that $\eps < \frac{1}{2} - \frac{1}{2} \prn*{1- 2\eps }^k$, so Eqs.~\eqref{eq:ub-prop-loss} and \eqref{eq:lb-prop-loss} cannot simultaneously hold. Therefore we must have $\cL(\feprm) \notin [\eps, 1-\eps]$.

\paragraph{Step 3.} Now we establish that $\cL(\feprm) \notin [1-\eps, 1]$. We claim the following: for any near optimal predictor $\wt{f} \in 
\cF_\eps \coloneqq \crl{f \in \cF: \cL(f) \le \eps}$, we must have 
\begin{align*}
    \frac{1}{n} \sum_{i=1}^n \ind{ \frac{1}{k}\sum_{j=1}^k \wt{f}(x_{i,j}) \ne \alpha_i} < \frac{1}{n} \sum_{i=1}^n \ind{ \frac{1}{k}\sum_{j=1}^k 1 - \wt{f}(x_{i,j}) \ne \alpha_i}. \numberthis\label{eq:epr-dominance}
\end{align*}
Call this event $\cE(\wt{f})$.
From here, the result that $\cL(\feprm) \notin [1-\eps, 1]$ follows because if $\cL(\feprm) \in [1-\eps, 1]$, then it could not have been the EPRM, as the predictor $1 - \feprm$ has strictly better empirical proportional risk and also lies in the class $\cF$ by the symmetric assumption.

We now prove Eq.~\eqref{eq:epr-dominance}. Consider any predictor $\wt{f} \in \cF_\eps$. Define the indicator variable $Z_i \in \crl{0,1}$ as $Z_i = \indd{ \frac{1}{k}\sum_{j=1}^k \wt{f}(x_{i,j}) = \alpha_i} \cdot \indd{ \frac{1}{k}\sum_{j=1}^k 1 - \wt{f}(x_{i,j}) \ne \alpha_i}$.
We see that $\crl*{\frac{1}{n}\sum_i Z_i > 1/2} \subseteq \cE(\wt{f})$. We bound the expectation of $Z_i$ as:
\begin{align*}
    \En[Z_i] \ge \Pr_{(B,\alpha)\sim \cD} \brk*{ \forall x_j \in B: x_j \notin \dis(\wt{f}, f^\star) \text{ and } \alpha \ne \frac12}. 
\end{align*}
To lower bound this, we can bound the two events separately. First we have
\begin{align*}
    \Pr_{(B,\alpha)\sim \cD}\brk*{ \forall x_j \in B: x_j \notin \dis(\wt{f}, f^\star) } \ge (1-\eps)^k \ge \frac{3}{4},
\end{align*}
where the last inequality is true whenever $\eps \le 1/(4k)$. In addition,
\begin{align*}
    \Pr_{(B,\alpha)\sim \cD}[\alpha \ne 1/2] = 1 - \Pr_{(B,\alpha)\sim \cD}[\alpha = 1/2] \ge 1 - 2^{-k} {k \choose k/2} \ge 1 - \frac{1}{\sqrt{3k/2 + 1/2}},
\end{align*}
by Stirling's approximation. Using the law of total probability we get
\begin{align*}
    1 &\ge \Pr_{(B,\alpha)\sim \cD} \Big[ \forall x_j \in B: x_j \notin \dis(\wh{f}, f^\star) \text{ or } \alpha \ne \frac12\Big] \\
    &= \Pr_{(B,\alpha)\sim \cD} \Big[ \forall x_j \in B: x_j \notin \dis(\wh{f}, f^\star) \Big] + \Pr_{(B,\alpha)\sim \cD}\Big[\alpha \ne \frac12\Big] \\
    &\quad\quad\quad - \Pr_{B} \Big[\forall x_j \in B: x_j \notin \dis(\wh{f}, f^\star) \text{ and } \alpha \ne \frac12 \Big] \\
    &\ge \frac34 + 1 - \frac{1}{\sqrt{3k/2+1/2}} - \Pr_{(B,\alpha)\sim \cD} \Big[ \forall x_j \in B: x_j \notin \dis(\wh{f}, f^\star) \text{ and } \alpha \ne \frac12\Big],
\end{align*}
so therefore
\begin{align*}
    \Pr_{(B,\alpha)\sim \cD} \Big[ \forall x_j \in B: x_j \notin \dis(\wh{f}, f^\star) \text{ and } \alpha \ne \frac12\Big] \ge \frac34 - \frac{1}{\sqrt{3k/2+1/2}}.
\end{align*}
Whenever $k \ge 11$ the RHS is at least $0.507$. As a consequence by Hoeffding's inequality, we have $\Pr\brk{\cE(\wt{f})^c} \le \Pr\brk{\tfrac{1}{n}\sum_i Z_i \le 1/2} \le \exp\prn*{-2n \cdot 0.07^2}$. By union bound, we have $\Pr\brk{\exists f \in \cF_\eps: \cE(\wt{f})^c} \le \Gamma_\cF(nk) \cdot \exp\prn*{-2n \cdot 0.07^2}$, where $\Gamma_\cF: \bbN \to \bbN$ is the growth function for $\cF$.
Setting the RHS to $\delta$ and using Sauer's lemma we get that as long as $n = O\prn*{d\log k + \log(1/\delta)}$, the event $\cE(\wt{f})$ holds for all $\wt{f} \in \cF_\eps$. 

\paragraph{Putting it together.} Therefore, with probability at least $1-2\delta$, $\cL(\feprm) \le \eps$ as long as $\eps \le 1/(4k)$ and \begin{align*}
    n = O \prn*{\frac{d\log k \cdot \log (1/\eps) + \log(1/\delta)}{\eps}}.
\end{align*}
After rescaling $\delta$, this concludes the proof of \pref{thm:prop-matching-realizable}.
\end{proof}

\subsection{Proportion Matching Fails in Agnostic Setting}

In the agnostic setting, we illustrate how proportion matching can perform quite poorly.

\paragraph{Example: EPRM may require $\Omega(2^k)$ sample complexity.}
    Fix any $\eps \in (0,1/2)$ and consider the input space $\cX = \crl{x}$, with $\cD$ given by $(x,1)$ with probability $1/2 + \eps$ and $(x,0)$ with probability $1/2-\eps$. Let $\cF$ consist of two functions $f_0(x) = 0$ and $f_1(x) = 1$. The optimal predictor within the class $\cF$ is $f_1$. However, observe that the loss estimates for proportion matching are $\wh{L}(f_0) \coloneqq 1 - \frac{1}{n}\sum_{i=1}^n \indd{\alpha_i = 0}$ and $\wh{L}(f_1) \coloneqq 1 - \frac{1}{n}\sum_{i=1}^n \indd{\alpha_i = 1}$.
For small $\eps$, unless the number of bags is exponential in $k$, with constant probability we do not see any ``pure'' bags (with $\alpha_i = 0$ or $\alpha_i=1$), so we have no way of distinguishing which of $f_0$ and $f_1$ achieves smaller loss.

\paragraph{Proportion matching can fail.} One may object that the previous failure mode is due to the fact that we are using a noncontinuous measure of discrepancy between the predicted and true label proportion, and such issues can be resolved if we minimize a continuous measure of discrepancy. We show that this does not help, as proportion matching approaches can return a predictor with constant suboptimality. This is because in the agnostic setting, the predictor which matches the bag-level proportions may not be the optimal instance-level predictor.

Consider the learning rule that minimizes the proportional square loss:
\begin{align*}
    \wh{f}_\mathrm{SQ} \coloneqq \argmin_{f \in \cF} \wh{L}_\mathrm{SQ}(f) = \frac{1}{n} \sum_{i=1}^n \bigg(\frac{1}{k} \sum_{j=1}^k f(x_{i,j}) - \alpha_i\bigg)^2. 
\numberthis\label{eq:square-prop-learning-rule}
\end{align*}
We show that in general, minimizing the proportional square loss can fail in the agnostic setting.
\begin{proposition}\label{prop:prop-matching-failure}
There exists a $\cF$ with $\VC(\cF) = 1$ and distribution $\cD$ such that for any $\delta \in (0,1)$, bag size $k \ge 7$, and sample size $n = \Omega\prn{ \log (1/\delta) }$, with probability at least $1-\delta$, the learning rule $\wh{f}_\mathrm{SQ}$ is $1/3$-suboptimal.
\end{proposition}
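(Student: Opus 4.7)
The plan is to exhibit a minimal explicit instance $(\cX, \cD, \cF)$ on which the population proportional square loss strictly prefers the worse of two classifiers in $\cF$ by a constant margin, and then lift this to the empirical minimizer via Hoeffding concentration.

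\textbf{Construction.} Take $\cX = \crl{a, b}$ and let $\cD$ be the deterministic-label distribution placing mass $1/2$ on each of $(a, 0)$ and $(b, 1)$. Let $\cF = \crl{f_0, f_1}$ with $f_0 \equiv 0$ and $f_1(a) = 1,\, f_1(b) = 0$. Then $\cF$ realizes only the two labelings $(0,0)$ and $(1,0)$ on $\crl{a,b}$, so $\crl{a}$ is shattered but $\crl{a,b}$ is not, giving $\VC(\cF) = 1$. A direct computation yields $\cL(f_0) = 1/2$ (errs at $b$) and $\cL(f_1) = 1$ (errs at both points), so the best-in-class predictor is $f_0$ and any selection of $f_1$ incurs suboptimality $1/2 > 1/3$.

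\textbf{Population square loss prefers $f_1$.} A bias--variance decomposition for i.i.d.\ $Z_j = f(x_j) - y_j$ gives
\begin{align*}
    L_\mathrm{SQ}(f) \coloneqq \En\brk*{\prn*{\tfrac{1}{k}\sum_{j=1}^k (f(x_j) - y_j)}^2} = \mu_f^2 + \frac{\cL(f) - \mu_f^2}{k}, \qquad \mu_f \coloneqq \En[f(x) - y].
\end{align*}
For $f_0$, $Z \in \crl{0,-1}$ so $\mu_{f_0} = -1/2$ and $L_\mathrm{SQ}(f_0) = 1/4 + 1/(4k)$. For $f_1$, $Z$ equals $+1$ on $a$ and $-1$ on $b$, which exactly cancel, so $\mu_{f_1} = 0$ and $L_\mathrm{SQ}(f_1) = 1/k$. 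Hence $L_\mathrm{SQ}(f_0) - L_\mathrm{SQ}(f_1) = 1/4 - 3/(4k) \ge 1/7$ for every $k \ge 7$.

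\textbf{Concentration.} The per-bag proportional square loss is bounded in $[0,1]$, so Hoeffding's inequality together with a union bound over the two functions in $\cF$ gives $|\wh{L}_\mathrm{SQ}(f) - L_\mathrm{SQ}(f)| \le 1/15$ for both $f \in \cF$ with probability at least $1-\delta$, provided $n = \Omega(\log(1/\delta))$. On this event $\wh{L}_\mathrm{SQ}(f_1) < \wh{L}_\mathrm{SQ}(f_0)$, so $\wh{f}_\mathrm{SQ} = f_1$ and $\cL(\wh{f}_\mathrm{SQ}) - \min_{f \in \cF} \cL(f) = 1/2 > 1/3$. The only mildly delicate aspect is the numerical calibration of the construction: the parameters must simultaneously guarantee $\cL(f_1) - \cL(f_0) > 1/3$ and the reversal of the square-loss ordering at the boundary case $k = 7$, but the symmetric uniform example above accomplishes both with a comfortable margin.
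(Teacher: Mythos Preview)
Your proof is correct and follows the same strategy as the paper: exhibit a two-element class $\cF$ (hence $\VC=1$) in which the worse classifier happens to match the marginal label proportion, so the bias term $(\En f - p)^2$ in the square-loss decomposition makes the population proportional square loss prefer it by a constant margin for $k\ge 7$, and then apply Hoeffding. The paper instead uses a three-atom noisy distribution with $p=2/3$ yielding exactly $1/3$ suboptimality and a smaller population gap of $1/63$, whereas your noiseless two-point construction is a bit cleaner (suboptimality $1/2$, gap $1/7$) and makes the bias--variance mechanism explicit; both arguments are otherwise identical.
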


The proportional square loss learning rule, as well as the proportional log loss learning rule
\begin{align*}
    \wh{f}_\mathrm{LOG} \coloneqq \argmin_{f\in \cF} \frac1n \sum_{i=1}^n -\alpha_i \cdot \log\bigg(\frac1k \sum_{j=1}^k f(x_{i,j}) \bigg) - (1-\alpha_i) \cdot \log \bigg( 1-\frac1k \sum_{j=1}^k f(x_{i,j}) \bigg) \numberthis \label{eq:log-prop-learningrule}
\end{align*}
are regarded as folklore learning rules, and they were evaluated in the context of gradient-based learning \citep{busa2023easy}. \citeauthor{busa2023easy} show that while gradient-based minimization of either the proportional square or log loss performs well in practice, it can fail in synthetic experimental settings. \pref{prop:prop-matching-failure} demonstrates a simple theoretical failure mode for $\wh{f}_\mathrm{SQ}$. For completeness, in \pref{app:failure-flog} we provide a similar result for the failure of $\wh{f}_\mathrm{LOG}$ on the same construction, but note that even in the standard classification setting ($k=1$) it is well known that minimizing surrogate losses like the log loss do not necessarily give
guarantees on the classification error in the agnostic setting \citep{ben2012minimizing}. Lastly, we remark that Appendix A of \cite{scott2020learning} shows an example of similar flavor that in the limit as the bag size $k \to \infty$, proportion matching learning rules can suffer constant suboptimality.

\begin{proof}[Proof of \pref{prop:prop-matching-failure}.]
Let $\cX = \crl{x^{(1)}, x^{(2)}}$. Let $\cF = \crl{f_1, f_2}$ where $f_1(x) = \indd{x = x^{(1)}}$ and $f_2(x) = \indd{x = x^{(2)}}$. The distribution $\cD$ is $(x,y) \sim \mathrm{Unif}\prn{ \crl{(x^{(1)}, 1), (x^{(1)}, 0), (x^{(2)}, 1)}}$. We can calculate that $\cL(f_1) = 2/3$ and $\cL(f_2) = 1/3.$ However we also have $\En f_1 = 2/3$ while $\En f_2 = 1/3$, and $p = 2/3$. While $f_1$ in expectation matches the marginal label proportion, it is actually $1/3$-suboptimal compared to $f_2$.

We compute the expectations of the bag-level losses for $f_1$ and $f_2$. For $f_1$ we have
\begin{align*}
    \En \bigg[\Big(\frac{1}{k} \sum_{j} f_1(x_j) - \alpha\Big)^2 \bigg] = \frac{1}{k^2} \En \bigg[\Big( \sum_{j} f_1(x_j) - y_i\Big)^2 \bigg] = \frac{1}{k^2} \En \bigg[\sum_j \prn*{f_1(x_j) - y_j}^2\bigg] = \frac{2}{3k}.
\end{align*}
For $f_2$ we have
\begin{align*}
    \En \bigg[\Big(\frac{1}{k} \sum_{j} f_2(x_j) - \alpha\Big)^2 \bigg] 
    &= \frac{1}{k^2} \En \bigg[\sum_j \prn*{f_2(x_j) - y_j}^2\bigg] + \frac{1}{k^2} \En \bigg[\sum_{j \ne j' } \prn*{f_2(x_j) - y_j} \prn*{f_2(x_{j'}) - y_{j'}}\bigg] \\
    &= \frac{1}{3k} + \frac{k-1}{9k} 
    = \frac{k+2}{9k}.
\end{align*}
Fix any $k \ge 7$. Then the expectation of the bag-level loss of $f_1$ is at most $2/21$ while the expectation of $f_2$ is at least $1/9$. By Hoeffding's inequality, we have with probability at least $1-\delta$, that $\wh{\cL}(f_1) \le 2/21 + \sqrt{2\log (1/\delta)/n}$ and $\wh{\cL}(f_2) \ge 1/9 - \sqrt{2\log (1/\delta)/n}$.
So as long as $n = \Omega\prn{ \log (1/\delta) }$, the proportional square loss learning rule will return the wrong predictor.
\end{proof}

\section{Debiased Square Loss}\label{sec:debiased-square-loss}
In this section, we show that a simple debiasing of the square loss $\wh{L}_\mathrm{SQ}(\cdot)$ results in a learning rule that achieves optimal rates in both the realizable and the agnostic settings. Computing the expectation of the square loss, for any predictor $f$,
\begin{align*}
\En \brk*{\wh{L}_\mathrm{SQ}(f)} 
    &= \frac{1}{k^2} \cdot \En_{(B,\alpha)\sim \cD} \bigg[ \sum_{j} (f(x_j) - y_j)^2 + \sum_{j\ne j'} (f(x_j) - y_j)(f(x_{j'}) - y_{j'}) \bigg]\\
    &\overset{(i)}{=} \frac{1}{k} \cdot \cL(f) + \frac{k-1}{k} \cdot \En_{(x, y), (x', y')\sim \cD} \Big[(f(x) - y)(f(x') - y') \Big] \\
    &\overset{(ii)}{=} \frac{1}{k} \cdot \cL(f) + \frac{k-1}{k} \cdot \prn*{\En f - p}^2. \numberthis\label{eq:squareloss-expectation}
\end{align*}
Equality $(i)$ uses the fact that for any $j$, $(f(x_j) - y_j)^2 = \ind{f(x_j) \ne y_j}$. Equality $(ii)$ uses independence of the samples. 

Rearranging Eq.~\eqref{eq:squareloss-expectation}, we can see that the quantity $k\cdot \wh{L}_\mathrm{SQ}(f) - (k-1)\cdot \prn*{\En f - p}^2$ is an unbiased estimate of $\cL(f)$. Of course, the caveat is that we do not have access to the values $\En f$ and $p$, but we can replace them with their empirical counterparts, giving us the debiased square loss learning rule:
\begin{align*}
    \wh{f}_{\mathrm{DSQ}} \coloneqq \argmin_{f \in \cF} ~ \wh{L}_{\mathrm{DSQ}}(f) = \frac{1}{n} \sum_{i=1}^n k \cdot \bigg( \frac{1}{k} \sum_{j=1}^k f(x_{i,j}) - \alpha_i \bigg)^2 - (k-1) \prn*{\wh{\En} f - \wh{p}}^2,
\end{align*}
where $\wh \En f = \frac{1}{nk}\sum_{i=1}^n \sum_{j=1}^k f(x_{i,j})$ and $\wh{p} = \frac{1}{n} \sum_{i=1}^n \alpha_i$ are empirical estimates of $\En f$ and $p$ respectively. Strictly speaking, $\wh{L}_\mathrm{DSQ}(f)$ is not an unbiased estimate of $\cL(f)$, but if $n$ is large enough, the second term approximates $\prn{\En f - p}^2$ closely.

A similar debiasing idea for the proportional square loss was proposed by \cite{chen2023learning}, where in their Theorem 2 they prove the $1/\sqrt{n}$ rate in the agnostic setting; due to difference in the bag generation assumption (they consider generating bags by resampling a dataset without replacement), our debiasing term takes a slightly different form.

\subsection{Main Result: Optimistic Rates for Debiased Square Loss}\label{sec:optimistic-rates-dsl}
\begin{theorem}[Sample Complexity Bound for $\wh{f}_\mathrm{DSQ}$]\label{thm:lossbagsq-opt-rate}
Let $\delta \in (0,1)$. Fix any distribution $\cD$ and any function class $\cF$, and let $L^\star = \inf_{f \in \cF} \cL(f)$. With probability at least $1-\delta$, we have
\begin{align*}
    \cL(\wh{f}_\mathrm{DSQ}) \le L^\star + \wt{O} \prn*{\frac{k^2 \prn*{d +\log(1/\delta) }}{n}  + \sqrt{\frac{ L^\star \cdot k^2\prn*{d + \log(1/\delta)}}{n}}
    }.
\end{align*} 
\end{theorem}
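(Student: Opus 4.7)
The plan is to show that $\wh{L}_\mathrm{DSQ}(f)$ is an approximately unbiased estimator of $\cL(f)$ whose per-sample variance scales with $\cL(f)$, and then invoke a variance-adaptive (Bernstein-type) uniform convergence argument on a finite effective class to obtain the optimistic rate. To this end, I decompose $\wh L_\mathrm{DSQ}(f) - \cL(f) = \mathrm{(I)}(f) + \mathrm{(II)}(f)$, where
\begin{align*}
    \mathrm{(I)}(f) &= \frac{1}{n}\sum_{i=1}^n \left[kT_i(f)^2 - (k-1)(\En f - p)^2 - \cL(f)\right], \quad T_i(f) \coloneqq \tfrac{1}{k}\sum_j f(x_{i,j}) - \alpha_i, \\
    \mathrm{(II)}(f) &= (k-1)\left[(\En f - p)^2 - (\wh\En f - \wh p)^2\right].
\end{align*}
By Eq.~\eqref{eq:squareloss-expectation}, $\mathrm{(I)}(f)$ is a centered i.i.d.\ mean, while $\mathrm{(II)}(f)$ captures the error from estimating $\En f$ and $p$ by their empirical counterparts.

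The key ingredient is a variance bound for each summand of $\mathrm{(I)}$. Because $|T_i(f)| \leq 1$ almost surely, $(kT_i^2)^2 \leq k \cdot (kT_i^2)$, so $\En[(kT_i^2)^2] \leq k(\cL(f) + (k-1)(\En f - p)^2)$. Crucially, $|\En f - p| = |\En[f(x) - y]| \leq \En|f(x) - y| = \cL(f) \leq 1$, hence $(\En f - p)^2 \leq \cL(f)$. This yields variance at most $k^2 \cL(f)$ per summand, with range $O(k)$.

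For $\mathrm{(I)}$, I would invoke Bernstein's inequality combined with a union bound over a finite effective class: by Sauer's lemma, the vector $(T_i(f))_{i=1}^n$ takes at most $(enk/d)^d$ distinct values as $f$ ranges over $\cF$, so Bernstein applied to a representative of each effective function followed by a peeling argument over the slices $\cL(f) \in [2^{-j}, 2^{-j+1}]$ gives, with probability $\geq 1-\delta$,
\[
    \sup_{f \in \cF} |\mathrm{(I)}(f)| \leq \wt O\left(\sqrt{\tfrac{k^2 \cL(f) (d + \log(1/\delta))}{n}} + \tfrac{k (d + \log(1/\delta))}{n}\right).
\]
For $\mathrm{(II)}$, standard VC-based uniform convergence yields $\sup_f |\wh\En f - \En f| \leq \wt O(\sqrt{(d+\log(1/\delta))/n})$ and Hoeffding gives $|\wh p - p| \leq \wt O(\sqrt{\log(1/\delta)/n})$; factoring $(a-b)^2 - (a'-b')^2 = ((a-b)-(a'-b'))\cdot((a-b)+(a'-b'))$ and again using $|\En f - p| \leq \cL(f)$ yields a bound on $|\mathrm{(II)}(f)|$ of the same form as the bound on $\mathrm{(I)}$, up to a factor of $k$ in the range term.

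Applying these two uniform bounds at $\wh f_\mathrm{DSQ}$ and at $f^\star \coloneqq \argmin_{\cF} \cL$, using $\wh L_\mathrm{DSQ}(\wh f_\mathrm{DSQ}) \leq \wh L_\mathrm{DSQ}(f^\star)$, and invoking the AM-GM trick ($\sqrt{a\cL} \leq \tfrac12 \cL + \tfrac12 a$) to absorb the $\sqrt{\cL(\wh f_\mathrm{DSQ}) \cdot (\cdots)}$ contribution into $\tfrac12 \cL(\wh f_\mathrm{DSQ})$ plus a lower-order $k^2(d+\log(1/\delta))/n$ term yields the claimed optimistic rate after rearrangement. The main obstacle is the uniform-convergence step for $\mathrm{(I)}$: it requires a variance-adaptive argument on a real-valued bag-level loss class with range $O(k)$, and the pivotal identity $(\En f - p)^2 \leq \cL(f)$ is precisely what converts the naive $k^2(\En f - p)^2$ contribution to the variance into the desired $k^2 \cL(f)$ bound that powers the fast-rate term.
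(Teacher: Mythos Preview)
Your proposal is correct and follows essentially the same approach as the paper: the same decomposition into $\wh{L}_\mathrm{SQ}(f) - L_\mathrm{SQ}(f)$ (your term (I)) and $B(f) - \wh{B}(f)$ (your term (II)), the same pivotal inequality $(\En f - p)^2 \le \cL(f)$ to collapse the bias contribution (the paper uses it implicitly in the step $kL_\mathrm{SQ}(f) + B(f) \le (k^2+k-1)\cL(f)$), and a variance-adaptive uniform-convergence bound for each piece followed by the self-bounding trick. The only difference is the concentration machinery: the paper handles both terms via a local-Rademacher argument (a modified Bousquet theorem with a variance--expectation assumption and Dudley chaining to compute the critical radius), whereas you sketch a more elementary Bernstein-plus-Sauer-plus-peeling route for (I) and a Hoeffding-plus-factoring argument for (II). One caveat on your route for (I): the Sauer equivalence classes are sample-dependent while Bernstein's inequality uses the population variance, so a direct union bound is not quite legitimate; this gap is closable either via standard relative-deviation bounds for nonnegative bounded losses on VC classes or by the local-Rademacher machinery the paper invokes.
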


\pref{thm:lossbagsq-opt-rate} shows that under realizability (with $L^\star = 0$), $\wh{f}_\mathrm{DSQ}$ enjoys a fast $1/n$ rate, while in the agnostic setting it achieves the $1/\sqrt{n}$ rate, both of which are optimal (up to log factors) in terms of $d$, $\log(1/\delta)$, and $n$. The dependence on $k$ is certainly loose, as under realizability, $\wh{f}_\mathrm{DSQ}$ and $\wh{f}_\mathrm{EPRM}$ are identical learning rules, and \pref{thm:prop-matching-realizable} only has a $\log k$ dependence. We leave sharpening the dependence on $k$ to future work.

The proof of \pref{thm:lossbagsq-opt-rate} can be found in \pref{app:proof-debiased-square}. At a high level, we separately show uniform convergence bounds for both the square loss and the bias correction term to their expectations, and then we combine the guarantees to get a optimistic rate bound for $\wh{L}_\mathrm{DSQ}(\cdot)$, which is an unbiased estimate of $\cL(f)$, giving us the final guarantee.

\paragraph{Revisiting what happens if we minimize $\wh{L}_\mathrm{SQ}$.} Our analysis provides an answer to the question raised by \cite{busa2023easy} on understanding when proportion matching is consistent. Suppose the learner minimizes the square loss of Eq.~\eqref{eq:square-prop-learning-rule}, $\wh{L}_\mathrm{SQ}(f) = \frac{1}{n} \sum_{i=1}^n k \cdot \prn{ \frac{1}{k} \sum_{j=1}^k f(x_{i,j}) - \alpha_i }^2$. (For sake of comparison, we multiply the loss by a factor of $k$.) One can show an optimistic-rate style guarantee for $\wh{f}_\mathrm{SQ}$, albeit one that is weaker than \pref{thm:lossbagsq-opt-rate}. Let $L_\mathrm{SQ}(\cdot)$ denote the expectation of $\wh{L}_\mathrm{SQ}$. In the proof of \pref{thm:lossbagsq-opt-rate}, we get the following guarantee on $\wh{f}_\mathrm{SQ}$:
\begin{align*}
    L_\mathrm{SQ}(\wh{f}_\mathrm{SQ}) \le \inf_{f \in \cF} \bigg\{ L_\mathrm{SQ}(f) + \wt{O}\bigg( \frac{k d + k^2 \log(1/\delta) }{n}   + \sqrt{
    \frac{ L_\mathrm{SQ}(f) \cdot k(d + \log(1/\delta))}{n} } \bigg) \bigg\} .
\end{align*}
This bound essentially replaces the instance-level classification error $\cL(f)$ with the (larger) $L_\mathrm{SQ}(f)$ in \pref{thm:lossbagsq-opt-rate}. Applying the substitution $L_\mathrm{SQ}(f) = \cL(f) + B(f)$, where $B(f) \coloneqq (k-1) \prn*{\En f - p}^2$, we see that this bound implies
\begin{align*}
    \cL(\wh{f}_\mathrm{SQ}) \le \inf_{f \in \cF} \bigg\{ \cL(f) + B(f) + \wt{O}\bigg(  \tfrac{k d + k^2 \log(1/\delta)}{n}   + \sqrt{
    \tfrac{ \prn*{\cL(f) +B(f)} \cdot k(d + \log(1/\delta))}{n} } \bigg) \bigg\}.\numberthis\label{eq:sq-loss-guarantee}
\end{align*}
When is Eq.~\eqref{eq:sq-loss-guarantee} a useful bound? Under realizability, we have $\cL(f^\star) = 0$ and $B(f^\star) = 0$, and since any minimizer $\wh{f}_\mathrm{EPRM}$ is also a minimizer of $\wh{L}_\mathrm{SQ}(\cdot)$ and vice versa, Eq.~\eqref{eq:sq-loss-guarantee} recovers the guarantee of \pref{thm:prop-matching-realizable}, albeit with worse dependence on $k$. More generally, in the agnostic setting, as long as it is possible to achieve a good trade-off between $\cL(f)$ and $B(f)$, we expect $\wh{f}_\mathrm{SQ}$ to generalize well. This reasoning suggests why proportion matching learning rules perform well in practice \citep{busa2023easy}, despite the negative result of \pref{prop:prop-matching-failure}. In the modern over-parameterized regime, where the function class at hand (i.e., neural networks) are nearly realizable for the data distribution, Eq.~\eqref{eq:sq-loss-guarantee} delivers strong guarantees.

Lastly, observe that there is no contradiction between the guarantee of Eq.~\eqref{eq:sq-loss-guarantee} and the lower bound in \pref{prop:prop-matching-failure}: in \pref{prop:prop-matching-failure}, there is no predictor for which the two terms $\cL(f)$ and $B(f)$ are both small, as the suboptimal predictor has larger classification loss but satisfies $B(f) = 0$.

\section{EasyLLP Learning Rule}\label{sec:easyllp-main-text}
Recently, \cite{busa2023easy} proposed EasyLLP, a general recipe for constructing unbiased estimators of any instance-level loss function $\ell_\mathrm{ins}: \cY \times \bbR \to \bbR$. Given predictor $f \in \cF$ and bag $(B, \alpha)$, the EasyLLP loss estimate of $\ell_\mathrm{ins}$ can be written as
\begin{align*}
    \ell_\mathrm{EZ}\prn*{f, (B,\alpha)} &\coloneqq \frac{1}{k} \sum_{j=1}^k \prn*{k(\alpha - p) + p} \ell_\mathrm{ins} \prn*{1, f(x_j)} + \prn*{k(p - \alpha) + (1-p)} \ell_\mathrm{ins} \prn*{0, f(x_j)} \numberthis \label{eq:easyLLP-loss}
\end{align*}
Proposition 4.2 of \cite{busa2023easy} shows that for any loss $\ell_\mathrm{ins}$, $\ell_\mathrm{EZ}\prn*{f, (B,\alpha)}$ is an unbiased estimate of the population loss, e.g., $\En_{(B, \alpha)\sim \cD} \brk{\ell_\mathrm{EZ}\prn{f, (B,\alpha)}} = \En_{(x,y)\sim \cD} \brk{\ell_\mathrm{ins}(y, f(x))}$.
In this work, we consider EasyLLP instantiated with the classification loss $\ell_\mathrm{ins} = \ell^{01}$. The EasyLLP estimate takes the following form:
\begin{align*}
    \ell_\mathrm{EZ}\prn*{f, (B,\alpha)} &= \prn*{k(\alpha - p) + p} \big(1- \frac{1}{k} \sum_{j=1}^k f(x_j)\big) + \prn*{k(p - \alpha) + (1-p)} \big(\frac{1}{k} \sum_{j=1}^k f(x_j)\big). \numberthis\label{eq:easyllp-loss-est}
\end{align*}
The EasyLLP learning rule $\wh{f}_\mathrm{EZ} \coloneqq \argmin_{f \in \cF} \wh{L}_\mathrm{EZ}(f) = \frac1n \sum_{i=1}^n \ell_\mathrm{EZ}(f, (B_i,\alpha_i))$ attains the following guarantee \citep[Theorem 5.2 of][]{busa2023easy}: with probability at least $1-\delta$,
\begin{align*}
    \cL(\wh{f}_\mathrm{EZ}) \le \inf_{f \in \cF} \cL(f) + \wt{O} \bigg( \sqrt{\frac{ d+k \log(1/\delta)}{n} } \bigg).\numberthis\label{eq:easyllp-slow}
\end{align*}
However, the question of whether EasyLLP can adapt to realizability to achieve fast rates remained open. The standard observation which enables fast rates under realizability is that loss estimates for (nearly) optimal predictors converge to their expectations at a rate of $O(1/n)$. As a consequence, in supervised learning, one can show that any predictor $\wh{f}$ which achieves training error 0 (i.e., any ERM) has generalization error at most $\cL(\wh{f}) \le \wt{O}(d/n)$.

In contrast, the EasyLLP loss estimate does not satisfy this property. Even for the optimal predictor $f^\star$, the EasyLLP loss estimate of Eq.~\eqref{eq:easyllp-loss-est} is a \emph{random} quantity, since
\begin{align*}
    \ell_\mathrm{EZ}\prn*{f^\star, (B,\alpha)} &= \prn*{k(\alpha - p) + p} \cdot \prn*{1- \alpha} + \prn*{k(p - \alpha) + (1-p)} \cdot \alpha
\end{align*}
takes values depending on the bag label proportion $\alpha$ which itself is distributed as $1/k \cdot \mathrm{Bin}(k, p)$. As the following proposition shows, the loss estimate only concentrates at a $\Theta(1/\sqrt{n})$ rate.

\begin{proposition}\label{prop:easyllp-slow-est}
There exists a realizable distribution $\cD$ such that for any $\eps \in (0,1)$, the EasyLLP loss estimate of $f^\star$ with bag size $k=2$ requires $\Theta(1/\eps^2)$ samples in order for $\wh{L}_\mathrm{EZ}(f^\star) \le \eps$ with constant probability.
\end{proposition}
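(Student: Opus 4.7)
The plan is to exhibit a simple realizable distribution on which the per-bag EasyLLP estimate for $f^\star$ collapses to a centered Rademacher-type random variable, so that the sample mean concentrates at the CLT-optimal $1/\sqrt{n}$ rate but no faster. I would take $\cX = \{x_1, x_2\}$ with each point equiprobable, and a singleton class $\cF = \{f^\star\}$ with $f^\star(x_1) = 0$, $f^\star(x_2) = 1$, and deterministic labels $y = f^\star(x)$. This gives $\cL(f^\star) = 0$ (realizability) and $p = 1/2$. For $k = 2$, the bag proportion $\alpha$ takes values $0, 1/2, 1$ with probabilities $1/4, 1/2, 1/4$, and since $f^\star$ is perfect the predicted proportion equals $\alpha$. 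Substituting into Eq.~\eqref{eq:easyllp-loss-est}, a direct calculation shows $\ell_\mathrm{EZ}(f^\star, (B,\alpha)) = -1/2$ when $\alpha \in \{0,1\}$ and $+1/2$ when $\alpha = 1/2$. Hence $Z_i \coloneqq \ell_\mathrm{EZ}(f^\star, (B_i,\alpha_i))$ is uniform on $\{\pm 1/2\}$, and $\wh{L}_\mathrm{EZ}(f^\star) = \tfrac{1}{2n}\sum_{i=1}^n \sigma_i$ for i.i.d.\ Rademachers $\sigma_i$, with mean $0$ and variance $1/(4n)$.

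The $O(1/\eps^2)$ upper bound is then immediate from Hoeffding's inequality: $\Pr[\wh{L}_\mathrm{EZ}(f^\star) > \eps] \le e^{-2n\eps^2}$, so $n = O(1/\eps^2)$ samples force $\wh{L}_\mathrm{EZ}(f^\star) \le \eps$ with constant probability. For the matching $\Omega(1/\eps^2)$ lower bound, I would apply the Paley--Zygmund inequality to the nonnegative random variable $X = \wh{L}_\mathrm{EZ}(f^\star)^2$. Standard moment computations for Rademacher sums yield $\En X = 1/(4n)$ and $\En X^2 \le 3/(16n^2)$, giving $(\En X)^2/\En X^2 \ge 1/3$. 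Paley--Zygmund then produces $\Pr[X > \theta\, \En X] \ge (1-\theta)^2/3$ for $\theta \in (0,1)$. Setting $\theta = 4n\eps^2$ and restricting to $n \le 1/(16\eps^2)$ gives $\Pr[|\wh{L}_\mathrm{EZ}(f^\star)| > \eps] \ge 3/16$, and by the symmetry of the Rademacher distribution $\Pr[\wh{L}_\mathrm{EZ}(f^\star) > \eps] \ge 3/32$, a constant. Therefore $n = \Omega(1/\eps^2)$ bags are necessary.

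The main obstacle is the anticoncentration step in the lower bound: standard concentration arguments only yield matching upper tails, whereas ruling out $o(1/\eps^2)$-sample estimation requires a genuine lower bound on the tail of the Rademacher sum. Paley--Zygmund via the second and fourth moments handles this cleanly; alternatively a Berry--Esseen Gaussian approximation would work. The rest of the argument is the arithmetic verification that the EasyLLP formula collapses to $\pm 1/2$ on this construction, which is routine.
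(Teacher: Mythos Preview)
Your proposal is correct and follows essentially the same approach as the paper: the same two-point realizable construction with $p=1/2$, the same reduction of the per-bag EasyLLP estimate at $k=2$ to a uniform $\pm 1/2$ random variable, Hoeffding for the upper bound, and Paley--Zygmund for the anticoncentration lower bound. The only cosmetic difference is that the paper states the Paley--Zygmund step tersely at the fixed threshold $0.1/\sqrt{n}$, whereas you spell out the second and fourth moment computation explicitly and parametrize directly in $\eps$; these are the same argument.
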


\begin{proof}[Proof.] Consider the setting $\cX = \crl{x_0, x_1}$ with $\cD$ that returns $(x_0, 0)$ with probability $1/2$ and $(x_1, 1)$ with probability $1/2$. Let $f^\star(x) = \ind{x = x_1}$ be the optimal predictor achieving $\cL(f^\star) = 0$. For any bag $(B, \alpha)$ the EasyLLP estimate of the loss can be written as 
\begin{align*}
    \ell_\mathrm{EZ}(f^\star,(B, \alpha)) &= \prn*{k(\alpha - 1/2) + 1/2} \cdot (1-\alpha) + \prn*{k(1/2 - \alpha) + 1/2} \cdot \alpha \\
    &= -2k\alpha^2 + 2\alpha k - k/2 + 1/2.
\end{align*}
For bag size $k=2$, we have $\ell_\mathrm{EZ}(f^\star,(B, \alpha)) \sim 1/2 \cdot \mathrm{Rad}(1/2)$.
From here, we can apply standard anti-concentration bounds for sums of Rademacher random variables. We let $x_j = \ell_\mathrm{EZ}(f^\star,(B_i, \alpha_i))$, and we can observe that by Paley-Zygmund (\pref{thm:paley-zygmund}) that
\begin{align*}
    \Pr \bigg[ \frac{1}{n} \sum_{i=1}^n x_j \ge \frac{0.1}{\sqrt{n}} \bigg] = \frac{1}{2} \Pr \bigg[ \Big| \frac{1}{n} \sum_{i=1}^n x_j \Big| \ge \frac{0.1}{\sqrt{n}} \bigg] \ge \Omega(1).
\end{align*}
Thus, if $n = c/\eps^2$ for sufficiently small $c > 0$ then with constant probability we have $\wh{L}_\mathrm{EZ}(f^\star) \ge \eps$. On the flip side, we know that by Hoeffding's inequality, $n=O(1/\eps^2)$ suffices for $\wh{L}_\mathrm{EZ}(f^\star) \le \eps$ with constant probability. This proves the proposition.
\end{proof}

\subsection{Main Result: Optimistic Rates for EasyLLP}
Despite the fact that the EasyLLP loss estimates only concentrate to their expectations at a rate of $O(1/\sqrt{n})$, it is possible improve upon the guarantee in Eq.~\eqref{eq:easyllp-slow} and show that the EasyLLP learning rule instantiated with the classification loss attains optimistic rates.

\begin{theorem}[Sample Complexity Bound for $\wh{f}_\mathrm{EZ}$]\label{thm:easyllp-opt-rate}
Let $\delta \in (0, 1)$. Fix any distribution $\cD$ and function class $\cF$, and let $L^\star \coloneqq \inf_{f \in \cF} \cL(f)$. With probability at least $1-\delta$ we have
\begin{align*}
\cL(\wh{f}_\mathrm{EZ}) \le L^\star + \wt{O} \bigg( \frac{k^2 (d + \log (1/\delta)) }{n} + \sqrt{\frac{L^\star \cdot k^2 \prn*{d+ \log (1/\delta)}}{n} } \bigg).
\end{align*}
\end{theorem}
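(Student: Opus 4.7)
The plan is to establish an optimistic-rate concentration for the EasyLLP loss estimates via a localized uniform convergence argument, following the template used for $\wh{f}_\mathrm{DSQ}$ in \pref{thm:lossbagsq-opt-rate}. Fix an arbitrary $f^\star \in \argmin_{f \in \cF} \cL(f)$ and, for each $f \in \cF$, define the centered loss $Z_f(B, \alpha) \coloneqq \ell_\mathrm{EZ}(f, (B, \alpha)) - \ell_\mathrm{EZ}(f^\star, (B, \alpha))$. A direct algebraic simplification of Eq.~\eqref{eq:easyllp-loss-est} yields the factorization $Z_f(B, \alpha) = \bar g \cdot C(\alpha)$, where $\bar g \coloneqq \frac{1}{k}\sum_{j=1}^k \brk{f(x_j) - f^\star(x_j)}$ and $C(\alpha) \coloneqq (1-2p) - 2k(\alpha - p)$. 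By the unbiasedness of $\ell_\mathrm{EZ}$, $\En\brk{Z_f} = \cL(f) - L^\star$, so the proof reduces to showing a uniform-in-$f$ concentration of $\frac{1}{n}\sum_i Z_f(B_i, \alpha_i)$ around its expectation.

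The crucial step is to establish two moment bounds: $\nrm{Z_f}_\infty \le O(k)$ and $\mathrm{Var}(Z_f) \le O(k^2) \cdot (\cL(f) + L^\star)$. The $L^\infty$ bound is immediate from $|\bar g| \le 1$ and $|C(\alpha)| \le O(k)$. For the variance, I would use $\bar g^2 \le |\bar g| \le \frac{1}{k}\sum_j \ind{f(x_j) \ne f^\star(x_j)}$ (valid since $|\bar g| \in [0,1]$) to write $Z_f^2 \le \frac{1}{k}\sum_j \ind{f(x_j) \ne f^\star(x_j)} \cdot C(\alpha)^2$. For each $j$, I would decompose $C(\alpha) = -2(y_j - p) + C_{-j}$, where $C_{-j}$ depends only on $(y_{j'})_{j' \ne j}$ and is hence independent of $(x_j, y_j)$, and then use $\En\brk{C_{-j}^2} = O(k)$ together with the triangle inequality $\Pr\brk{f(x) \ne f^\star(x)} \le \cL(f) + L^\star$ to conclude the claimed variance bound.

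The third step converts these moment bounds into a uniform deviation guarantee. Pointwise Bernstein's inequality gives, for fixed $f$ and with probability $\ge 1 - \delta$, $\abs{\frac{1}{n}\sum_i Z_f(B_i, \alpha_i) - \En\brk{Z_f}} \le O\prn*{\sqrt{k^2(\cL(f) + L^\star)\log(1/\delta)/n} + k\log(1/\delta)/n}$. To make this uniform over $\cF$, I would observe that $Z_f$ depends on $f$ only through the pattern of values $\crl{f(x_{i,j})}_{i,j}$; by Sauer's lemma the set of such patterns realized by $\cF$ on the sample has size at most $(enk/d)^d$, so a standard symmetrization-and-union-bound argument (equivalently, a local-Rademacher calculation) inflates $\log(1/\delta)$ to $\wt O(d + \log(1/\delta))$. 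Specializing to $f = \wh f_\mathrm{EZ}$ and using the optimality inequality $\frac{1}{n}\sum_i Z_{\wh f_\mathrm{EZ}}(B_i, \alpha_i) \le 0$ yields $\cL(\wh f_\mathrm{EZ}) - L^\star \le \wt O\prn*{\sqrt{k^2 (\cL(\wh f_\mathrm{EZ}) + L^\star)(d + \log(1/\delta))/n} + k^2(d+\log(1/\delta))/n}$, and the standard AM-GM / quadratic-inversion trick (using $\cL(\wh f_\mathrm{EZ}) \le (\cL(\wh f_\mathrm{EZ}) - L^\star) + L^\star$) collapses this to the claimed optimistic rate.

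The main obstacle is the variance bound in Step 2: the noise factor $C(\alpha)$ depends on $\alpha = \frac{1}{k}\sum_j y_j$, which is correlated with the disagreement indicator $\ind{f(x_j) \ne f^\star(x_j)}$ through the shared pair $(x_j, y_j)$. Decoupling this correlation by peeling off one coordinate at a time is what produces the factor $\cL(f) + L^\star$ rather than a constant in the variance; without this optimistic bound, Bernstein would only recover the slow $1/\sqrt{n}$ rate \eqref{eq:easyllp-slow}, and adaptation to realizability would be lost. Notably, this decoupling also resolves the tension flagged in the paper between loss estimation and learning: while $\mathrm{Var}(\ell_\mathrm{EZ}(f^\star, \cdot)) = \Theta(1)$ by \pref{prop:easyllp-slow-est}, the relative loss satisfies $\mathrm{Var}(Z_{f^\star}) = 0$, and it is taking differences (not absolute loss estimates) that enables the $\wt O(1/n)$ fast rate under realizability.
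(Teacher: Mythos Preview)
Your proposal is correct and follows the same overarching strategy as the paper: center the EasyLLP loss at $f^\star$ to obtain an offset loss $Z_f$ (the paper's $\wh{\Gamma}(f,f^\star)$), establish a variance--expectation bound of the form $\En[Z_f^2] \le O(k^2)(\cL(f)+L^\star)$, invoke a localized uniform convergence argument, and conclude via optimality of $\wh{f}_\mathrm{EZ}$ plus the quadratic-inversion trick. The key conceptual insight---that the offset cancels the $\Theta(1/\sqrt{n})$ fluctuations highlighted in \pref{prop:easyllp-slow-est}---is exactly the paper's Lemma~\ref{lem:gamma-uc}.

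Two technical differences are worth noting. First, for the variance bound, the paper (\pref{lem:class-var-exp}) takes the cruder route of bounding $|C(\alpha)| \le 2k$ pointwise and then using Jensen plus $(f^\star(x)-y)(y-f(x))\le 0$ to get $\En[\bar g^2]\le \cL(f)+L^\star$; this is a three-line argument. Your coordinate-peeling decomposition $C(\alpha)=-2(y_j-p)+C_{-j}$ is more elaborate and, if carried through, actually yields $\En[Z_f^2]\le O(k)(\cL(f)+L^\star)$ rather than the $O(k^2)$ you state---so your argument is sharper than needed for the theorem as written. Second, for uniformization, the paper invokes a modified Bousquet local-Rademacher theorem (\pref{thm:modified-bousquet}) with the critical radius computed via Dudley's integral; your ``Bernstein + Sauer + union bound'' sketch would need a peeling over levels of $\cL(f)$ to be made rigorous (a naive union bound over realized patterns does not directly control the population variance appearing in Bernstein), but you correctly flag the equivalence to the local-Rademacher approach.
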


The bound for EasyLLP is order-wise identical to that shown for the debiased square loss (\pref{thm:lossbagsq-opt-rate}).
In the agnostic setting, the guarantee in \pref{thm:easyllp-opt-rate} is worse in terms of dependence on $k$ compared to Eq.~\eqref{eq:easyllp-slow}; we leave sharpening the dependence on $k$ to future work. In the realizable setting, \pref{prop:easyllp-slow-est} and \pref{thm:easyllp-opt-rate} together show that there is a separation between the rate of estimation (which is necessarily $\Omega(1/\sqrt{n})$) and the rate of learning (which is $\wt{O}(1/n)$).

We sketch the proof ideas for \pref{thm:easyllp-opt-rate}, and we defer the full proof to \pref{app:proof-easyllp}.

\textbf{Proof Sketch.}
There is no hope for us to prove \pref{thm:easyllp-opt-rate} through the usual route of showing uniform convergence bound like the following:
\begin{align*}
    \text{for all}~f \in \cF, \abs{\cL(f) - \wh{L}_\mathrm{EZ}(f)} \le \wt{O} \bigg(\frac{k^2 \prn*{d + \log(1/\delta) }}{n} + \sqrt{\frac{\cL(f) \cdot k^2 \prn*{d+ \log(1/\delta)}}{n} }  \bigg),
\end{align*}
since the previous display directly contradicts \pref{prop:easyllp-slow-est} for $f^\star = \argmin_{f \in \cF} \cL(f)$.

Instead, we make the critical observation that the \emph{offset} empirical losses concentrate at the optimistic rate. Even though the learner does not know the identity of $f^\star$, it is still true that minimizing $\wh{L}_\mathrm{EZ}(\cdot)$ is the same as minimizing the offset loss $\wh{\Gamma}(\cdot, f^\star) \coloneqq \wh{L}_\mathrm{EZ}(\cdot) - \wh{L}_\mathrm{EZ}(f^\star)$, so we can equivalently think of the learning rule as minimizing $\wh\Gamma(\cdot, f^\star)$. The following lemma shows that one can prove an optimistic rate for the offset empirical losses. In the lemma, we use $\Gamma(f, f^\star) \coloneqq \cL(f) - \cL(f^\star)$ to denote the expected difference in classification error.

\begin{lemma}\label{lem:gamma-uc}
Let $f^\star = \argmin_{f \in \cF} \cL(f)$. Then with probability at least $1-\delta$ we have for all $f \in \cF$
\begin{align*}
    \abs*{\wh{\Gamma}(f, f^\star) - \Gamma(f, f^\star)} \le \wt{O} \bigg( \frac{k d + k^2 \log(1/\delta)}{n} + \sqrt{\frac{\cL(f) \cdot k^2 \prn*{d+ \log(1/\delta)}}{n} }  \bigg).
\end{align*}
\end{lemma}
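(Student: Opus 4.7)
My plan has three stages: algebraic simplification of the offset process $Z_f \coloneqq \ell_\mathrm{EZ}(f, \cdot) - \ell_\mathrm{EZ}(f^\star, \cdot)$, a second-moment bound that scales with $\cL(f)$, and a Bernstein-plus-peeling argument for localized uniform convergence.

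First, I would write $\wh\Gamma(f, f^\star) - \Gamma(f, f^\star) = \tfrac{1}{n}\sum_i \crl{Z_f(B_i, \alpha_i) - \En Z_f}$, noting that $\En Z_f = \Gamma(f, f^\star)$ by Proposition 4.2 of \cite{busa2023easy}. Expanding the definition in \eqref{eq:easyllp-loss-est}, both the $f$ and $f^\star$ estimates share the form $a \cdot (1 - \bar f) + (1-a) \bar f$ with $a = k(\alpha - p) + p$, so the $f$-independent pieces cancel and I obtain the clean product form
\begin{align*}
Z_f(B,\alpha) = \phi(B,\alpha) \cdot \prn*{\bar f(B) - \bar{f^\star}(B)}, \quad \phi(B,\alpha) \coloneqq 1 - 2p - 2k(\alpha - p),
\end{align*}
where $\bar f(B) \coloneqq \tfrac{1}{k}\sum_j f(x_j)$. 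Since $\alpha \in [0,1]$ forces $|\phi| \le 2k-1$ and $|\bar f - \bar{f^\star}| \le 1$, the pointwise bound $|Z_f| \le 2k$ follows.

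Second, I establish a variance bound. Since $|f(x) - f^\star(x)| \in \crl{0,1}$, Jensen's inequality gives $(\bar f - \bar{f^\star})^2 \le \tfrac{1}{k}\sum_j \indd{f(x_j) \ne f^\star(x_j)}$. Combined with $\phi^2 \le (2k-1)^2$ and the triangle-inequality fact $\Pr[f \ne f^\star] \le \cL(f) + L^\star \le 2\cL(f)$, this yields
\begin{align*}
\En[Z_f^2] \le (2k-1)^2 \cdot \Pr[f(x) \ne f^\star(x)] = O(k^2 \cL(f)).
\end{align*}
This is the key structural observation: although $|Z_f|$ can be as large as $2k$, its second moment shrinks with $\cL(f)$, which is exactly what enables fast rates despite \pref{prop:easyllp-slow-est}.

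Given $|Z_f| = O(k)$ and $\En Z_f^2 = O(k^2 \cL(f))$, I would finally apply Bernstein's inequality together with a dyadic peeling over $\cL(f)$. For each shell $\cF_r \coloneqq \crl{f \in \cF : \cL(f) \in [r, 2r]}$, Bernstein combined with a union bound over the sample-projection of $\cF$ (of size $\le (enk/d)^d$ by Sauer's lemma) yields a uniform deviation bound of order $\wt O\prn*{\sqrt{k^2 r (d+\log(1/\delta))/n} + k(d+\log(1/\delta))/n}$ on $\cF_r$. A further union bound over $r \in \crl{2^{-j}: j = 0, \ldots, \log_2 n}$ with $\delta$ rescaled by $\log n$ absorbs the peeling into $\wt O$. \textbf{The main obstacle} is precisely this last step: Hoeffding alone would give only $k/\sqrt{n}$ with no $\cL(f)$ dependence, so one must carefully exploit the variance bound via Bernstein, and peeling is needed because the effective noise level $\cL(f)$ varies across the class. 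Crucially, the whole scheme relies on working with the offset process and localizing around $f^\star$, since \pref{prop:easyllp-slow-est} precludes any rate faster than $1/\sqrt{n}$ for the raw process $\wh L_\mathrm{EZ}(\cdot) - \cL(\cdot)$.
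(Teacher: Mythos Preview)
Your first two steps match the paper exactly: the offset $Z_f$ factors as a product of a weight $\phi(B,\alpha)$ and the gap $\bar f - \bar{f^\star}$ (the paper writes the same formula with the sign flipped), and the second-moment bound $\En Z_f^2 = O(k^2 \cL(f))$ via $\Pr[f\ne f^\star] \le 2\cL(f)$ is precisely the paper's variance--expectation lemma for the offset class. Where you diverge is the concentration step. The paper feeds the variance bound into a modified Bousquet local-Rademacher theorem and then computes the fixed-point radius $r_n^\star = \wt O(d/n)$ via Dudley chaining with VC covering numbers. Your Bernstein-plus-Sauer-plus-peeling route is more elementary in spirit, but as written it has a gap: you cannot directly union-bound a per-$f$ Bernstein inequality over the Sauer projection, since that projection is measurable with respect to the same sample you are concentrating. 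To make it rigorous you must symmetrize first, after which the relevant Bernstein variance becomes the \emph{empirical} second moment $\wh\En_n Z_f^2$, and tying that back to the population $\cL(f)$ you peeled on is exactly the localization step the Bousquet fixed-point packages; alternatively Talagrand's inequality for suprema handles this directly in terms of the weak variance $\sup_f \En Z_f^2$, but that is again essentially the paper's route. Either fix completes your argument, and all of your structural observations are the right ones.
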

\pref{lem:gamma-uc} says 
that while the EasyLLP empirical estimate $\wh{L}_\mathrm{EZ}$ is only $O(1/\sqrt{n})$ close to the true classification error, the estimate of the \emph{difference} in classification error with that of $f^\star$ is more accurate, as much of the fluctuations in the bag estimates $\wh{L}_\mathrm{EZ}(\cdot)$ gets canceled out by subtraction.

In light of \pref{lem:gamma-uc}, \pref{thm:easyllp-opt-rate} follows from standard approach of translating uniform convergence bounds to guarantees on the returned predictor $\wh{f}_\mathrm{EZ}$, see \pref{app:proof-theorem-easyllp} for more details.

\paragraph{Estimating the Marginal Label Proportion.} The EasyLLP learning rule requires knowledge of the marginal label proportion $p = \Pr[y=1]$. We demonstrate that the optimistic rate guarantee in \pref{thm:easyllp-opt-rate} can be attained without knowledge of $p$, instead splitting the dataset into half to estimate $\wh{p}$ from a separate dataset (this was claimed without proof by \cite{busa2023easy}).

\begin{corollary}[Sample Complexity Bound for $\wh{f}_\mathrm{EZ}$ with Sample Splitting]\label{corr:easyllp-unknownp} Let $\delta \in (0,1)$. Fix any distribution $\cD$ and function class $\cF$, and let $L^\star \coloneqq \inf_{f \in \cF} \cL(f)$. Then with probability at least $1-\delta$ the EasyLLP learning rule with sample splitting satisfies
\begin{align*}
    \cL(\wh{f}_\mathrm{EZ})
    &\le L^\star + \wt{O} \bigg( \frac{k^2 (d+\log(1/\delta))}{n}  + \sqrt{\frac{L^\star \cdot k^2  \prn*{d+\log(1/\delta)}}{n} } \bigg).
\end{align*}
\end{corollary}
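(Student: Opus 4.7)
The plan is to use sample splitting: the first $n/2$ bags $S_1$ are used to form $\wh p = \tfrac{2}{n}\sum_{i \le n/2} \alpha_i$, which is the empirical average of $nk/2$ i.i.d.\ Bernoulli$(p)$ samples, so Hoeffding yields $|\wh p - p| \le O(\sqrt{\log(1/\delta)/(nk)})$ with probability at least $1-\delta/3$. The remaining bags $S_2$ are used to compute $\wh f_\mathrm{EZ} \coloneqq \argmin_{f \in \cF} \tfrac{2}{n}\sum_{i > n/2} \ell_\mathrm{EZ}(f, (B_i, \alpha_i); \wh p)$. A direct expansion of \eqref{eq:easyllp-loss-est} as a linear function of $p$ shows
\begin{align*}
\ell_\mathrm{EZ}(f, (B,\alpha); \wh p) - \ell_\mathrm{EZ}(f, (B,\alpha); p) = (\wh p - p)(1-k)(1 - 2\bar f),
\end{align*}
so averaging over $S_2$ and differencing with $f^\star$ yields the key identity
\begin{align*}
\wh\Gamma^{\wh p}(f, f^\star) = \wh\Gamma^p(f, f^\star) + 2(\wh p - p)(k-1)\bigl(\avgf(f) - \avgf(f^\star)\bigr),
\end{align*}
where $\avgf(f) \coloneqq \tfrac{2}{nk}\sum_{i > n/2}\sum_j f(x_{i,j})$ and $\wh\Gamma^p$ denotes the offset empirical loss of \pref{lem:gamma-uc} computed over $S_2$.

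Conditioning on $S_1$ (so that $\wh p$ is deterministic and $S_2$ is fresh), I would apply \pref{lem:gamma-uc} to $S_2$ to get the optimistic bound on $|\wh\Gamma^p - \Gamma|$ uniformly over $\cF$ with probability $\ge 1-\delta/3$. Since $\wh f_\mathrm{EZ}$ minimizes $\wh L_\mathrm{EZ}^{\wh p}$ over $S_2$, we have $\wh\Gamma^{\wh p}(\wh f_\mathrm{EZ}, f^\star) \le 0$, giving
\begin{align*}
\wh\Gamma^p(\wh f_\mathrm{EZ}, f^\star) \le 2|\wh p - p|(k-1)\,|\avgf(\wh f_\mathrm{EZ}) - \avgf(f^\star)|.
\end{align*}
The self-bounding step I would invoke next exploits $|\En f - \En f^\star| \le \Pr[f \ne f^\star] \le \cL(f) + L^\star$ (triangle inequality); a uniform Hoeffding bound over $\cF$ via Sauer's lemma then yields, with probability $\ge 1-\delta/3$ and simultaneously for all $f \in \cF$,
\begin{align*}
|\avgf(f) - \avgf(f^\star)| \le \cL(f) + L^\star + \wt O\Bigl(\sqrt{(d+\log(1/\delta))/(nk)}\Bigr).
\end{align*}

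The hardest part is verifying that the correction introduced by estimating $p$ does not worsen the overall rate: the naive bound $|\avgf(f) - \avgf(f^\star)| \le 1$ would only give an additional $O(\sqrt{k\log(1/\delta)/n})$ error, which is strictly weaker than the desired realizable rate $O(k^2(d+\log(1/\delta))/n)$ when $n < k^3$. The self-bounding saves us because it makes the correction scale with $\cL(\wh f_\mathrm{EZ}) + L^\star$. Chaining the ingredients yields an inequality of the form
\begin{align*}
\cL(\wh f_\mathrm{EZ}) - L^\star \le \text{(opt rate at $\wh f_\mathrm{EZ}$)} + O\!\Bigl(\sqrt{k\log(1/\delta)/n}\Bigr)\bigl(\cL(\wh f_\mathrm{EZ}) + L^\star\bigr) + \wt O(1/n),
\end{align*}
which routine AM-GM solves: the coefficient $O(\sqrt{k\log(1/\delta)/n})$ in front of $\cL(\wh f_\mathrm{EZ}) - L^\star$ is at most $1/2$ for $n \gtrsim k\log(1/\delta)$ (and for smaller $n$ the claimed bound already exceeds $1$), so that term gets absorbed into the LHS, while $\sqrt{k\log(1/\delta)/n}\cdot L^\star \le \sqrt{L^\star k^2 (d+\log(1/\delta))/n}$ because $L^\star \le 1 \le k(d+\log(1/\delta))$. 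A union bound over the three $\delta/3$-events and rescaling $\delta$ finishes the proof.
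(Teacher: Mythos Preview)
Your proposal is correct and follows essentially the same route as the paper: sample split, Hoeffding on $\wh p$, apply \pref{lem:gamma-uc} to the offset loss $\wh\Gamma^p$ on the second half, relate $\wh\Gamma^{\wh p}$ to $\wh\Gamma^p$ via the linear-in-$p$ identity, use $\wh\Gamma^{\wh p}(\wh f_\mathrm{EZ},f^\star)\le 0$, and self-bound the correction term in terms of $\cL(f)$.

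The only substantive difference is in how the correction $|\avgf(f)-\avgf(f^\star)|$ is controlled. The paper applies Cauchy--Schwarz to obtain $|\avgf(f)-\avgf(f^\star)|\le\sqrt{\wh L(f)+\wh L(f^\star)}$ and then uses standard uniform convergence on the empirical classification error, yielding a \emph{square-root} bound $\wt O(\sqrt{\cL(f)+(d+\log(1/\delta))/(nk)})$; multiplied by $k|\wh p-p|$ this lands directly in the optimistic-rate form $\sqrt{\cL(f)\cdot k\log(1/\delta)/n}$ and merges with the term from \pref{lem:gamma-uc} without further work. Your route instead uses the \emph{linear} bound $|\En f-\En f^\star|\le\cL(f)+L^\star$ plus uniform Hoeffding, which produces a term $\sqrt{k\log(1/\delta)/n}\cdot(\cL(\wh f_\mathrm{EZ})+L^\star)$ and forces the absorption/case-analysis step on $n$ versus $k\log(1/\delta)$. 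Both are valid and give the same final rate; the paper's square-root bound is slightly cleaner since it avoids that extra step.
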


The details of the sample splitting procedure and the proof of \pref{corr:easyllp-unknownp} are shown in \pref{app:sample-splitting}. In contrast to the debiased square loss learning rule, where we could estimate square loss and bias terms from the same dataset, here our analysis requires a separate dataset in order to estimate $\wh{p}$. However, we conjecture that sample splitting is not required for the guarantee in \pref{corr:easyllp-unknownp}.

\medskip

\section{Lower Bounds}\label{sec:main-lower-bounds}

The sample complexity bounds we prove in \pref{thm:prop-matching-realizable}, \ref{thm:lossbagsq-opt-rate}, and \ref{thm:easyllp-opt-rate} are optimal (up to log factors) in terms of the dependence on $d$, $n$, and $\log (1/\delta)$. However, the question remains of resolving the optimal dependence on the bag size $k$. 

For every function class $\cF$, we have the trivial lower bound on the minimax sample complexity of $n = \Omega\prn*{d \log (1/\delta) /(k\eps)}$ for the realizable setting and $n = \Omega\prn*{d \log (1/\delta)/(k\eps^2)}$ for the agnostic setting, since LLP with $n$ bags is only harder than supervised learning with $nk$ examples. In general, the $1/k$ dependence in the lower bound cannot be improved, as there are function classes for which it is tight. For example, consider the function class $\cF = \crl{f_0, f_1}$ where $f_i(x) = i$ for $i \in \crl{0,1}$. For this class $\cF$, observing the label proportion $\alpha$ allows us to compute the average instance-level classification loss over the bag for both $f_0$ and $f_1$. Therefore, LLP with $n$ bags is no harder than supervised learning with $nk$ labeled examples, so $\cF$ can be PAC learned with $\wt{O}(1/(k\eps))$ samples in the realizable setting and $\wt{O}(1/(k\eps^2))$ in the agnostic setting.  

For \emph{specific} function classes $\cF$, it is possible to improve the $1/k$ lower bound.

\begin{theorem}\label{thm:main-lower-bound}
For any $d \ge 3$, there exists a function class $\cF$ with $\VC(\cF) = d$ such that any learning rule for LLP that PAC learns $\cF$ for bag size $k \le O \prn{ 2^d /\log d }$ with $\eps \le 1/16$, and $\delta \le 1/15$ requires $\Omega\prn*{\tfrac{d}{\log k}}$  samples in the realizable setting and $\Omega\prn*{\max \prn*{\tfrac{d}{\log k}, \tfrac{d}{\sqrt{k} \eps^2}}}$ samples in the agnostic setting.
\end{theorem}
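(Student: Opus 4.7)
The approach has three components: construct a hard function class $\cF$ of VC dimension $d$, establish the information-theoretic realizable lower bound $\Omega(d/\log k)$, and establish the agnostic lower bound $\Omega(d/(\sqrt{k}\eps^2))$ via Assouad's lemma. For the construction I would take $\cX \supseteq \{x_1,\ldots,x_d\}$ to contain a shattered set and set $\cF = \{f_v : v \in \{0,1\}^d\}$ with $f_v(x_i) = v_i$ (extended arbitrarily elsewhere), so $\VC(\cF) = d$ and $|\cF| = 2^d$. The data distribution $\cD$ will put (almost) all mass uniformly on the shattered set.

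\textbf{Realizable bound.} Take a uniform prior over $f^\star \in \cF$; realizability is automatic. Since each $\alpha_i$ takes at most $k+1$ distinct values, the entropy of the $n$-bag observation is at most $O(n\log k)$, hence $I(\text{bags}; f^\star) \le O(n\log k)$. A Gilbert--Varshamov packing gives a subset $\cF_0 \subseteq \cF$ of size $2^{\Omega(d)}$ with pairwise Hamming distance $\Omega(d)$, so any $\eps$-accurate learner (for $\eps \le 1/16$) must essentially identify which element of $\cF_0$ equals $f^\star$. Fano's inequality then forces $n\log k \gtrsim d$. The restriction $k \le O(2^d/\log d)$ ensures $\log(k+1) \ll d$ so the bound is nontrivial.

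\textbf{Agnostic bound.} I would apply Assouad's lemma to the family $\{\cD_v : v \in \{\pm 1\}^d\}$ with uniform $x$-marginal on the shattered set and $\Pr[y=1 \mid x=x_i] = 1/2 + \eta v_i$ for $\eta = \Theta(\eps)$. The Bayes-optimal classifier in $\cF$ under $\cD_v$ is $f_v$ with $L^\star = 1/2-\eta$, and $\cL(f_{v'}) - L^\star = (2\eta/d)\cdot \mathrm{Hamming}(v,v')$, so the excess risk tracks Hamming distance. The crux is bounding the KL divergence between $P_v$ and $P_{v'}$ (laws of one bag) when $v,v'$ differ in a single coordinate $i$: conditional on the bag composition with $n_i$ copies of $x_i$, the bag-mean distributions differ only in a coordinate $i$ Bernoulli term, giving per-bag KL contributions that, after averaging over random composition, scale with $\eta$, $k$, and $d$ in a delicate way. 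Once one obtains a per-bag KL bound of the form $O(\eta^2 \sqrt{k}/d)$ (in the appropriate regime), Assouad's lemma forces $n \cdot \eta^2 \sqrt{k}/d \gtrsim 1$ to avoid expected Hamming distance $\Omega(d)$, and substituting $\eta = \Theta(\eps)$ yields $n = \Omega(d/(\sqrt{k}\eps^2))$. The realizable term $\Omega(d/\log k)$ persists because realizability is a special case.

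\textbf{Main obstacle.} The delicate step is extracting precisely the $\sqrt{k}$ factor in the agnostic bound. A naive Gaussian approximation to the bag-mean yields a per-bag KL of $\Theta(\eta^2/d)$, which would imply a stronger $\Omega(d/\eps^2)$ bound, while a crude data-processing bound from the individual labels yields per-bag KL $\Theta(k\eta^2/d)$, which only gives the trivial $\Omega(d/(k\eps^2))$. Pinning down the $\sqrt{k}$ requires either a careful local CLT / peakedness argument for the binomial bag mean, or a tailored variant of the construction (e.g., nonuniform marginal over the shattered set, or restricting to a sub-family of hypotheses with constrained Hamming structure) that produces exactly the $\sqrt{k}$ per-coordinate TV bound required by Assouad. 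I would expect this calculation, together with a careful choice of $\eta$ relative to $\eps$, to constitute the technical core of the proof.
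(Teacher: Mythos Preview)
Your realizable argument is sound and actually cleaner than the paper's: since the $B$-marginal is hypothesis-independent, $I(\text{bag};f^\star)\le H(\alpha\mid B)\le\log(k+1)$, and Fano over a $2^{\Omega(d)}$-packing immediately gives $n=\Omega(d/\log k)$. The paper reaches the same mutual-information bound via a more elaborate collision argument specific to its own construction.

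The agnostic part has a real gap, and the issue is the construction, not the analysis. With your $d$-point shattered set and Hamming neighbors $v,v'$ differing at coordinate $i$, a Gaussian heuristic (which your local-CLT route would at best make rigorous) gives per-bag $\mathrm{KL}\asymp\mathbb{E}[n_i^2]\,\eta^2/k$; for $k\gg d$ this is $\asymp k\eta^2/d^2$, so Assouad yields only $\Omega(d^2/(k\eps^2))$, which is strictly weaker than the stated $\Omega(d/(\sqrt{k}\eps^2))$ once $k\gg d^2$---a regime the theorem permits since $k$ ranges up to $2^d/\log d$. The paper supplies exactly the ``tailored variant'' you gesture at, and its mechanism is different from anything in your proposal: take $\cX=\{0,1\}^{2^d}$ with $x$ uniform and $\cF=\{f_i(x)=x[i]:i\in[2^d]\}$, so the $2^d$ hypotheses are pairwise equidistant (disagreement probability $1/2$) and Fano applies over all of them without any packing. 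In the noisy family, conditional on a bag one has $k\alpha\sim\mathrm{Bin}(kz[i],\tfrac12+\eps)+\mathrm{Bin}(k-kz[i],\tfrac12-\eps)$ with $z=\tfrac1k\sum_j x_j$; a chain-rule bound gives $\mathrm{KL}(\cD_1[\alpha\mid B]\,\|\,\cD_2[\alpha\mid B])\lesssim k\eps^2\,|z[1]-z[2]|$, and the $\sqrt{k}$ factor then drops out of the elementary observation that $z[1],z[2]$ are i.i.d.\ $\mathrm{Bin}(k,1/2)/k$, so $\mathbb{E}\,|z[1]-z[2]|=O(1/\sqrt{k})$. No local CLT is needed; the missing idea is a construction in which the bag-level statistic separating two hypotheses has $\Theta(1/\sqrt{k})$ fluctuations.
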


The proof of \pref{thm:main-lower-bound} can be found in \pref{app:lower_bounds}. In the realizable setting, \pref{thm:main-lower-bound} gives a stronger lower bound when the accuracy parameter $\eps$ is a constant; however it is open to show a lower bound which dominates the trivial one of $\Omega(d/(k\eps))$ for $\eps \to 0$. In the agnostic setting, the lower bound in \pref{thm:main-lower-bound} dominates the trivial one for all $\eps \le 1/16$.
\arxiv{
\section{Experiments}\arxiv{\label{sec:experiments}}\colt{\label{app:experiments}}

In this section we empirically evaluate the performance of the learning rules discussed herein and present results which illustrate the differences of the learning rules in practical implementations.

\paragraph{Learning Rule Implementations.} Since minimizing the 0-1 loss is computationally intractable, we consider algorithmic variants of LLP learning rules which minimize a surrogate loss using minibatch stochastic gradient descent (SGD). Fix a parameterized function class $\cF = \crl{f_\theta: \theta \in \Theta} \subseteq [0,1]^\cX$. For EasyLLP, we minimize the loss \eqref{eq:easyLLP-loss} with $\ell_\mathrm{ins}: \cY \times \bbR \to \bbR$ instantiated to be the square loss \ezsq{} or the log loss \ezlog. Directly minimizing the debiased square loss via SGD is not straightforward: for every gradient update, the second term in the loss requires computing $\wh{\En} f_\theta$ (i.e., the average prediction of $f_\theta$ over the entire dataset). We approximate it with an exponential moving average $\wh{v}$ with parameter $\beta \in (0,1)$. Given a minibatch $S = \crl{(B_1, \alpha_1), \cdots, (B_\ell, \alpha_\ell)}$, we perform the updates
\begin{align*}
    \wh{v}^{+} &= \beta \cdot \wh{v} + (1-\beta) \cdot \frac{1}{\ell k}\sum_{i=1}^\ell \sum_{j=1}^k f_\theta(x_{i,j}),\numberthis\label{eq:dsq-updates-v} \\ 
    \theta^{+} &= \theta - \eta \cdot \nabla_\theta \prn*{\frac{1}{\ell} \sum_{i=1}^\ell k \cdot \bigg( \frac{1}{k} \sum_{j=1}^k f_\theta(x_{i,j}) - \alpha_i \bigg)^2 - (k-1) \prn*{\wh{v}^{+} - \wh{p}}^2}. \numberthis\label{eq:dsq-updates-theta}
\end{align*}
When performing backpropagation for the gradients on the debiasing term $(k-1) \prn*{\wh{v}^{+} - \wh{p}}^2$, the backpropagation does not go through $\wh{v}$, only the summation over $f_\theta(x_{i,j})$. We refer to this as \dsq{}. For \dsq{}, \ezlog{}, and \ezsq{}{}, the label proportion $\wh{p}$ is estimated as the average of the training set labels.
Lastly, we also consider algorithms that minimize the proportion matching square loss \eqref{eq:square-prop-learning-rule} and log loss \eqref{eq:log-prop-learningrule}, which we refer to as \pmsq{} and \pmlog{} respectively.

\subsection{Comparison of LLP Learning Rules}
The goal of this experiment is to understand how these learning rules perform under various bag sizes and function classes.

\paragraph{Experimental Setup.} We adopt a similar experiment setup as in \citep{busa2023easy}. We run LLP algorithms on the MNIST odd vs.~even task and the CIFAR10 animal vs.~machine task (binary classification versions of the MNIST and CIFAR10). We consider 5 architectures: a linear model, small two layer NN (with 100 hidden units), a large two layer NN (with 1000 hidden units), a small CNN, and a large CNN. We experiment with bag sizes $k \in \crl{10, 100, 1000}$. For each dataset, model, bag size, and algorithm, we run 10 trials with different random seeds. To select the learning rate, we report the best average achieved test error for learning rates in $\crl{0.01, 0.005, 0.001, 0.0005, 0.0001}$. In all of our experiments we use the Adam optimizer \citep{kingma2014adam} with minibatches of size $1000$ and train for 100 epochs. For \dsq{}, we use the approximate version in Eqs.~\eqref{eq:dsq-updates-v}-\eqref{eq:dsq-updates-theta} with $\beta = 0.99$. Further details can be found in \pref{app:implementation-details}.

\begin{figure}[t]
    \centering
    \begin{subfigure}[t]{0.4\textwidth}
        \centering
        \includegraphics[clip, trim=0cm 0cm 1cm 1cm, width=\linewidth]{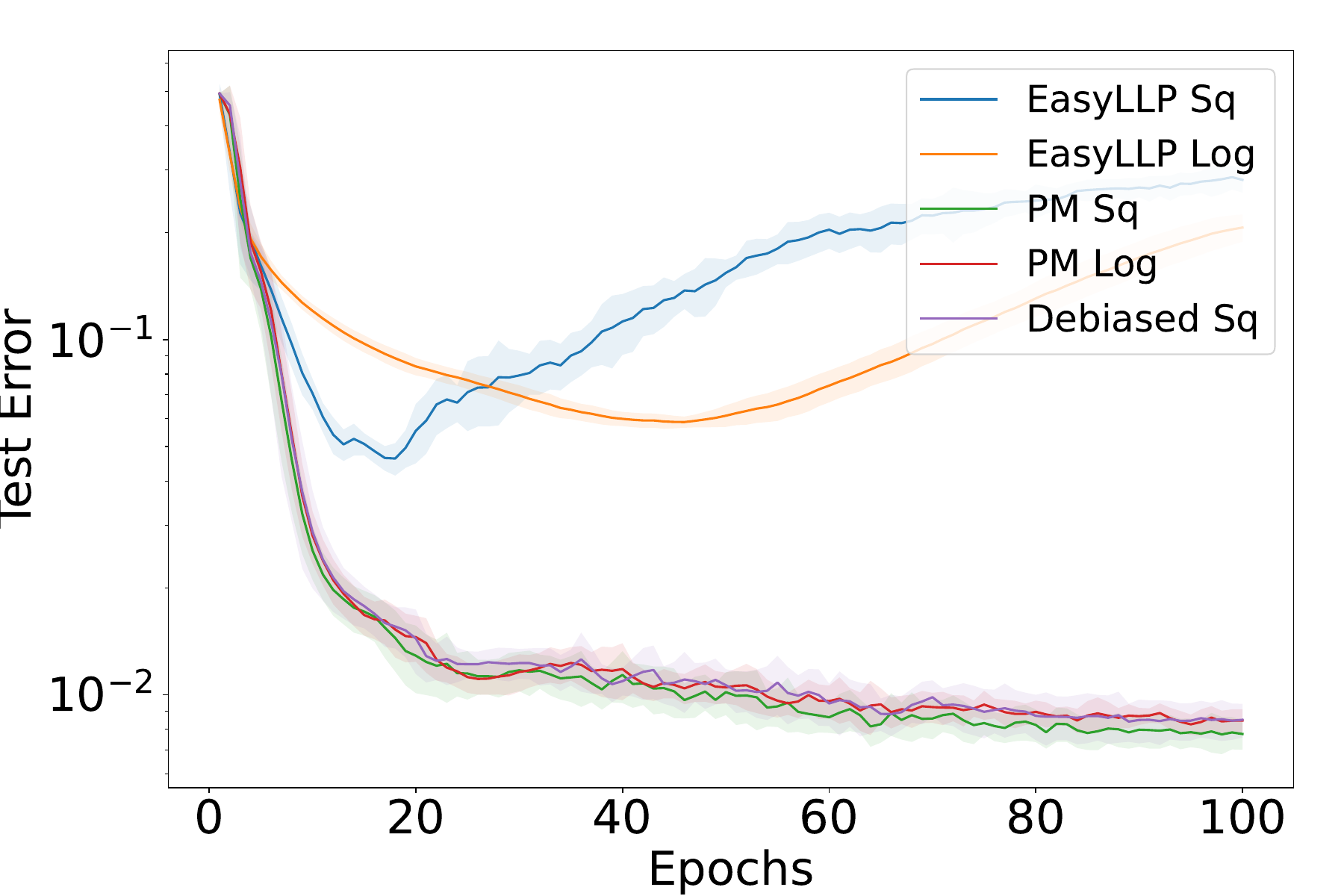} 
        \caption{MNIST Odd vs.~Even} \label{fig:mnist-tc}
    \end{subfigure}
    \hspace{2em}
    \begin{subfigure}[t]{0.4\textwidth}
        \centering
    \includegraphics[clip, trim=0cm 0cm 1cm 1cm,width=\linewidth]{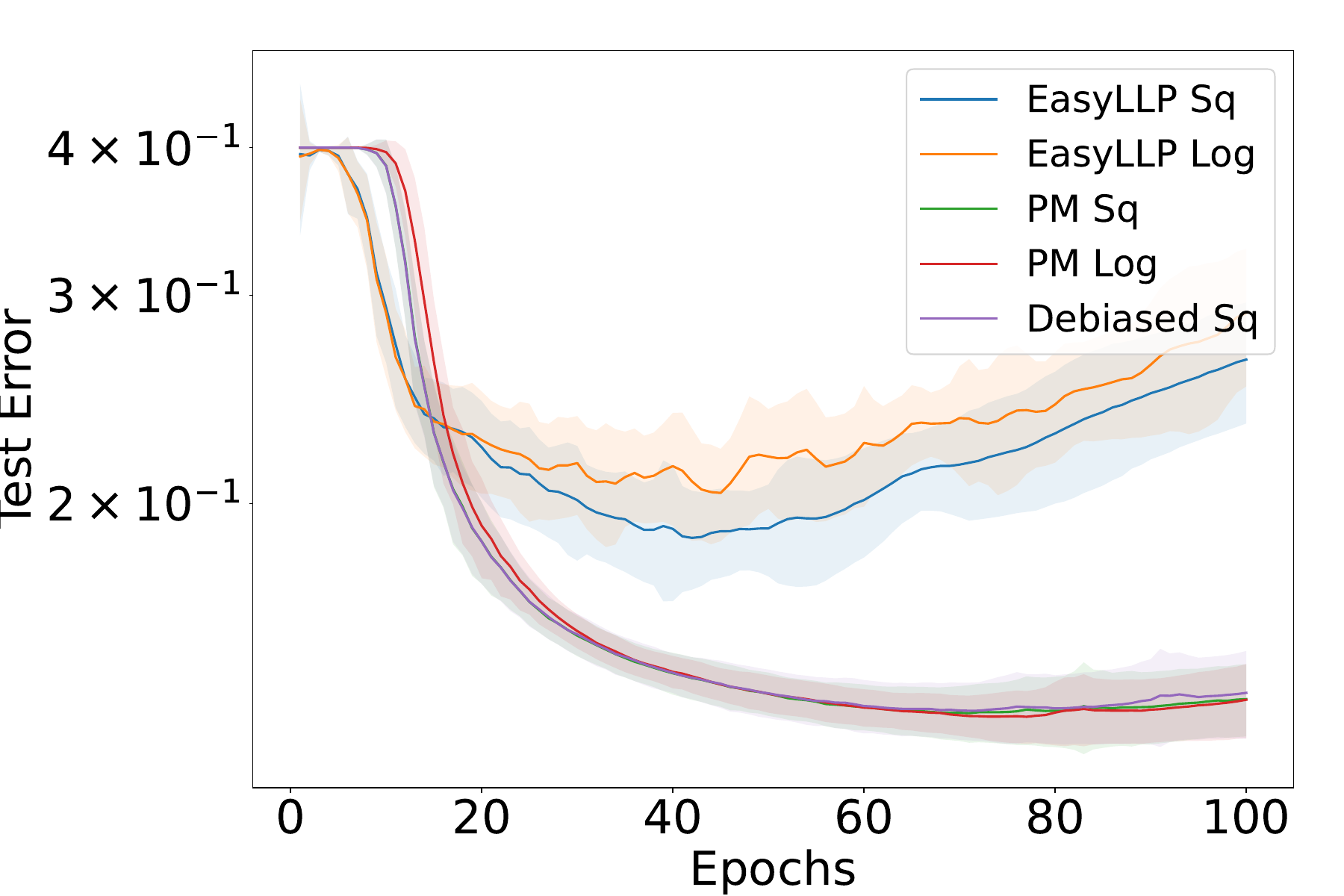} 
        \caption{CIFAR Animal vs.~Machine} \label{fig:cifar-tc}
    \end{subfigure}
    \vspace{-0.5em}
    \caption{Training curves of various algorithms for LLP, using the large CNN architecture and bag size $k=100$. One standard deviation confidence bands are plotted.}\label{fig:training-curves}
\end{figure}

\begin{figure}[t]
    \centering
    \begin{subfigure}[t]{0.31\textwidth}
        \centering
        \includegraphics[clip, trim=0.5cm 1cm 1cm 1cm, width=\linewidth]{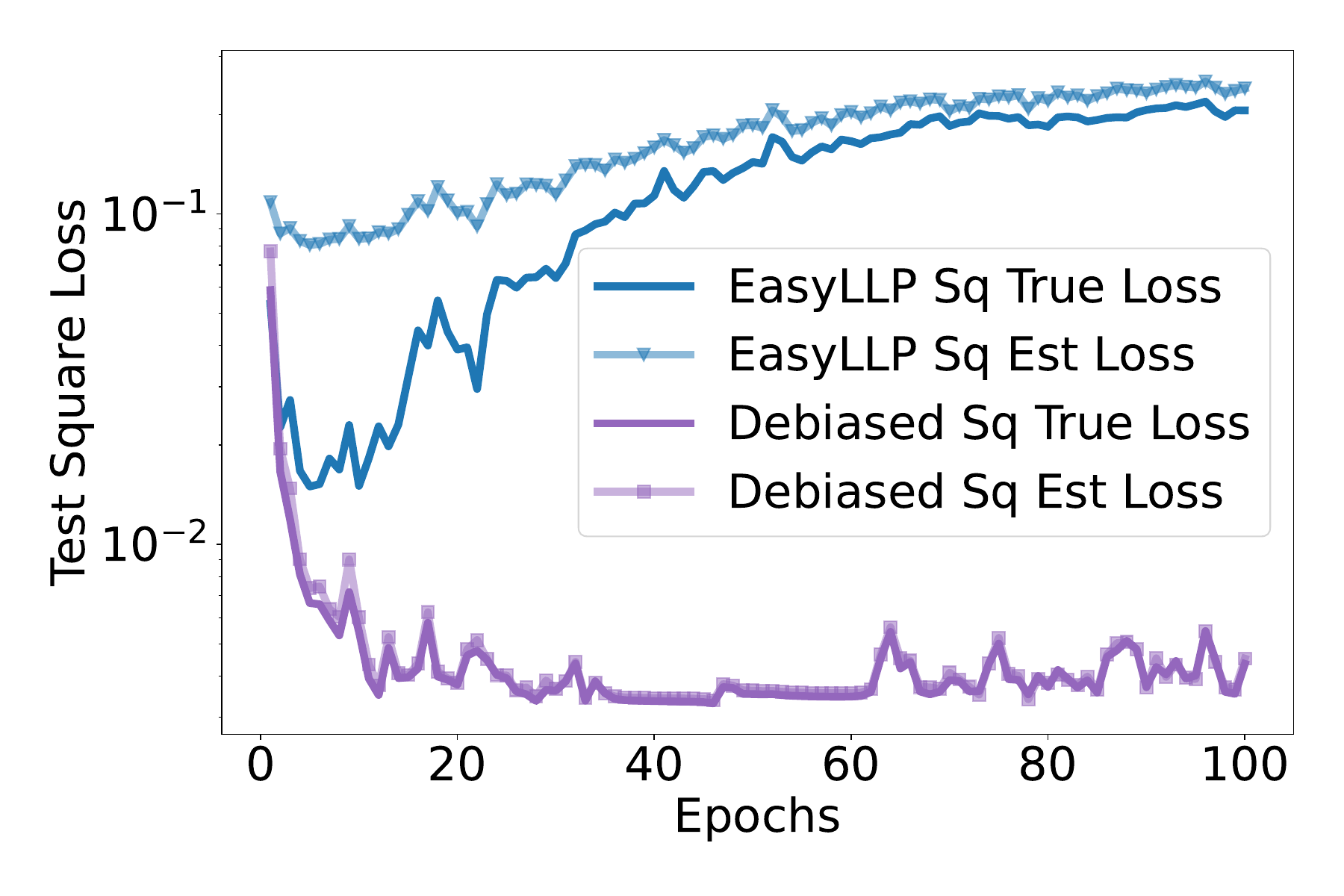} 
        \caption{Loss Estimates Tracking} \label{fig:loss-est}
    \end{subfigure}
    \hfill
    \begin{subfigure}[t]{0.31\textwidth}
        \centering
        \includegraphics[clip, trim=0.5cm 1cm 1cm 1cm, width=\linewidth]{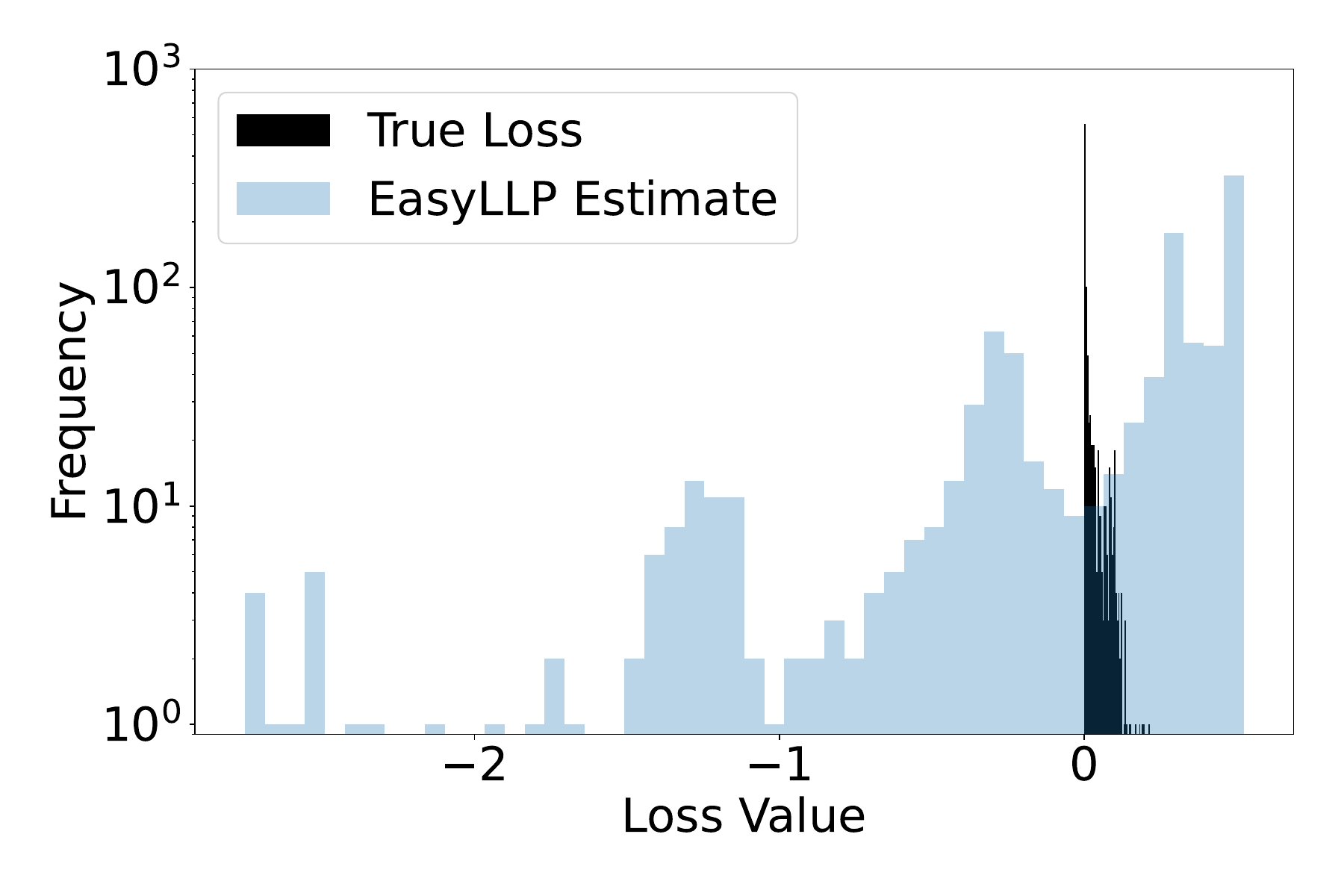} 
        \caption{\ezsq{} Loss Estimates} \label{fig:easylllp-hist}
    \end{subfigure}
    \hfill
    \begin{subfigure}[t]{0.31\textwidth}
        \centering
        \includegraphics[clip, trim=0.5cm 1cm 1cm 1cm, width=\linewidth]{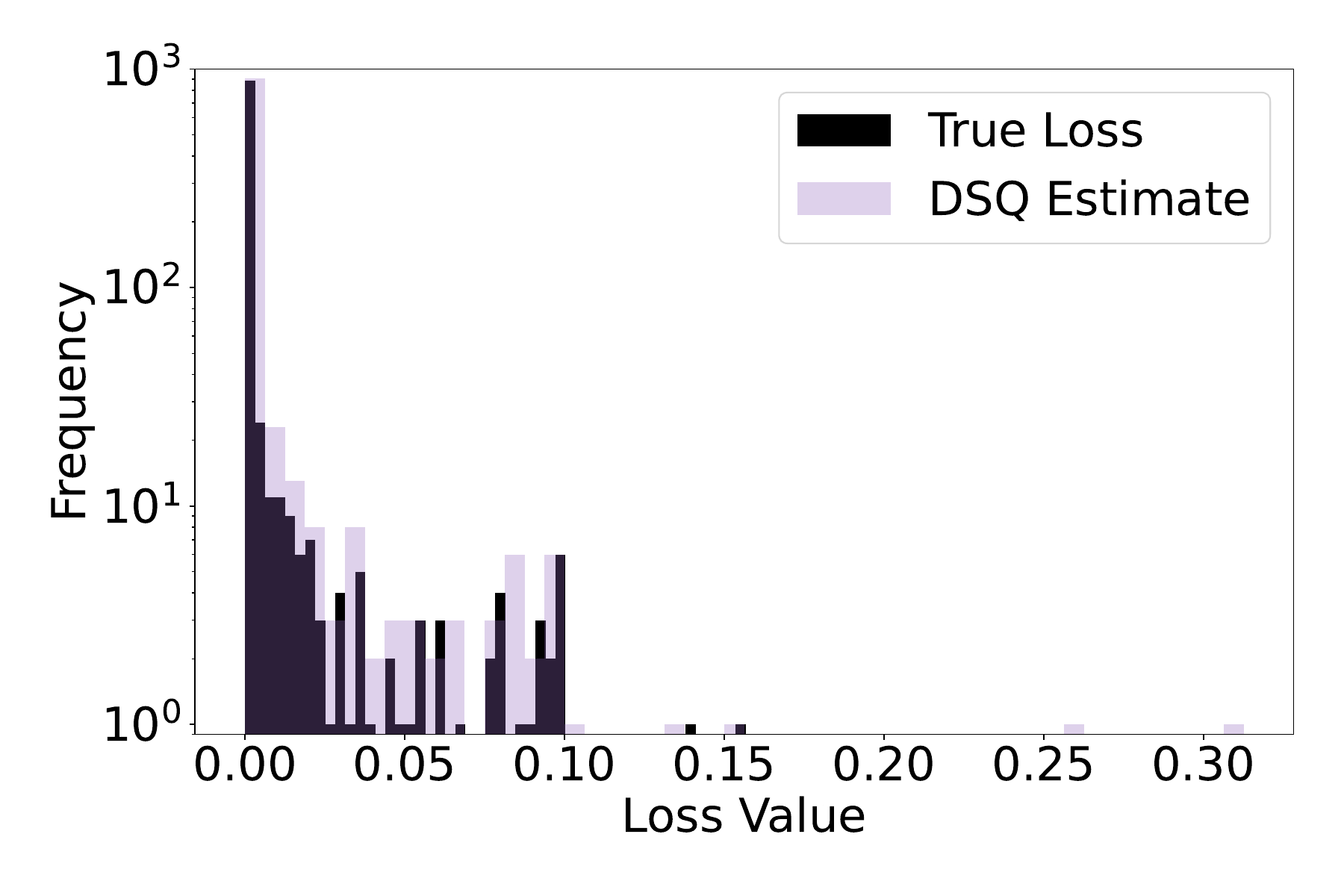} 
        \caption{\dsq{} Loss Estimates} \label{fig:dsq-hist}
    \end{subfigure}
    \vspace{-0.5em}
    \caption{Left: Loss estimates throughout training. We run a single trial of \ezsq{} and \dsq{} on MNIST Odd vs.~Even using the large CNN architecture, bag size $k=10$, and optimally chosen learning rate. Using the test set, we plot the averaged true square loss $\tfrac{1}{n_\mathrm{test}} \sum_{i=1}^{n_\mathrm{test}} \tfrac{1}{k} \sum_{j=1}^k \prn{f_\theta(x_{i,j}) - y_{i,j}}^2$ vs.~the estimated square loss $\tfrac{1}{n_\mathrm{test}} \sum_{i=1}^{n_\mathrm{test}} \wh{\ell}_\mathrm{est}(B_i, \alpha_i)$, where $\wh{\ell}_\mathrm{est}$ is either the \ezsq{}/\dsq{} loss estimate. Middle and Right: Histogram of true per-bag square losses $\crl{\tfrac{1}{k} \sum_{j=1}^k \prn{f_\theta(x_{i,j}) - y_{i,j}}^2}_{i=1}^{n_\mathrm{test}}$ and per-bag loss estimates $\crl{\wh{\ell}_\mathrm{est}(B_i, \alpha_i)}_{i=1}^{n_\mathrm{test}}$ for \ezsq{}/\dsq{} loss estimates at epoch 10.}\label{fig:loss-tracking}
\end{figure}

\paragraph{Discussion of Results.} In \pref{fig:training-curves} as well as \pref{tab:results-summary-mnist} and \ref{tab:results-summary-cifar} (in the Appendix), we see that the best algorithm is a tossup between \dsq{}, \pmsq{}, and \pmlog{}.\footnote{The results in \citep{busa2023easy} suggest that EasyLLP and the proportion matching baselines have similar performance; however, note that they only train for 20 epochs. We find that EasyLLP is competitive in early stages of training but eventually \dsq{}, \pmsq{}, and \pmlog{} outperform it.} In light of our discussion in \pref{sec:optimistic-rates-dsl}, this may be not that surprising, since the model classes we work with are expressive enough to minimize both the proportion matching loss and the bias, so we do not have the failure mode shown in \pref{prop:prop-matching-failure}. Furthermore, in \pref{fig:training-curves} we see that both versions of EasyLLP exhibits overfitting as training progresses, thus necessitating early stopping; this was also observed in \citep{busa2023easy}. We observe that  \dsq{}, \pmsq{}, and \pmlog{} also sometimes exhibits overfitting (see \pref{fig:cifar-tc}), but to a much less degree.

One hypothesis for why \dsq{} performs better than \ezsq{} is that is that the debiased square loss is a more accurate estimate of the true loss than the EasyLLP loss (see \pref{prop:easyllp-slow-est} and the discussion before it). In \pref{fig:loss-tracking}, we compare how well the estimated losses track the true square loss on the test set. Although we plot a single trial, we found that the behavior was similar across different random seeds. In \pref{fig:loss-est}, the \dsq{} loss closely tracks the true square loss, but \ezsq{} is consistently an \emph{overestimate} of the true square loss. (Interestingly, we observe that the ``shape'' of the loss curve is still preserved). An explanation for this phenomenon can be found in \pref{fig:easylllp-hist} and \pref{fig:dsq-hist}, where we plot the histogram of ground-truth per-bag square losses as well as the corresponding per-bag square loss estimates in the test set. The histogram of per-bag loss estimates for \dsq{} is quite similar to the histogram of ground-truth per-bag losses. On the other hand for \ezsq{}, the histogram of per-bag loss estimates has large variance and a heavy left tail. Since the randomness in the loss estimates is due to the bag generation procedure, it is likely that the bags with highly negative \ezsq{} loss estimates (i.e., taking values $< -2$) were not generated, so the sample mean overestimates the true square loss.

\begin{figure}
    \centering
    \begin{subfigure}[t]{0.48\textwidth}
        \centering
        \includegraphics[clip, trim=0cm 0.5cm 1cm 1cm, width=\linewidth]{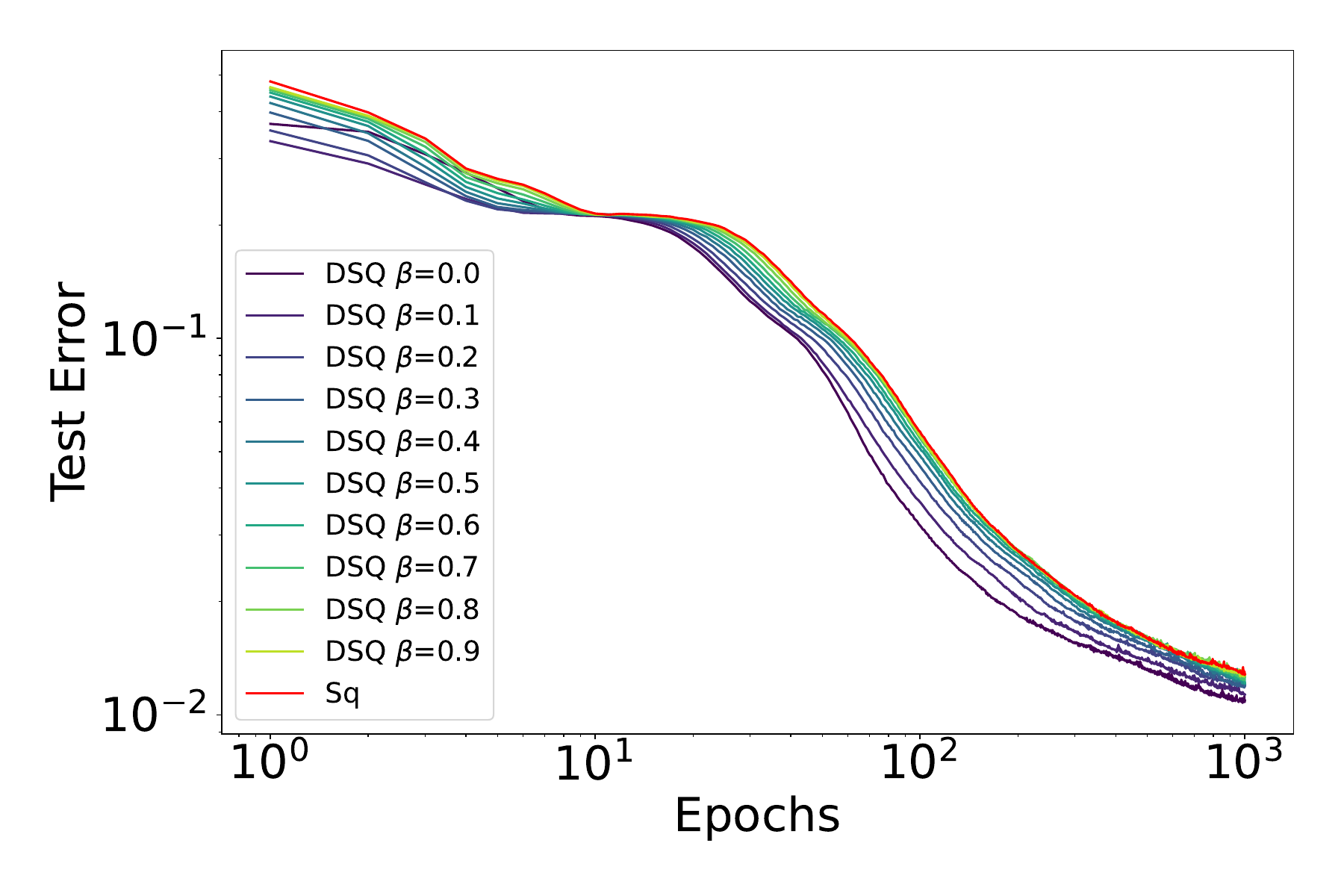} 
        \caption{MNIST Odd vs.~Even} \label{fig:mnist-dsq}
    \end{subfigure}
    \begin{subfigure}[t]{0.48\textwidth}
        \centering
        \includegraphics[clip, trim=0cm 0.5cm 1cm 1cm,width=\linewidth]{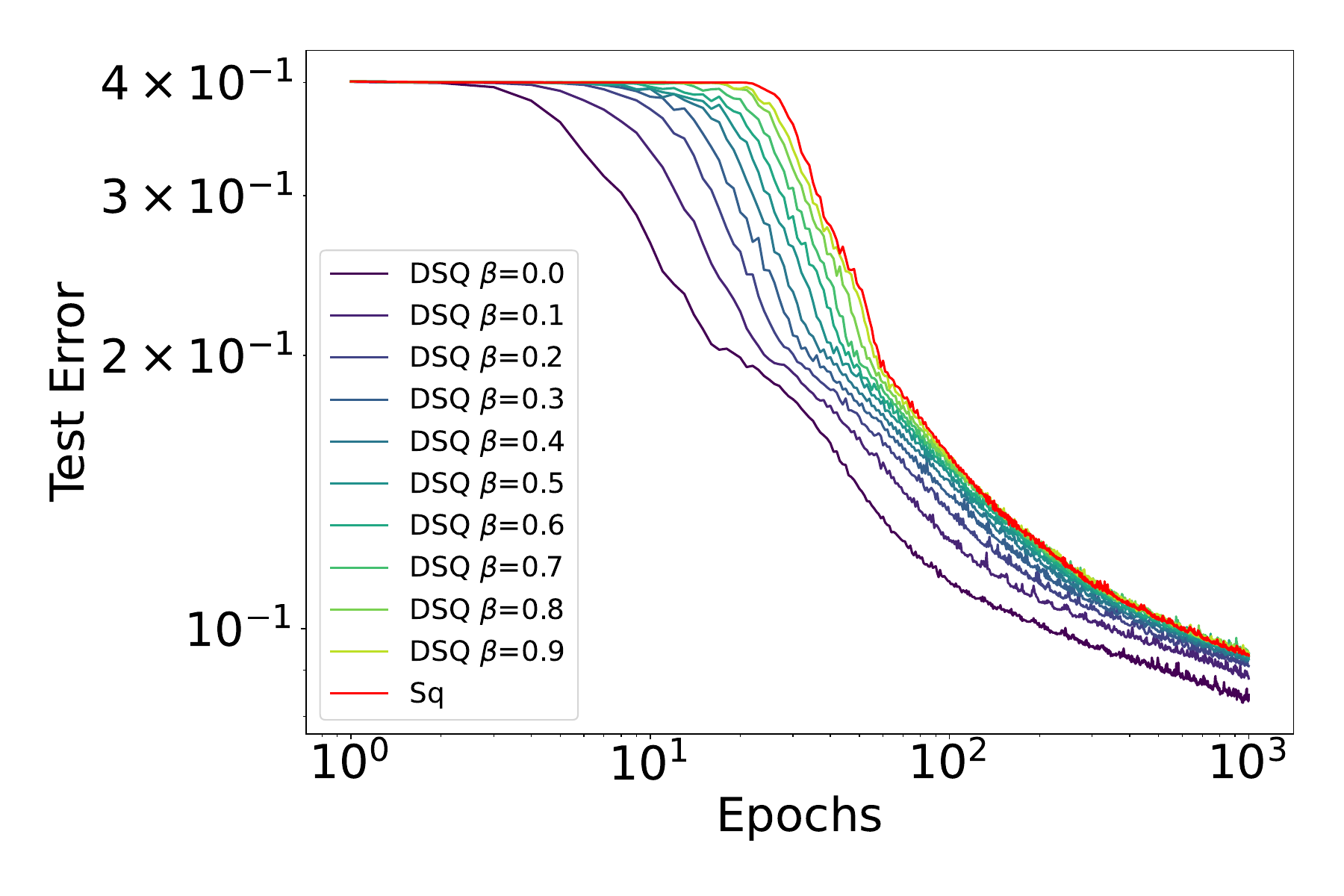} 
        \caption{CIFAR Animal vs.~Machine} \label{fig:cifar-dsq}
    \end{subfigure}
    \vspace{-0.5em}
    \caption{Training curves for \pmsq{} and \dsq{} with various $\beta$ for bag size $k=10$ on the small CNN architecture. We use a fixed learning rate of $0.001$ and run full-batch GD for 1000 epochs. Each line is an average over 10 trials with different random seeds.}\label{fig:full-batch}
\end{figure}

\subsection{Benefits of Debiasing for Optimization}
The previous results suggest that the performance of \pmsq{} and \dsq{} is similar, and that there might not be a need for debiasing in practice, as the failure mode in \pref{prop:prop-matching-failure} is arguably pathological. On the contrary, we present evidence that (accurate) debiasing can improve optimization. In \pref{fig:full-batch}, we compare the performance of \pmsq{} and \dsq{} with varying $\beta$ for full-batch GD. (We also investigated large batch SGD, but we found this effect was more striking with larger batches, so we only report the full-batch GD results.) Here, we can compute $\wh{\En} f_\theta$ in every epoch at no additional cost, so the debiasing term can be  exactly computed for every gradient update. 

We find that in early stages of training,  debiasing results in faster optimization. For CIFAR Animal vs.~Machine, GD on \pmsq{} seems to get stuck at a bad local minimum near initialization and only escapes after $\sim$25 epochs, but GD on \dsq{} is able to escape the local minimum much faster. Early in training, the model $f_\theta$ does not fit the label proportion $\wh{p}$ well, so the debiasing term $B(f_\theta)$ is large, and therefore the \pmsq{} loss will \emph{overestimate} the true instance-level square loss. On the contrary, \dsq{} uses more accurate estimates of the instance-level square loss, which is why we see benefits of debiasing. Later in training, as $B(f_\theta) \to 0$, the \pmsq{}/\dsq{} loss estimates are more similar, explaining why the different lines seem to converge to similar test errors.

}
\section{Discussion}
Our work studies various learning rules for minimizing classification loss in the LLP framework. We show that EPRM attains fast rates under realizability, but EPRM and other proportion matching approaches can fail in the agnostic setting. For the debiased square loss and EasyLLP learning rules, we prove optimistic rate sample complexity bounds which are optimal (up to log factors) in terms of the dependence on $d$, $n$, and $\log(1/\delta)$ in both the realizable and agnostic settings. In addition, we investigate the optimal dependence on $k$ from the lower bound side. We also compare the empirical performance of gradient-based versions of these learning rules and demonstrate the benefits of debiasing for optimization\colt{ (deferred to \pref{app:experiments} due to space considerations)}.

For clarity of exposition, we focus on binary classification, but we note that both the debiased square loss and EasyLLP learning rules can be extended to the multi-label multi-class setting (as also observed in \cite{chen2023learning,busa2023easy}) by using one-hot encoding to write the label as a binary vector. In this way the learning task decouples into multiple binary classification tasks, and it would be straightforward to extend our analysis to this more general setting. 

Our work leaves open several future directions. The most immediate one is to resolve the optimal dependence on $k$. On the upper bound side, we believe that the dependence on $k$ can be improved in our analysis, and leave this to future work. Our optimistic rate results (\pref{thm:lossbagsq-opt-rate} and \ref{thm:easyllp-opt-rate}) are stated for a more general setting, and their proofs do not use the combinatorial structure of the LLP problem in the way that the proof of \pref{thm:prop-matching-realizable} does, thus hinting at a source of looseness. On the lower bound side, there is room to improve upon the construction in \pref{thm:main-lower-bound}. Other directions for future work include understanding the debiased square loss and EasyLLP learning rules in a more unified manner, designing a debiased variant of log loss for LLP, and studying the role of optimization in gradient-based algorithms for LLP.

\colt{
\acks{We thank Robert Istvan Busa-Fekete, Travis Dick, Claudio Gentile, Nathan Srebro, and Han Shao for helpful discussions. Adel Javanmard is supported in part by the NSF Award
DMS-2311024 and the Sloan fellowship in Mathematics.}
}
\arxiv{
\subsection*{Acknowledgements}
We thank Robert Istvan Busa-Fekete, Travis Dick, Claudio Gentile, Nathan Srebro, and Han Shao for helpful discussions. Adel Javanmard is supported in part by the NSF Award
DMS-2311024 and the Sloan fellowship in Mathematics.
}

\addcontentsline{toc}{section}{References}

\bibliography{notes.bbl}

\clearpage

\appendix
\section{Failure of Proportional Log Loss in Agnostic Setting}\label{app:failure-flog}

\begin{proposition}
There exists a $\cF$ with $\VC(\cF) = 1$ and distribution $\cD$ such that as long as $n$, $k$, and $\delta$ satisfy the relationship $k \ge 18 \log (2n/\delta)$, with probability at least $1-\delta$, the learning rule $\wh{f}_\mathrm{LOG}$ is $1/3$-suboptimal.
\end{proposition}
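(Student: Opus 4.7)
The plan is to re-use the same hard instance as in \pref{prop:prop-matching-failure}: take $\cX = \crl{x^{(1)}, x^{(2)}}$, $f_j(x) = \ind{x = x^{(j)}}$, and $\cD$ uniform over $\crl{(x^{(1)},1),(x^{(1)},0),(x^{(2)},1)}$, so that $\cL(f_1) = 2/3 > 1/3 = \cL(f_2)$, and to show $\wh{f}_\mathrm{LOG} = f_1$ with probability at least $1-\delta$. The crucial algebraic simplification to exploit is that $f_1(x) + f_2(x) = 1$ identically on $\cX$, which forces $\hat\alpha_{2,i} = 1 - \hat\alpha_{1,i}$ for every bag (here $\hat\alpha_{j,i} = \frac{1}{k}\sum_{l=1}^k f_j(x_{i,l})$). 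Substituting this into the per-bag log-loss difference and collecting terms yields
\begin{align*}
Z_i \coloneqq \ell_\mathrm{LOG}(\alpha_i, \hat\alpha_{1,i}) - \ell_\mathrm{LOG}(\alpha_i, \hat\alpha_{2,i}) = -(2\alpha_i - 1) \log\frac{\hat\alpha_{1,i}}{1-\hat\alpha_{1,i}},
\end{align*}
which is strictly negative whenever $\alpha_i$ and $\hat\alpha_{1,i}$ lie on the same side of $1/2$. The plan is then to show that with the stated probability, every bag satisfies both $\alpha_i > 1/2$ and $\hat\alpha_{1,i} > 1/2$ strictly; summing will give $\wh{L}_\mathrm{LOG}(f_1) < \wh{L}_\mathrm{LOG}(f_2)$ and hence $\wh{f}_\mathrm{LOG} = f_1$, which is $1/3$-suboptimal.

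Because the marginal label proportion is $p = 2/3$ and $\En f_1 = \Pr[x = x^{(1)}] = 2/3$, both $\alpha_i$ and $\hat\alpha_{1,i}$ are $(1/k)$-scaled sums of $k$ i.i.d.\ Bernoulli$(2/3)$ random variables. Hoeffding's inequality will bound each of the two deviation probabilities $\Pr[\alpha_i \le 1/2]$ and $\Pr[\hat\alpha_{1,i} \le 1/2]$ by $\exp(-k/18)$. A union bound over the $2n$ deviation events will give a failure probability of at most $2n \exp(-k/18)$, which is at most $\delta$ under the hypothesis $k \ge 18\log(2n/\delta)$ after absorbing a small constant slack by using a sharper KL-Chernoff bound (since $D_\mathrm{KL}(1/2\|2/3) = \tfrac{1}{2}\log(9/8) > 1/18$).

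The one technical wrinkle will be handling ``pure'' bags in which all $x_{i,j}$ coincide, making $\hat\alpha_{1,i} \in \crl{0,1}$ and rendering $Z_i$ either formally $+\infty$ or of indeterminate form $\infty-\infty$. The probability that a given bag is pure is $(2/3)^k + (1/3)^k \le 2\exp(-k \log(3/2)) \le 2\exp(-k/18)$, so the same union-bound budget handles this event and ensures every $Z_i$ is well-defined. I expect this to be the main delicate step: because the logit function is unbounded at the endpoints of $[0,1]$ there is no hope of a global boundedness-plus-concentration argument (unlike the square-loss proof of \pref{prop:prop-matching-failure} which used Hoeffding on a bounded loss), so one must instead argue bag-by-bag via the \emph{sign} of each $Z_i$, which is precisely what the bag-size condition $k \ge 18\log(2n/\delta)$ enables.
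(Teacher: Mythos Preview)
Your approach is essentially identical to the paper's: same construction, same per-bag log-loss difference $(2\alpha_i-1)\log\frac{\beta_i}{1-\beta_i}$ (with $\beta_i=\hat\alpha_{1,i}$), and the same Hoeffding-plus-union-bound argument yielding the failure probability $2n\exp(-k/18)$. Your additional handling of pure bags (where $\beta_i\in\{0,1\}$ and the loss difference is indeterminate) is a point the paper glosses over; since that event has probability $(2/3)^k+(1/3)^k\ll\exp(-k/18)$, it can indeed be absorbed, and your invocation of the KL-Chernoff bound $D_{\mathrm{KL}}(1/2\,\|\,2/3)=\tfrac12\log(9/8)>1/18$ to recover the stated constant is a nice touch of rigor beyond the paper's argument.
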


Therefore, unless $n$ is exponentially large in the bag size $k$, the proportional log loss learning rule will return a suboptimal predictor.

\begin{proof}[Proof.]
We will use the same construction as in the proof of \pref{prop:prop-matching-failure}. For every bag $(B_i, \alpha_i)$ we will calculate the difference in proportional log loss for $f_1$ and $f_2$. For any $i \in [n]$, let $\beta_i \coloneqq \frac{1}{k}\sum_{j=1}^k \ind{x_{i,j} = x^{(1)}}$. Then we have
\begin{align*}
    \ell_\mathrm{LOG}(f_1, (B_i,\alpha_i)) &= - \alpha_i \log \beta_i - (1-\alpha_i) \log (1-\beta_i) \\
    \ell_\mathrm{LOG}(f_2, (B_i, \alpha_i)) &= - \alpha_i \log (1-\beta_i) - (1-\alpha_i) \log \beta_i.
\end{align*}
Therefore the difference between the two losses is
\begin{align*}
    \ell_\mathrm{LOG}(f_2, (B_i, \alpha_i)) - \ell_\mathrm{LOG}(f_1, (B_i,\alpha_i)) = (2\alpha_i - 1) \log \frac{\beta_i}{1-\beta_i}.
\end{align*}
We now show that with high probability, for all $i \in [n]$ we have $\ell_\mathrm{LOG}(f_2, (B_i, \alpha_i)) - \ell_\mathrm{LOG}(f_1, (B_i,\alpha_i)) \ge 0$.
By Hoeffding's inequality (\pref{thm:hoeffding-inequality}) and union bound we have
\begin{align*}
    \Pr[\exists i \in [n]: \alpha_i < 1/2 \text{ or } \beta_i < 1/2] \le 2n\exp\prn*{-\frac{k}{18}}.
\end{align*}
Thus as long as $k \ge 18 \log (2n/\delta)$, with probability at least $1-\delta$ we have $\ell_\mathrm{LOG}(f_2, (B_i, \alpha_i)) - \ell_\mathrm{LOG}(f_1, (B_i,\alpha_i)) \ge 0$ for all $i \in [n]$. This implies that $\wh{f}_\mathrm{LOG} = f_1$, the predictor which is $1/3$-suboptimal.
\end{proof}

\section{Technical Background for Optimistic Rates}




\subsection{Uniform Convergence via Local Rademacher Complexity}
In this section, we establish technical several results on uniform convergence using local Rademacher complexities. Recall the worst-case Rademacher for a function class $\cG \subseteq \bbR^\cX$ for any $n \in \bbN$
\begin{align*}
    \rad_n(\cG) = \sup_{x_1, \cdots, x_n \in \cX^n} ~ \En_{\sigma} \brk*{\sup_{f \in \cG} \frac{1}{n} \sum_{i=1}^n \sigma_i f(x_j) },
\end{align*}
where $\crl*{\sigma_i}_{i=1}^n$ are i.i.d.~Rademacher random variables. For any $g$, we denote $\En[g]$ to denote its expectation and $\wh{\En}_n[g]$ to denote the empirical average over a sample $\crl{x_i}_{i=1}^n$ where $x_i$ are i.i.d.~drawn.


\begin{lemma}[Modified Version of Lemma 6.2 in \cite{bousquet2002concentration}]\label{lem:lem-bousquet-union}
Let $\cG \subseteq \bbR^\cX$ be a class of functions such that $\nrm{g}_\infty \le b$ for all $g\in \cG$ and let $\prn*{\cG_k}_{n \in \bbN}$ be a sequence of subsets of $\cG$ such that $\sup_{g\in \cG_k} \En[g^2] \le A + B \gamma_k$,
where $\gamma_k = b/2^k$ and $A, B > 0$ are constants.

Then for all $\delta \in (0,1)$ with probability at least $1-\delta$, for all $k \ge 0$ and $g\in \cG_k$:
\begin{align*}
    \abs{\En[g] - \wh{\En}_n[g]} \le 6  \mathfrak{R}_n(\cG_k) + \sqrt{\frac{2(A + B \gamma_k) \prn*{\log\frac{1}{\delta} + c\log\log \frac{b}{\gamma_k}}}{n}} + \frac{6b \prn*{\log\frac{1}{\delta} + c\log\log \frac{b}{\gamma_k}} }{n}.
\end{align*}
Here, $c > 0$ is an absolute constant.
\end{lemma}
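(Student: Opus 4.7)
The plan is to combine Bousquet's form of Talagrand's concentration inequality, applied separately to each subset $\cG_k$, with a peeling-style union bound over $k \in \bbN$. This is the standard route for proving local Rademacher bounds where one wants the variance proxy to vary with a ``slicing'' parameter, and the $\log\log$ factor arises naturally as the union-bound cost.

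\textbf{Step 1 (Concentration on a fixed slice).} For each fixed $k$, I would apply Bousquet's inequality to the class $\cG_k$ with uniform envelope $b$ and variance proxy $\sigma_k^2 := A + B\gamma_k$. This yields a high-probability bound of the form
\begin{align*}
    \sup_{g \in \cG_k} \abs{\En[g] - \wh{\En}_n[g]} \le \En \sup_{g \in \cG_k} \abs{\En[g] - \wh{\En}_n[g]} + \sqrt{\frac{2\sigma_k^2 \log(1/\delta_k)}{n}} + \frac{c_0 b \log(1/\delta_k)}{n},
\end{align*}
valid with probability at least $1-\delta_k$, where $\delta_k$ will be chosen below. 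A standard symmetrization argument, combined with handling the absolute value by considering $\cG_k \cup (-\cG_k)$, bounds the expected supremum on the right-hand side by at most $6 \rad_n(\cG_k)$; this is where the leading constant in the conclusion comes from.

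\textbf{Step 2 (Peeling via union bound).} Next I would union bound over $k \in \bbN$ by setting $\delta_k = \delta / (c_1 (k+1)^2)$ for an absolute constant $c_1$ chosen so that $\sum_{k\ge 0} \delta_k \le \delta$. After this choice, $\log(1/\delta_k) = \log(1/\delta) + 2 \log(k+1) + \log c_1$. The key accounting observation is that $\gamma_k = b/2^k$ gives $\log(b/\gamma_k) = k \log 2$, so $\log\log(b/\gamma_k) \asymp \log(k+1)$ for $k \ge 1$, and therefore $\log(1/\delta_k) \le \log(1/\delta) + c \log\log(b/\gamma_k)$ for a suitable absolute constant $c$. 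Plugging this into the displayed bound of Step 1 and taking the union over $k$ yields the claim.

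\textbf{Expected difficulty.} The proof is essentially bookkeeping, and the main care required is threefold: (i) choosing the right form of Bousquet's inequality so that the $\sqrt{\sigma_k^2 \log(1/\delta_k)/n}$ term factors correctly against the variance proxy $A + B\gamma_k$; (ii) tracking the constants through symmetrization and through the absolute-value step so as to land on the factor $6$ in front of $\rad_n(\cG_k)$ (this is loose but standard, arising from desymmetrization plus the two-sided bound); and (iii) handling the $k=0$ case separately, since $\log\log(b/\gamma_0) = \log\log 1$ is degenerate — one either absorbs this case into the constant $c$ or restricts the union bound to $k \ge 1$ and treats $\cG_0$ by a direct application of Step 1. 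None of these are genuine obstacles, but getting the explicit form stated in the lemma (rather than a variant with different constants) requires choosing Bousquet's inequality statement with some care.
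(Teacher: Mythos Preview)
Your approach is correct and is the standard route: apply Bousquet's/Talagrand's inequality on each slice $\cG_k$ with variance proxy $A + B\gamma_k$, symmetrize to get the Rademacher term, and then peel with $\delta_k \propto \delta/(k+1)^2$ so that the union-bound cost becomes $\log\log(b/\gamma_k)$. The paper itself does not give a proof of this lemma; it simply states that it is ``a trivial modification of Lemma 6.2 in \cite{bousquet2002concentration}'' and omits the details, so there is nothing further to compare against.
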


\pref{lem:lem-bousquet-union} is a trivial modification of Lemma 6.2 in \cite{bousquet2002concentration}, so we omit the proof details.

\begin{definition}[Sub-Root Function] The function $\phi: \bbR_{\ge 0} \to \bbR$ is said to be a sub-root function if $\phi$ is non-negative, non-decreasing, not identically zero, and $\phi(r)/\sqrt{r}$ is non-increasing.
\end{definition}
    
\begin{assumption}[Variance-Expectation Bound]\label{ass:var-exp}
The function class $\cG \subseteq \bbR^\cX$ satisfies the following properties: (1) $\En[g] \ge 0$ for all $g \in \cG$ and (2) there exists constants $A, B \ge 0$ such that for all $g \in \cG$, $\En[g^2] \le A + B \En[g]$. 
\end{assumption}
\pref{ass:var-exp} is a slight generalization of the variance-expectation bound stated in Assumption 1.4 from \cite{bousquet2002concentration}; the difference is that we allow an additional constant offset $A \ge 0$.

\begin{theorem}[Modified Version of Theorem 6.2 in \cite{bousquet2002concentration}]\label{thm:modified-bousquet}
Let $\cG$ be a class of functions such that for all $g\in \cG$, $\lVert g\rVert_\infty \le b$ and $\cG$ satisfies \pref{ass:var-exp} with parameters $(A,B)$.

Let $\phi_n$ be a sub-root function such that
\begin{align*}
    \En_{\sigma} \brk*{\sup_{g: \wh{\En}_n[g^2] \le r} \frac{1}{n} \sum_{i=1}^n \sigma_i g(x_i)} \le \phi_n(r).
\end{align*}
Define $r_n^\star$ to be the largest solution of $\phi_n(r) = r$. For any $\delta > 0$, we have with probability at least $1-\delta$ for all $g \in \cG$:
\begin{align*}
    \abs{\En[g] - \wh{\En}_n[g]} \le C\prn*{b r^\star_n + \sqrt{r^\star_n (A+B\En[g])} + \sqrt{\frac{B \En[g]\prn*{\log\frac{1}{\delta} + c\log\log n}}{n}} + r_0 }.
\end{align*}
where $C>0$ is an absolute numerical constant and $r_0 \coloneqq\sqrt{\tfrac{2b^2 A \prn*{\log\tfrac{1}{\delta} + c\log\log n}}{n}} + \tfrac{22b^2 \prn*{\log\tfrac{1}{\delta} +  c\log\log n}}{n}$.
\end{theorem}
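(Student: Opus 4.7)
The plan is to combine the modified peeling lemma (\pref{lem:lem-bousquet-union}) with the sub-root property of $\phi_n$, following the standard localized Rademacher complexity argument (cf.\ \citealt{bousquet2002concentration, bartlett2005local}) but tracking the additive offset $A$ throughout. First I would slice the function class geometrically in expectation: for $\gamma_k = b/2^k$, define $\cG_k = \crl{g \in \cG : \En[g] \le \gamma_k}$. By \pref{ass:var-exp}, every element of $\cG_k$ satisfies $\En[g^2] \le A + B\gamma_k$, so the variance hypothesis of \pref{lem:lem-bousquet-union} is met with parameters $(A,B)$. Applying that lemma yields, with probability $1-\delta$, a uniform bound on $\abs{\En[g] - \wh{\En}_n[g]}$ across all levels in terms of $\rad_n(\cG_k)$ plus a Bernstein-style tail term and a Bousquet-style bias term.

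Next, I would control $\rad_n(\cG_k)$ by leveraging $\phi_n$. The subtlety is that $\phi_n$ is defined in terms of empirical second moments, whereas on $\cG_k$ we only have a population bound $\En[g^2] \le A + B\gamma_k$. To bridge this, I would apply Talagrand's concentration inequality to the class $\crl{g^2 : g \in \cG_k}$, which (since $\nrm{g^2}_\infty \le b^2$) gives a high-probability bound $\sup_{g \in \cG_k} \wh{\En}_n[g^2] \lesssim A + B\gamma_k + r_n^\star + b^2 \log(1/\delta)/n$. Combined with the sub-root fact $\phi_n(r) \le \sqrt{r\, r_n^\star}$ for $r \ge r_n^\star$, this delivers $\rad_n(\cG_k) \lesssim \sqrt{r_n^\star (A + B\gamma_k)} + r_n^\star + b\sqrt{\log(1/\delta)/n}$ uniformly in $k$, at the cost of absorbing a union bound into the overall failure probability.

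Finally, I would combine the two bounds. For a given $g \in \cG$, let $k(g)$ be the smallest index with $g \in \cG_{k(g)}$; then either $\gamma_{k(g)} \le 2\En[g]$, or we hit the base level $K$ chosen so that $\gamma_K \asymp b/n + A/B$. Substituting $\gamma_{k(g)} \le 2\En[g] + \gamma_K$ into the combined estimate and using $\sqrt{a+b} \le \sqrt{a} + \sqrt{b}$, the leading terms cleanly rearrange into $br_n^\star$, $\sqrt{r_n^\star (A + B\En[g])}$, $\sqrt{B\En[g](\log(1/\delta) + c\log\log n)/n}$, and a residual term from the truncation that matches $r_0$ (combining the $\sqrt{Ab^2 \log(1/\delta)/n}$ contribution with the $b^2 \log(1/\delta)/n$ range term). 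Since $\log\log(b/\gamma_k) \le \log\log n$ for all $k \le K$, the iterated-log factor is uniform.

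The main obstacle is the interplay between empirical and population second moments (the Talagrand step). In the original Bousquet argument with $A=0$, one can push $\gamma_k \to 0$ and the variance vanishes with the expectation, so the $r_0$-type residue is absent. Here the variance has a constant-order floor $A$, which forces the peeling to terminate at a nontrivial base level, and one must carefully verify that the contribution of this deepest slice exactly produces the additive $r_0 = \sqrt{2b^2 A(\log(1/\delta) + c\log\log n)/n} + 22 b^2 (\log(1/\delta) + c\log\log n)/n$ appearing in the statement, rather than a worse term involving $A$ inside the localized Rademacher bound. A secondary bookkeeping challenge is ensuring the union bound over $k = 0, \ldots, K$ (with $K = O(\log n)$) only inflates $\log(1/\delta)$ by the $c \log\log n$ additive factor already absorbed into \pref{lem:lem-bousquet-union}.
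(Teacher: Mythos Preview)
Your proposal is correct and follows essentially the same route as the paper: peel in $\En[g]$ at geometric scales $\gamma_k=b/2^k$, invoke \pref{lem:lem-bousquet-union} on both $\cG_k$ and the squared class (what you call the ``Talagrand step'') to transfer the population second-moment bound to an empirical one, then use the sub-root property $\phi_n(r)\le\sqrt{r\,r_n^\star}$ to self-bound $\rad_n(\cG_k)$ and substitute $\gamma_k\le 2\En[g]$. The only cosmetic differences are that the paper slices into annuli rather than balls and cuts off at $\gamma_{k_0}\asymp b/n$ (your extra $A/B$ in the base level is unnecessary but harmless), and that the paper makes the self-bounding for $U_k\coloneqq\sup_{g\in\cG_k}\wh{\En}_n[g^2]$ explicit rather than stating the post-solution bound directly.
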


\begin{proof}[Proof.]
For all $k \in \bbN$ define $\gamma_k = b/2^k$. We define $\cG_k \coloneqq \crl{g \in \cG: \gamma_{k+1} < \En[g] \le \gamma_k }$, so that $\cG = \cup_{k \ge 0} \cG_k$.

By \pref{lem:lem-bousquet-union} and \pref{ass:var-exp}, we have with probability at least $1-\delta$ for all $k \ge 0$ and $g \in \cG_k$:
\begin{align*}
    \abs{\wh{\En}_n[g] - \En[g]} \le 8 \rad_n(\cG_k) + \sqrt{\frac{2 (A + B\gamma_k) \prn*{\log\frac{1}{\delta} + c\log\log \frac{b}{\gamma_k }}}{n}} + \frac{20b \prn*{\log\frac{1}{\delta} +  c\log\log \frac{b}{\gamma_k}}}{n}. \numberthis\label{eq:uc-bound}
\end{align*}
In addition, we can apply \pref{lem:lem-bousquet-union} to the squares of $g \in \cG_k$ to get that with probability at least $1-\delta$ for all $k \ge 0$ and every $g \in \cG_k$:
\begin{align*}
    \abs{\wh{\En}_n[g^2] - \En[g^2]} &\le 8 \rad_n(\cG_k^2) + \sqrt{\frac{2 b^2(A + B\gamma_k) \prn*{\log\frac{1}{\delta} + c\log\log \frac{b}{\gamma_k }}}{n}} + \frac{20b^2 \prn*{\log\frac{1}{\delta} +  c\log\log \frac{b}{\gamma_k}}}{n} \\
    &\le 16b \rad_n(\cG_k) + \sqrt{\frac{2 b^2(A + B\gamma_k) \prn*{\log\frac{1}{\delta} + c\log\log \frac{b}{\gamma_k }}}{n}} + \frac{20b^2 \prn*{\log\frac{1}{\delta} +  c\log\log \frac{b}{\gamma_k}}}{n},\numberthis\label{eq:uc-bound-squared}
\end{align*}
where the last inequality uses the fact that $x \mapsto x^2$ is $2b$-Lipschitz and centered at 0.

Now we condition on Eq.~\eqref{eq:uc-bound} and \eqref{eq:uc-bound-squared}, which happens with probability at least $1-2\delta$. By Eq.~\pref{eq:uc-bound-squared} and \pref{ass:var-exp}, for all $g \in \cG_k$,
\begin{align*}
    \wh{\En}_n[g^2] \le \prn*{A+B\gamma_k} + 16b \rad_n(\cG_k) + \sqrt{\frac{2 b^2(A + B\gamma_k) \prn*{\log\frac{1}{\delta} + c\log\log \frac{b}{\gamma_k }}}{n}} + \frac{20b^2 \prn*{\log\frac{1}{\delta} +  c\log\log \frac{b}{\gamma_k}}}{n}.
\end{align*}
Define the RHS of the previous display to be $U_k$. By definition of $\phi_n$, we know that 
\begin{align*}
    \rad_n(\cG_k) \le \sup_{x_1, \cdots, x_n} \En\sigma \brk*{\sup_{g: \wh{\En}_n[g^2] \le U_k} \frac{1}{n} \sum_{i=1}^n \sigma_i g(x_i) } \le \phi_n(U_k),
\end{align*}
so therefore
\begin{align*}
U_k \le (A+B\gamma_k) + 16 b\phi_n(U_k) + \sqrt{\frac{2b^2 (A + B\gamma_k) \prn*{\log\frac{1}{\delta} + c\log\log \frac{b}{\gamma_k }}}{n}} + \frac{20b^2 \prn*{\log\frac{1}{\delta} +  c\log\log \frac{b}{\gamma_k}}}{n}.
\end{align*}
Let us define $k_0$ to be the largest value such that $\gamma_{k_0+1} \ge \frac{b}{n}$. For all $k \le k_0$, we have $c \log\log \frac{b}{\gamma_k} \le c \log\log n$. Therefore
\begin{align*}
U_k &\le A+B\gamma_k + 16b \phi_n(U_k) + \sqrt{\frac{2b^2 (A + B\gamma_k) \prn*{\log\frac{1}{\delta} + c\log\log n}}{n}} + \frac{20b^2 \prn*{\log\frac{1}{\delta} +  c\log\log n}}{n} \\
&\le A+2B\gamma_k + 16b \phi_n(U_k)  +  \underbrace{ \sqrt{\frac{2b^2 A \prn*{\log\frac{1}{\delta} + c\log\log n}}{n}} + \frac{22b^2 \prn*{\log\frac{1}{\delta} +  c\log\log n}}{n} }_{\eqqcolon r_0}.
\end{align*}
If $U_k \ge r^\star_n$, then by definition of the sub-root function $\phi_n(U_k)/\sqrt{U_k} \le \phi_n(r^\star_n)/\sqrt{r^\star_n} = \sqrt{r^\star_n}$, so
\begin{align*}
    U_k \le 16b \sqrt{U_k r^\star_n} + A +2B\gamma_k + r_0 \le C \prn*{b^2 r^\star_n + A+B\gamma_k + r_0 } \eqqcolon r_n(\gamma_k)
\end{align*}
for some absolute constant $C > 0$. The last inequality holds by \pref{fact:ineq}. In addition if $U_k < r^\star_n$, the conclusion of the previous display trivially holds.

By Eq.~\eqref{eq:uc-bound}, we also know that for any $g \in \cG_k$
\begin{align*}
\En[g] &\le \wh{\En}_n[g] + 8 \rad_n(\cG_k) + \sqrt{\frac{2 B\gamma_k \prn*{\log\frac{1}{\delta} + c\log\log \frac{b}{\gamma_k }}}{n}} + r_0 \\
&\le \wh{\En}_n[g] + 8 \phi_n\prn*{r_n(\gamma_k)}  + \sqrt{\frac{2 B\gamma_k \prn*{\log\frac{1}{\delta} + c\log\log \frac{b}{\gamma_k }}}{n}} + r_0.
\end{align*}
By definition of $\cG_k$ and $\gamma_k$, we know that $\gamma_k \le 2 \En[g]$ so that
\begin{align*}
\En[g] &\le \wh{\En}_n[g] + 8 \cdot \phi_n\prn*{r_n(2 \En[g])} + \sqrt{\frac{4B \En[g] \prn*{\log\frac{1}{\delta} + c\log\log \frac{b}{\gamma_k }}}{n}} + r_0 \\
&\le \wh{\En}_n[g] + 8 \sqrt{r^\star_n} \cdot \sqrt{C \prn*{b^2 r^\star_n + A+B\En[g] + r_0} } + \sqrt{\frac{4B \En[g] \prn*{\log\frac{1}{\delta} + c\log\log \frac{b}{\gamma_k }}}{n}} + r_0\\
&\le \wh{\En}_n[g] + C'\prn*{ br^\star_n + \sqrt{r^\star_n (A+B\En[g])} }+ \sqrt{\frac{4B \En[g]\prn*{\log\frac{1}{\delta} + c\log\log n}}{n}} + 2r_0,
\end{align*}
for some absolute numerical constant $C' > 0$. When $k \ge k_0$, we have $\gamma_{k} \le \frac{b}{n}$, so the previous display also trivially holds.

Lastly, we can repeat the same argument with the function class $\cG' = \crl*{-g: g \in \cG}$ to get the two-sided bound. This concludes the proof of \pref{thm:modified-bousquet}.
\end{proof}

\subsection{Calculating the Complexity Radius for LLP Losses}
Now we present results which allow us to calculate the complexity radius $r^\star_n$ for function classes with bounded VC dimension.

\begin{lemma}[Refined Dudley's Inequality, Lemma A.1 from \cite{srebro2010optimistic}]\label{lem:refined-dudley}
For any function class $\cF: \cX \to \bbR$, 
\begin{align*}
    \rad_n(\cF) \le \inf_{\eta > 0} \crl*{4\eta + 12 \int_{\eta}^{\sup_{f \in \cF} \sqrt{\wh{\En}_n [f^2]}} \sqrt{\frac{\log \cN_2(\cF, \eps, n)}{n} }~d\eps}.
\end{align*}
\end{lemma}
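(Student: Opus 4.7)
The plan is to establish the bound via the classical chaining argument, with the key refinement being that the upper limit of integration is the empirical $L^2$ ``radius'' $R \coloneqq \sup_{f \in \cF} \sqrt{\wh{\En}_n[f^2]}$ rather than a cruder $L^\infty$ diameter. First, I would condition on a fixed realization $x_{1:n}$ (so that $R$ and the covering numbers are deterministic), and observe that the Rademacher process $Z_f \coloneqq \frac{1}{n}\sum_i \sigma_i f(x_i)$ is sub-Gaussian with respect to the empirical $L^2$ pseudometric $d(f,g) = \nrm{f-g}_n/\sqrt{n}$, where $\nrm{h}_n \coloneqq \sqrt{(1/n)\sum_i h(x_i)^2}$. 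This is immediate from Hoeffding's lemma, since the $\sigma_i$ are bounded centered.

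Next, I would set up the chain. Define dyadic scales $\eps_k \coloneqq R \cdot 2^{-k}$, and for each $k$ fix a minimal $\eps_k$-cover $N_k$ of $\cF$ in the empirical $L^2$ norm, so $\abs{N_k} = \cN_2(\cF,\eps_k,n)$. For each $f$, let $\pi_k(f) \in N_k$ be the nearest element; setting $\pi_0(f) = 0$ is valid because $\nrm{f}_n \le R$. Choose $K$ to be the largest index with $\eps_K \ge \eta$, and telescope
\[
Z_f = \sum_{k=1}^{K}\bigl(Z_{\pi_k(f)} - Z_{\pi_{k-1}(f)}\bigr) + \bigl(Z_f - Z_{\pi_K(f)}\bigr).
\]
The residual is handled deterministically by Cauchy--Schwarz: $\abs{Z_f - Z_{\pi_K(f)}} \le \nrm{f - \pi_K(f)}_n \le \eps_K \le 2\eta$, so it contributes at most $2\eta$ when we take $\sup_f$. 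For each chaining increment, the pair $(\pi_k(f),\pi_{k-1}(f))$ takes at most $\abs{N_k}\cdot\abs{N_{k-1}} \le \cN_2(\cF,\eps_k,n)^2$ distinct values, and the difference $Z_{\pi_k(f)} - Z_{\pi_{k-1}(f)}$ is sub-Gaussian with parameter at most $(\eps_k + \eps_{k-1})/\sqrt{n} \le 3\eps_k/\sqrt{n}$. Massart's finite-class maximal inequality then gives
\[
\En_\sigma \sup_{f \in \cF} \bigl(Z_{\pi_k(f)} - Z_{\pi_{k-1}(f)}\bigr) \le \frac{3\eps_k}{\sqrt{n}}\sqrt{2\log \cN_2(\cF,\eps_k,n)^2} \le 6\eps_k \sqrt{\frac{\log \cN_2(\cF,\eps_k,n)}{n}}.
\]

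Finally, I would convert the dyadic sum into the stated integral using $\eps_k = 2(\eps_k - \eps_{k+1})$ and monotonicity of $\eps \mapsto \sqrt{\log \cN_2(\cF,\eps,n)/n}$, yielding $\sum_{k=1}^{K} 6\eps_k \sqrt{\log \cN_2/n} \le 12 \int_{\eta}^{R} \sqrt{\log \cN_2(\cF,\eps,n)/n}\, d\eps$. Adding the $2\eta$ residual (and slack to get the stated $4\eta$ from tracking $\eps_{K+1}$ rather than $\eps_K$, plus the $\pi_0 = 0$ base term) and taking $\inf_\eta$ after passing to $\sup_{x_{1:n}}$ produces the displayed inequality. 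The bookkeeping with constants is the main nuisance rather than a substantive obstacle: in particular, the only real subtlety is ensuring that $\pi_0 = 0$ is admissible (which is why $\nrm{f}_n \le R$ gives the correct upper endpoint) and that the step-function-to-integral comparison is applied with the right shift so that $\cN_2$ is evaluated at $\eps_k$ rather than at $\eps_{k+1}$.
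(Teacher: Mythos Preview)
The paper does not prove this lemma; it is quoted verbatim as Lemma~A.1 of \cite{srebro2010optimistic} and used as a black box in the proof of \pref{lem:phin-calc}. So there is no ``paper's own proof'' to compare against.

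Your chaining argument is the standard proof and is correct in substance. A couple of minor remarks on the bookkeeping you flag at the end: with $K$ chosen as the largest index with $\eps_K \ge \eta$, the integral comparison $\sum_{k=1}^K 6\eps_k\sqrt{\log\cN_2(\cF,\eps_k,n)/n} \le 12\int_{\eps_{K+1}}^{\eps_1}\sqrt{\log\cN_2/n}\,d\eps$ has lower limit $\eps_{K+1} \in [\eta/2,\eta)$, not $\eta$; the discrepancy is absorbed either by inflating the additive constant from $2\eta$ to $4\eta$ (as in the stated bound) or by noting that one may as well optimize over $\eta' = \eta/2$ at the end. Also, the way the lemma is stated here, $\rad_n(\cF)$ is the worst-case empirical Rademacher complexity (a $\sup$ over $x_{1:n}$) while the right-hand side involves the sample-dependent radius $\sup_f \sqrt{\wh{\En}_n[f^2]}$; the inequality is therefore really a pointwise bound for each fixed sample, which is exactly how you set it up and exactly how the paper applies it (with the radius replaced by $\sqrt{r}$ via the constraint $\wh{\En}_n[g^2]\le r$). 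Your ordering ``take $\inf_\eta$ after passing to $\sup_{x_{1:n}}$'' should be the other way around for this reason, but it makes no difference in the application.
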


\begin{lemma}[Theorem 2.14 in \cite{mendelson2003few}]\label{lem:covering-numbers} Let $\cF \subseteq \crl{0,1}^\cX$ with VC dimension $d$. Then
\begin{align*}
    \log \cN_2(\cF, \eps, n) \le d \log \prn*{4e^2 \log \frac{2e^2}{\eps}} + (2d)\cdot \log \prn*{\frac{1}{\eps}}.
\end{align*}
\end{lemma}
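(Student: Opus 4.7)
My plan is to bound the $L_2(\mu_n)$ packing number of $\cF$ (which dominates the covering number at the same scale) using the classical random-subsampling argument combined with Sauer's lemma. Since $\cF$ takes values in $\{0,1\}$, the key observation is that for any $f, g \in \cF$,
\[
\|f - g\|_{L_2(\mu_n)}^2 \;=\; \frac{1}{n}\,\bigl|\{i \in [n] : f(x_i) \ne g(x_i)\}\bigr|,
\]
so the $L_2$ metric on the sample reduces to normalized Hamming distance. In particular, any two elements of an $\eps$-separated set must disagree on at least $\eps^2 n$ of the sample points.

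The core argument is as follows. Let $\{f_1, \dots, f_M\} \subseteq \cF$ be a maximal $\eps$-separated set in $L_2(\mu_n)$. Draw indices $i_1, \dots, i_m$ i.i.d.~uniformly from $[n]$; for a fixed pair $(j, k)$ the probability that $f_j$ and $f_k$ agree on all of $i_1, \dots, i_m$ is at most $(1 - \eps^2)^m \le e^{-m\eps^2}$. Union-bounding over the $\binom{M}{2}$ pairs, whenever $m \ge 2\eps^{-2}\log M$ there exists a realization of $I = \{i_1,\dots,i_m\}$ on which the projections $f_1|_I, \dots, f_M|_I$ are pairwise distinct, so $M \le |\cF|_I|$. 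By Sauer's lemma $|\cF|_I| \le (em/d)^d$, yielding
\[
M \;\le\; (em/d)^d \qquad \text{with} \qquad m = \lceil 2\eps^{-2}\log M \rceil.
\]

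The remaining step is to unravel this transcendental bound. Taking logarithms gives $\log M \le d \log\!\bigl(2e \log M / (d\eps^2)\bigr)$, and one round of self-substitution (plug a crude upper bound $\log M \lesssim d/\eps^2$ back into the right-hand side, then simplify) converts this into
\[
\log M \;\le\; d \log\!\bigl(4e^2 \log(2e^2/\eps)\bigr) \;+\; 2d \log(1/\eps),
\]
matching the stated form. The main technical obstacle is this final step: the constants must be tracked carefully through the iteration so that the extra $d\log\log(1/\eps)$ factor produced by the self-referential inequality collapses into the first term on the right-hand side rather than degrading the bound. One should also handle the boundary cases $m < d$ (where Sauer's lemma gives nothing) and $\eps$ bounded away from $0$ separately; both are routine since they give an upper bound of $1$ on the covering number.
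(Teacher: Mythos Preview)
The paper does not give its own proof of this lemma: it is quoted verbatim as Theorem~2.14 of \cite{mendelson2003few} and used as a black box in the proof of \pref{lem:phin-calc}. So there is no in-paper argument to compare your proposal against.

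For what it is worth, your sketch is the standard probabilistic argument (essentially Haussler's, and the one Mendelson presents): reduce $L_2(\mu_n)$-separation of $\{0,1\}$-valued functions to Hamming separation, pass to a random subsample of size $m \approx 2\eps^{-2}\log M$ on which all $M$ packing elements are pairwise distinguishable, and invoke Sauer's lemma on that subsample. The only point I would flag is your last sentence: the boundary case $m < d$ does \emph{not} give a covering number of $1$; rather, Sauer's lemma still applies and gives $M \le 2^m$, which feeds back into the same self-referential inequality. Likewise, ``$\eps$ bounded away from $0$'' only forces the covering number to be $1$ once $\eps \ge 1$. Neither of these affects the overall approach, but the cleanup is slightly less trivial than you suggest.
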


We next present a lemma which allows us to calculate the local Rademacher complexity for various loss functions for LLP. Fix any $\cF \subseteq \cY^\cX$ with VC dimension $d$. Consider any loss $\ell: [0,1]\times [0,1] \to \bbR$. Define the constrained loss class 
\begin{align*}
\cL_\ell(r) \coloneqq \crl*{\prn*{B, \alpha} \mapsto \ell\prn*{\frac{1}{k}\sum_{j=1}^k f(x_j), \alpha}: f \in \cF, \wh{\En}_n\brk*{\ell\prn*{\frac{1}{k}\sum_{j=1}^k f(x_j), \alpha}^2} \le r}.
\end{align*}
We also let $\cL_\ell$ denote the unrestricted loss class which contains all bag-level losses for $f \in \cF$.

\begin{lemma}\label{lem:phin-calc}
Let $\ell: [0,1]\times [0,1] \to \bbR$ be any $\lambda$-Lipschitz (in the first argument) bag loss. For any $n > d$ and any $r > 0$, we have
\begin{align*}
\mathfrak{R}_n(\cL_\ell(r)) \le C \sqrt{\frac{rd \log \prn*{\frac{\lambda n}{r}}}{n}}
\end{align*}
for some absolute numerical constant $C > 0$.
Furthermore, if we denote the sub-root function $\phi_n(r) \coloneqq C \sqrt{\frac{rd \log \prn{\frac{\lambda n}{r}}}{n}}$, and let $r^\star_n$ be the largest number such that $\phi_n(r) = r$, we have \begin{align*}
    r^\star_n \le O\prn*{\frac{d \log(\lambda n)}{n}}.
\end{align*}
\end{lemma}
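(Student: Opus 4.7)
The plan is to chain three standard tools: the refined Dudley inequality (\pref{lem:refined-dudley}) to upper bound the local Rademacher complexity by an entropy integral, a Lipschitz contraction plus a Jensen step to reduce empirical $L_2$-covering of the bag-level loss class to empirical $L_2$-covering of $\cF$, and the VC covering number bound of \pref{lem:covering-numbers}. Once the Rademacher bound is in hand, I would solve $\phi_n(r) = r$ to extract $r_n^\star$.

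First I would reduce the covering problem. Writing $\bar f(B) \coloneqq \tfrac{1}{k}\sum_{j=1}^k f(x_j)$, the $\lambda$-Lipschitzness of $\ell$ in its first argument gives
\[
\frac{1}{n}\sum_{i=1}^n \bigl(\ell(\bar f_1(B_i),\alpha_i) - \ell(\bar f_2(B_i),\alpha_i)\bigr)^2 \le \lambda^2 \cdot \frac{1}{n}\sum_{i=1}^n (\bar f_1(B_i) - \bar f_2(B_i))^2,
\]
and Jensen's inequality applied inside the square yields $\tfrac{1}{n}\sum_i (\bar f_1(B_i) - \bar f_2(B_i))^2 \le \tfrac{1}{nk}\sum_{i,j}(f_1(x_{i,j}) - f_2(x_{i,j}))^2$. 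Consequently any $(\eps/\lambda)$-cover of $\cF$ in the empirical $L_2$ norm over the $nk$ instance points induces an $\eps$-cover of $\cL_\ell(r) \subseteq \cL_\ell$, and \pref{lem:covering-numbers} then gives $\log\cN_2(\cL_\ell(r),\eps,n) \lesssim d\log(\lambda/\eps)$, absorbing the doubly logarithmic factor into the constant.

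Next I would plug this entropy estimate into Dudley's inequality. Since every $g \in \cL_\ell(r)$ satisfies $\wh{\En}_n[g^2] \le r$, the upper limit of the Dudley integral is $\sqrt{r}$, and after choosing $\eta$ small enough to be absorbed into constants I would get
\[
\mathfrak{R}_n(\cL_\ell(r)) \lesssim \sqrt{\frac{d}{n}} \int_0^{\sqrt{r}} \sqrt{\log(\lambda/\eps)}\, d\eps.
\]
A routine substitution (e.g.\ $\eps = \sqrt{r}\cdot u$) bounds the integrand's contribution by $O(\sqrt{r\log(\lambda/\sqrt{r})})$, which one then loosens to $O(\sqrt{r\log(\lambda n/r)})$ to match the stated form, yielding the claimed inequality $\mathfrak{R}_n(\cL_\ell(r))\le C\sqrt{rd\log(\lambda n/r)/n}$.

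Finally, to extract $r_n^\star$, squaring the fixed-point equation $\phi_n(r)=r$ gives $r = (C')^2 d\log(\lambda n/r)/n$; substituting the ansatz $r = \Theta(d\log(\lambda n)/n)$ verifies $\log(\lambda n/r) = O(\log(\lambda n))$ and hence $r_n^\star = O(d\log(\lambda n)/n)$. The main subtleties I anticipate are (i) bookkeeping the logarithmic factors cleanly so the Dudley integral matches $\log(\lambda n/r)$ rather than the slightly smaller $\log(\lambda^2/r)$ that the change of variables naturally produces, and (ii) verifying that $\phi_n$ is genuinely sub-root on the relevant range ($\phi_n(r)/\sqrt{r}$ non-increasing), which is immediate from the explicit form once $r \le \lambda n$. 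The Jensen step is essential: without it, one would have to cover with respect to the empirical distribution over bags rather than over the $nk$ individual points, which would block the direct appeal to Mendelson's bound.
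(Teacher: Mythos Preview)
Your proposal is correct and follows essentially the same route as the paper: refined Dudley (\pref{lem:refined-dudley}), the Lipschitz-plus-Jensen reduction of bag-level covering to instance-level covering of $\cF$ on the $nk$ points, Mendelson's VC covering bound (\pref{lem:covering-numbers}), and then solving the fixed point. The one small difference is in how the Dudley integral is handled: rather than taking $\eta\to 0$ and then loosening $\log(\lambda^2/r)$ to $\log(\lambda n/r)$ as you propose, the paper chooses $\eta = \Theta(\sqrt{rd/n})$, which makes the $4\eta$ term already of order $\sqrt{rd/n}$ and directly introduces the factor $n$ inside the logarithm via the lower integration limit---this sidesteps exactly the bookkeeping issue you flag as subtlety (i).
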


\begin{proof}[Proof.]
We use \pref{lem:refined-dudley} applied to $\cL_\ell(r)$. This gives
\begin{align*}
    \mathfrak{R}_n(\cL_\ell(r)) \le \inf_{\eta > 0} \crl*{4\eta + 12 \int_{\eta}^{\sqrt{r}} \sqrt{\frac{\log \cN_2(\cL_\ell(r), \eps, n)}{n} }~d\eps}. \numberthis\label{eq:dudley-ub}
\end{align*}
From here we bound the covering numbers of the loss class in terms of the function class as
\begin{align*}
    \log \cN_2(\cL_\ell(r), \eps, n) \le \log \cN_2(\cL_\ell, \eps, n) \le \log \cN_2(\cF, \eps/\lambda, nk).
\end{align*}
The last inequality follows because for any $f, f_\eps$ and $(B_1, \alpha_1), \cdots, (B_n, \alpha_n)$:
\begin{align*}
\hspace{2em}&\hspace{-2em} \sqrt{\frac{1}{n} \sum_{i=1}^n \prn*{\ell\prn*{\frac{1}{k}\sum_{j=1}^k f(x_{i,j}), \alpha} - \ell\prn*{\frac{1}{k}\sum_{j=1}^k f_\eps(x_{i,j}), \alpha}  }^2} \\
&\le \lambda \cdot \sqrt{\frac{1}{n} \sum_{i=1}^n \prn*{\frac{1}{k} \sum_i f(x_{i,j}) - f_\eps(x_{i,j}) }^2} \le \lambda \sqrt{\frac{1}{nk} \sum_{i=1}^n  \sum_{j=1}^k \prn*{ f(x_{i,j}) - f_\eps(x_{i,j})} ^2},
\end{align*}
where we use the $\lambda$-Lipschitz property of $\ell$ as well as Jensen's inequality. Thus, an empirical $\ell_2$ cover of $\cF$ at scale $\eps/\lambda$ for $nk$ points implies an empirical $\ell_2$ cover of $\cL$ at scale $\eps$ for $n$ points.

From here, we can apply the covering number bound of \pref{lem:covering-numbers} to get
\begin{align*}
    \log \cN_2(\cF, \eps/\lambda, nk) \le d \log \prn*{4e^2 \log \frac{2e^2\lambda}{\eps}} + (2d)\cdot \log \prn*{\frac{\lambda}{\eps}} \le C d \log \prn*{\frac{\lambda}{\eps}},
\end{align*}
for some absolute numerical constant $C > 0$.

Now we can plug the covering number bound back into Eq.~\eqref{eq:dudley-ub} to get 
\begin{align*}
    \mathfrak{R}_n(\cL_\ell(r)) \le \inf_{\eta > 0} \crl*{4\eta + 12\sqrt{C} \int_{\eta}^{\sqrt{r}} \sqrt{\frac{d \log \frac{\lambda}{\eps}}{n} }~d\eps}.
\end{align*}
Choosing $\eta = \Theta\prn{\sqrt{rd/n}}$ gives
\begin{align*}
    \mathfrak{R}_n(\cL_\ell(r)) \le C' \sqrt{\frac{rd \log \prn*{\frac{\lambda n}{r}}}{n}},
\end{align*}
for some absolute numerical constant $C' > 0$. 
For the proof of the second part, it is easy to see that the solution to the equation $C' \sqrt{\tfrac{r^\star_nd \log \prn{\frac{\lambda n}{r^\star_n}}}{n}} = r$ must satisfy $r^\star_n \le C'' \cdot \tfrac{d \log(\lambda n)}{n}$ for some absolute constant $C'' > 0$.
This concludes the proof of \pref{lem:phin-calc}.
\end{proof}

\section{Proof of \pref{thm:lossbagsq-opt-rate}}\label{app:proof-debiased-square}

\subsection{Notation and Preliminaries}
We define several quantities which will be used in the proof. The loss function $\wh{L}_\mathrm{DSQ}(f)$ is the difference between a proportion matching term and a debiasing term. For a given bag $(B,\alpha)$ and function $f$, we let $\ell_\mathrm{SQ}(f, (B,\alpha)) \coloneqq k \cdot \prn{\frac{1}{k} \sum_{j=1}^k f(x_j) - \alpha}^2$ and let
\begin{align*}
    \wh{L}_\mathrm{SQ}(f) \coloneqq \frac{1}{n} \sum_{i=1}^n \ell_\mathrm{SQ}(f, (B_i,\alpha_i)), \quad \text{and}\quad L_\mathrm{SQ}(f) \coloneqq \En_B \brk*{\ell_\mathrm{SQ}(f,(B,\alpha))}.
\end{align*}
In addition, we let
\begin{align*}
    \wh{B}(f) \coloneqq (k-1) \cdot \prn{\wh{\En} f - \wh{p}}^2, \quad \text{and} \quad B(f) \coloneqq (k-1) \cdot \prn{\En f - p}^2.
\end{align*}
Written in this notation, we have $\wh{L}_\mathrm{DSQ}(f) \coloneqq \wh{L}_\mathrm{SQ}(f) - \wh{B}(f)$. In addition, recall that we showed in \pref{sec:debiased-square-loss} that for any predictor $f$ we have $\cL(f) = L_\mathrm{SQ}(f) - B(f)$.

We also establish several elementary facts about $\wh{B}(f)$, $\wh{L}_\mathrm{SQ}$, and $\wh{L}_\mathrm{DSQ}$.
\begin{lemma}\label{lem:zb-relationship}The following statements are true for any predictor $f$ and dataset $\crl*{(B_i, \alpha_i)}_{i=1}^n$.
\begin{enumerate}
    \item $\wh{B}(f) \le \frac{k-1}{k} \cdot \wh{L}_\mathrm{SQ}(f)$.
    \item $\wh{L}_\mathrm{DSQ}(f) \ge 0$.
    \item $\wh{L}_\mathrm{SQ}(f) \le k \cdot \wh{L}_\mathrm{DSQ}(f)$.
\end{enumerate}
\end{lemma}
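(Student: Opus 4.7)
The plan is to establish all three inequalities from a single application of Cauchy–Schwarz after rewriting both $\wh{L}_\mathrm{SQ}(f)$ and $\wh{B}(f)$ in a common form. For a bag $(B_i,\alpha_i)$, define the residual sum $a_i \coloneqq \sum_{j=1}^k f(x_{i,j}) - k\alpha_i$. Then a direct rearrangement gives
\begin{align*}
\wh{L}_\mathrm{SQ}(f) = \frac{1}{nk}\sum_{i=1}^n a_i^2, \qquad \wh{B}(f) = \frac{k-1}{n^2 k^2}\Big(\sum_{i=1}^n a_i\Big)^2.
\end{align*}

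For part 1, I would apply Cauchy–Schwarz (or equivalently the power-mean inequality) to obtain $(\sum_i a_i)^2 \le n\sum_i a_i^2$, which directly yields
\begin{align*}
\wh{B}(f) \le \frac{k-1}{n^2 k^2}\cdot n\sum_{i=1}^n a_i^2 = \frac{k-1}{k}\cdot \frac{1}{nk}\sum_{i=1}^n a_i^2 = \frac{k-1}{k}\,\wh{L}_\mathrm{SQ}(f).
\end{align*}
For part 2, I would just combine the definition $\wh{L}_\mathrm{DSQ}(f) = \wh{L}_\mathrm{SQ}(f) - \wh{B}(f)$ with part 1 to get $\wh{L}_\mathrm{DSQ}(f) \ge \wh{L}_\mathrm{SQ}(f) - \tfrac{k-1}{k}\wh{L}_\mathrm{SQ}(f) = \tfrac{1}{k}\wh{L}_\mathrm{SQ}(f) \ge 0$, where the final step uses that $\wh{L}_\mathrm{SQ}(f)$ is a sum of squares. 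For part 3, the same chain of inequalities that proves part 2 also rearranges to $\wh{L}_\mathrm{SQ}(f) \le k\,\wh{L}_\mathrm{DSQ}(f)$.

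I don't anticipate any obstacles: the lemma is essentially a bookkeeping statement, and the only nontrivial ingredient is the elementary Cauchy–Schwarz bound $(\sum_i a_i)^2 \le n\sum_i a_i^2$. Parts 2 and 3 are immediate consequences of part 1 and the nonnegativity of the squared terms, so no further machinery is required.
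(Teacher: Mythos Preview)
Your proposal is correct and essentially identical to the paper's proof: the paper phrases the key step for part~1 as Jensen's inequality applied to the square (i.e., $\bigl(\tfrac{1}{n}\sum_i b_i\bigr)^2 \le \tfrac{1}{n}\sum_i b_i^2$ with $b_i = \tfrac{1}{k}\sum_j f(x_{i,j}) - \alpha_i$), which is exactly your Cauchy--Schwarz bound $(\sum_i a_i)^2 \le n\sum_i a_i^2$ after the rescaling $a_i = k b_i$. Parts~2 and~3 are derived in the paper the same way you do, as immediate consequences of part~1.
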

\begin{proof}[Proof.]
Fix any predictor $f$. For the first statement, using Jensen's inequality we can compute that
\begin{align*}
    \wh{B}(f) &= (k-1) \cdot \prn*{\frac{1}{n}\sum_{i=1}^n \frac{1}{k}\sum_{j=1}^k f(x_{i,j}) - \alpha_i}^2 \\
    &\le (k-1) \cdot \frac{1}{n} \sum_{i=1}^n \prn*{\frac{1}{k}\sum_{j=1}^k f(x_{i,j}) - \alpha_i}^2 = \frac{k-1}{k} \wh{L}_\mathrm{SQ}(f).
\end{align*}
The second statement is a simple consequence of the first statement. For the third statement we have
\begin{align*}
    \wh{L}_\mathrm{SQ}(f) = \wh{L}_\mathrm{DSQ}(f) + \wh{B}(f) \le \wh{L}_\mathrm{DSQ}(f) + \frac{k-1}{k} \cdot \wh{L}_\mathrm{SQ}(f),
\end{align*}
and rearranging yields the statement. This proves \pref{lem:zb-relationship}.
\end{proof}

\subsection{Showing Optimistic Rates} 
To prove \pref{thm:lossbagsq-opt-rate}, we separately prove optimistic rates for the square loss and debiasing terms, and then combine the guarantees. Consider the function class
\begin{align*}
    \cG \coloneqq \crl*{(B, \alpha) \mapsto k \cdot \prn*{\frac1k \sum_{j=1}^k f(x_j) - \alpha}^2: f \in \cF }.
\end{align*} 
For any function $g \in \cG$, we know that $\nrm{g} \le k$ and that $g$ is $k$-Lipschitz in the first argument; furthermore, by nonnegativity, $\cG$ satisfies \pref{ass:var-exp} with parameters $A=0$, $B = k$. Thus we apply \pref{thm:modified-bousquet} to $\cG$ to get that with probability at least $1-\delta$ for all $g\in \cG$,
\begin{align*}
    \abs{\En[g] - \wh{\En}_n[g]} \le C\prn*{k r^\star_n + \sqrt{r^\star_n (k\En[g])} + \sqrt{\frac{k \En[g]\prn*{\log\frac{1}{\delta} + c\log\log n}}{n}} +  \tfrac{k^2\prn*{\log\tfrac{1}{\delta} +  c\log\log n}}{n} }.
\end{align*}
where $r^\star_n$ is the critical radius and $C>0$ is an absolute numerical constant. Applying \pref{lem:phin-calc}, we can also calculate the critical radius as
\begin{align*}
    r^\star_n \le O\prn*{\frac{d \log(k n)}{n}},
\end{align*}
so therefore we have with probability at least $1-\delta$ for any $f \in \cF$:
\begin{align*}
    \abs*{ \wh{L}_\mathrm{SQ}(f) - L_\mathrm{SQ}(f)} \le \wt{O}\prn*{\frac{kd + k^2 \log\tfrac{1}{\delta}}{n} + \sqrt{\frac{ L_\mathrm{SQ}(f) \cdot k\prn*{d + \log\frac{1}{\delta} }}{n}}}. \numberthis\label{eq:zhat-convergence}
\end{align*}
Next, we show a uniform convergence bound which relates $\wh{B}(f)$ to $B(f)$. Consider the partition
\begin{align*}
    \cG^+ &\coloneqq \crl*{(x,y)\mapsto f(x) - y: f \in \cF, \En f \ge p},\\
    \cG^- &\coloneqq \crl*{(x,y)\mapsto y - f(x): f \in \cF, \En f < p}.
\end{align*}
The function class $\cG^+$ is $1$-Lipschitz, bounded in $[-1, +1]$, and satisfies \pref{ass:var-exp} with $A = 0$, $B = 1$. Applying \pref{thm:modified-bousquet} to $\cG^+$ and using \pref{lem:phin-calc} we get that with probability at least $1-\delta$ for any $f \in \cF$ such that $\En f \ge p$:
\begin{align*}
    \abs*{ \abs{\wh{\En}f - \wh{p}} - \abs{\En f - p}} \le \wt{O}\prn*{\sqrt{\frac{(\En f - p)\prn*{d + \log\frac{1}{\delta}}}{nk}} + \frac{d + \log\frac{1}{\delta}}{nk} }. \numberthis\label{eq:bhat-convergence}
\end{align*}
This implies
\begin{align*}
    (\wh{\En}f - \wh{p})^2 &\le (\En f - p)^2 + \wt{O}\prn*{\sqrt{\frac{(\En f - p)^{3}\prn*{d + \log\frac{1}{\delta}}}{nk}} + \frac{d + \log\frac{1}{\delta}}{nk} } \\
\Rightarrow \quad \wh{B}(f) &\le B(f) + \wt{O}\prn*{\sqrt{\frac{B(f)\prn*{d + \log\frac{1}{\delta}}}{n}} + \frac{d + \log\frac{1}{\delta}}{n} }. \numberthis\label{eq:bhat-convergence1}
\end{align*}
Likewise, for the reverse inequality we can also get that
\begin{align*}
    (\En f - p)^2 &\le (\wh{\En}f - \wh{p})^2  + \wt{O}\prn*{\abs{\wh{\En}f - \wh{p}} \cdot \sqrt{\frac{(\En f - p)\prn*{d + \log\frac{1}{\delta}}}{nk}} + \frac{d + \log\frac{1}{\delta}}{nk} } \\
    &\le (\wh{\En}f - \wh{p})^2  + \wt{O}\prn*{ \sqrt{\frac{(\En f - p)^3\prn*{d + \log\frac{1}{\delta}}}{nk}} + \frac{d + \log\frac{1}{\delta}}{nk} }  \\
\Rightarrow \quad B(f) &\le \wh{B}(f) + \wt{O}\prn*{\sqrt{\frac{B(f)\prn*{d + \log\frac{1}{\delta}}}{n}} + \frac{d + \log\frac{1}{\delta}}{n} }, \numberthis\label{eq:bhat-convergence2}
\end{align*}
where the second inequality uses the bound in Eq.~\eqref{eq:bhat-convergence}.

Combining Eq.~\eqref{eq:bhat-convergence1} and \eqref{eq:bhat-convergence2}, we get the two-sided bound for all $f\in \cF$ such that $\En f \ge p$:
\begin{align*}
    \abs{B(f) - \wh{B}(f)} \le \wt{O}\prn*{\sqrt{\frac{B(f)\prn*{d + \log\frac{1}{\delta}}}{n}} + \frac{d + \log\frac{1}{\delta}}{n} } \numberthis \label{eq:bhat-convergence3}
\end{align*} 

Following the same approach, we can show that the Eq.~\eqref{eq:bhat-convergence3} also holds for any $f \in \cF$ such that $\En f < p$, as the function class $\cG^-$ also is 1-Lipschitz, bounded in $[-1, +1]$ and satisfies \pref{ass:var-exp} with $A = 0, B=1$. Therefore we conclude that with probability at least $1-\delta$, Eq.~\eqref{eq:bhat-convergence3} holds uniformly for all $f \in \cF$.

Now we use Eqs.~\eqref{eq:zhat-convergence} and \eqref{eq:bhat-convergence3} to get
\begin{align*}
    \cL(\wh{f}_\mathrm{DSQ}) &= L_\mathrm{SQ}(\wh{f}_\mathrm{DSQ}) - B(\wh{f}_\mathrm{DSQ}) \\
    &\le \wh{L}_\mathrm{SQ}(\wh{f}_\mathrm{DSQ}) - \wh{B}(\wh{f}_\mathrm{DSQ}) \\
    &\quad\quad\quad + \wt{O}\prn*{\frac{kd + k^2 \log\tfrac{1}{\delta}}{n} + \sqrt{\frac{ L_\mathrm{SQ}(\wh{f}_\mathrm{DSQ}) \cdot k\prn*{d + \log\frac{1}{\delta} }}{n}} + \sqrt{\frac{B(\wh{f}_\mathrm{DSQ})\prn*{d + \log\frac{1}{\delta}}}{n}} } \\
    &\le  \wh{L}_\mathrm{SQ}(\wh{f}_\mathrm{DSQ}) - \wh{B}(\wh{f}_\mathrm{DSQ}) + \wt{O}\prn*{\frac{kd + k^2 \log\tfrac{1}{\delta}}{n} + \sqrt{\frac{ \cL(\wh{f}_\mathrm{DSQ}) \cdot k^2\prn*{d + \log\frac{1}{\delta} }}{n}} } \\
    &\le \wh{L}_\mathrm{SQ}(f^\star) - \wh{B}(f^\star) + \wt{O}\prn*{\frac{kd + k^2 \log\tfrac{1}{\delta}}{n} + \sqrt{\frac{ \cL(\wh{f}_\mathrm{DSQ}) \cdot k^2\prn*{d + \log\frac{1}{\delta} }}{n}} } \\
    &\le \cL(f^\star) + \wt{O}\prn*{\frac{kd + k^2 \log\tfrac{1}{\delta}}{n} + \sqrt{\frac{ \cL(\wh{f}_\mathrm{DSQ}) \cdot k^2\prn*{d + \log\frac{1}{\delta} }}{n}} }.
\end{align*}
The third inequality uses the fact that for any predictor $f$, we have $k L_\mathrm{SQ}(f) + B(f) =  k \cL(f) + (k+1)B(f) \le (k^2 + k - 1) \cL(f)$. The fourth and fifth inequalities uses the optimality of $\wh{f}_\mathrm{DSQ}$ and $f^\star$ for the empirical and population minimization problems respectively.

Finally, using the inequality \pref{fact:ineq} we get that
\begin{align*}
    \cL(\wh{f}_\mathrm{DSQ}) &\le \cL(f^\star) + \wt{O}\prn*{\frac{kd + k^2 \log\tfrac{1}{\delta}}{n} + \sqrt{\frac{ \cL(\wh{f}_\mathrm{DSQ}) \cdot k^2\prn*{d + \log\frac{1}{\delta} }}{n}} } \\
    &\le \cL(f^\star) + \wt{O} \prn*{\frac{k^2\prn*{d+ \log\tfrac{1}{\delta}}}{n}  + \sqrt{\cL(f^\star) + \frac{kd + k^2 \log\tfrac{1}{\delta}}{n} } \cdot \sqrt{\frac{ k^2\prn*{d + \log\frac{1}{\delta} }}{n}}
    } \\
    &= \cL(f^\star) + \wt{O} \prn*{\frac{k^2\prn*{d+ \log\tfrac{1}{\delta}}}{n}  + \sqrt{\frac{ \cL(f^\star)\cdot k^2\prn*{d + \log\frac{1}{\delta} }}{n}}
    }
\end{align*}

This concludes the proof of \pref{thm:lossbagsq-opt-rate}.

\section{Proofs for \pref{sec:easyllp-main-text}}\label{app:proof-easyllp}


\subsection{Offset Loss Class}
To analyze the performance of $\wh{f}_\mathrm{EZ}$, we consider an \emph{offset} version of the EasyLLP loss estimate. Specifically, let $f^\star \coloneqq \inf_{f\in \cF} \cL(f)$. We define the offset loss
\begin{align*}
    \wh{\Gamma}(f, f^\star) \coloneqq \wh{L}_\mathrm{EZ}(f) - \wh{L}_\mathrm{EZ}(f^\star) = \frac{1}{n} \sum_{i=1}^n \prn*{k (2\alpha_i - 2p) + (2p-1)} \cdot  \prn*{\frac{1}{k}\sum_{j=1}^k f^\star(x_{i,j}) - f(x_{i,j})}
\end{align*}
Moreover, we use $\Gamma(f, f^\star)$ to denote its expectation, and we have $\En \brk{\wh{\Gamma}(f, f^\star)} = \cL(f) - \cL(f^\star)$. Clearly, minimizing the original EasyLLP loss is equivalent to minimizing $\wh{\Gamma}(f, f^\star)$. 

We also define the (empirical) second moment of the $\Gamma$ function as
\begin{align*}
    \wh{\Gamma}^2(f, f^\star) \coloneqq \frac{1}{n} \sum_{i=1}^n \prn*{k (2\alpha_i - 2p) + (2p-1)}^2 \cdot  \prn*{\frac{1}{k}\sum_{j=1}^k f^\star(x_{i,j}) - f(x_{i,j})}^2, 
\end{align*}
and use $\Gamma^2(f, f^\star)$ to denote $\En \brk*{\wh{\Gamma}^2(f, f^\star)}$.

We show that the offset loss class
\begin{align*}
    \cG \coloneqq \crl*{(B, \alpha) \mapsto \prn*{k (2\alpha - 2p) + (2p-1)} \cdot  \prn*{\frac{1}{k}\sum_{j=1}^k f^\star(x_j) - f(x_j)}: f \in \cF}
\end{align*}
satisfies \pref{ass:var-exp}.

\begin{lemma}\label{lem:class-var-exp}
The function class $\cG$ satisfies \pref{ass:var-exp} with $A = 8k^2 \cL(f^\star)$ and $B = 4k^2$.
\end{lemma}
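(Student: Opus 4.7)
The plan is to verify both conditions of \pref{ass:var-exp} for the class $\cG$ by working directly with the product decomposition $g_f(B,\alpha) = W \cdot D_f$, where $W \ldef k(2\alpha - 2p) + (2p-1)$ is the bag-level weight and $D_f \ldef \frac{1}{k}\sum_{j=1}^k \prn*{f^\star(x_j) - f(x_j)}$ depends only on the features in the bag.

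The nonnegativity condition $\En[g_f] \ge 0$ follows immediately from the unbiasedness of the EasyLLP loss estimator (Proposition 4.2 of \cite{busa2023easy}): it gives $\En[g_f] = \En[\ell_\mathrm{EZ}(f, (B,\alpha))] - \En[\ell_\mathrm{EZ}(f^\star, (B,\alpha))] = \cL(f) - \cL(f^\star)$, which is nonnegative because $f^\star$ was chosen to minimize $\cL$ over $\cF$.

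For the second-moment bound I would control $W^2$ and $D_f^2$ separately and then assemble. Rewriting $W = 2k\alpha + 2(1-k)p - 1$, for $k \ge 1$ this expression is increasing in $\alpha$ and nonincreasing in $p$, so its extrema over $(\alpha,p) \in [0,1]^2$ occur at opposite corners $(\alpha,p) = (1,0)$ and $(0,1)$, giving $W \in [1-2k,\, 2k-1]$ and hence $W^2 \le (2k-1)^2 \le 4k^2$. For $D_f^2$, setting $T_j \ldef f^\star(x_j) - f(x_j) \in \{-1,0,1\}$, Jensen's inequality yields $D_f^2 \le \frac{1}{k}\sum_j T_j^2 = \frac{1}{k}\sum_j \ind{f^\star(x_j) \ne f(x_j)}$; taking expectations and using the classification triangle inequality $\ind{f^\star(x) \ne f(x)} \le \ind{f^\star(x) \ne y} + \ind{f(x) \ne y}$ gives $\En[D_f^2] \le \cL(f^\star) + \cL(f)$.

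Combining the two bounds, $\En[g_f^2] \le 4k^2 \prn*{\cL(f) + \cL(f^\star)} = 4k^2 \prn*{2\cL(f^\star) + \En[g_f]} = 8k^2 \cL(f^\star) + 4k^2 \En[g_f]$, as required. The only real subtlety is the sharp $W^2 \le 4k^2$ bound: a naive triangle inequality $|W| \le 2k|\alpha-p| + |2p-1| \le 2k+1$ only yields $W^2 \le (2k+1)^2$, which would spoil the stated constants; the improvement relies on the observation that the contributions $2k(\alpha - p)$ and $(2p-1)$ cannot simultaneously attain extrema of the same sign, so the worst case is strictly smaller than what the triangle inequality suggests.
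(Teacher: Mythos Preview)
Your proof is correct and follows essentially the same approach as the paper: both use the deterministic bound $|W| \le 2k-1$, Jensen's inequality to reduce $D_f^2$ to the single-sample disagreement, and then control $\En[(f^\star(x)-f(x))^2]$ by $\cL(f)+\cL(f^\star)$ (the paper via the cross-term $(f^\star(x)-y)(y-f(x)) \le 0$, you via the equivalent indicator triangle inequality). The only cosmetic difference is that the paper invokes independence of the $x_j$ to bound $\En[(\sum_j T_j)^2] \le k^2 \En[T^2]$, whereas your pointwise Jensen step achieves the same bound without needing independence.
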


\begin{proof}[Proof.]
First we observe that because $f^\star = \argmin_{f \in \cF} \cL(f)$, we have $\Gamma(f, f^\star) \ge 0$ for all $f \in \cF$. For the variance bound we can compute that
\begin{align*}
    \Gamma^2(f, f^\star) &\le 4 \cdot \En \brk*{ \prn*{\sum_{j=1}^k f^\star(x_j) - f(x_j)}^2 }
    \le 4k^2 \En \brk*{ \prn*{f^\star(x) - f(x)}^2 } \\
    &= 4k^2 \En \brk*{ \prn*{f^\star(x) - y + y -  f(x)}^2 } \le 4k^2 \prn*{\cL(f) + \cL(f^\star)}  \\
    &= 8k^2 \cL(f^\star) + 4k^2 \Gamma(f, f^\star).
\end{align*}
The first inequality uses the fact that $\prn*{k (2\alpha - 2p) + (2p-1)} \in [-2k+1, 2k-1]$. The second inequality uses the independence of the $\crl{x_j}$ as well as Jensen's inequality. The third inequality uses the fact that $(f(x) - y)^2 = \ind{f(x) \ne y}$, and that the cross terms satisfy $(f^\star(x) - y)(y - f(x)) \le 0$.

This concludes the proof of \pref{lem:class-var-exp}.
\end{proof}

\subsection{Uniform Convergence for Offset Loss Class}

Now we are ready to prove a uniform convergence bound for the function class $\cG$.
\begin{lemma}[\pref{lem:gamma-uc}, restated]\label{lem:gamma-uc-apdx}
Let $f^\star = \argmin_{f \in \cF} \cL(f)$. Then with probability at least $1-\delta$ we have for all $f \in \cF$
\begin{align*}
    \abs*{\wh{\Gamma}(f, f^\star) - \Gamma(f, f^\star)} \le \wt{O} \prn*{\frac{k d + k^2 \log \frac{1}{\delta}}{n} + \sqrt{\frac{ \cL(f) \cdot k^2 \prn*{d+ \log \frac{1}{\delta}}}{n} } }.
\end{align*}
\end{lemma}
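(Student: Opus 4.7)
The plan is to apply \pref{thm:modified-bousquet} (the localized uniform-convergence bound for function classes satisfying a variance-expectation condition with an additive offset) to the offset loss class $\cG$. \pref{lem:class-var-exp} has already verified that $\cG$ satisfies \pref{ass:var-exp} with parameters $A = 8k^2 \cL(f^\star)$ and $B = 4k^2$, and each $g \in \cG$ is uniformly bounded by $b = 2k$ since $|k(2\alpha-2p)+(2p-1)| \le 2k-1$ and $|\bar f^\star - \bar f| \le 1$.

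The next step is to bound the critical radius $r_n^\star$. Each $g_f \in \cG$ is a $(2k)$-Lipschitz function of the averaged prediction $\tfrac{1}{k}\sum_j f(x_j)$, since $f^\star$ and the label-proportion multiplier $k(2\alpha - 2p)+(2p-1)$ can be treated as deterministic functions of $(B,\alpha)$. Consequently, the covering-number chaining argument underlying \pref{lem:phin-calc} carries over almost verbatim with Lipschitz constant $\lambda = 2k$: an empirical $L_2$-cover of $\cF$ at scale $\epsilon/(2k)$ on $nk$ points induces an $\epsilon$-cover of $\cG$ on $n$ bags, so that \pref{lem:refined-dudley} combined with \pref{lem:covering-numbers} yields $r_n^\star \le \wt{O}(d/n)$.

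Plugging $b = 2k$, $A = 8k^2 \cL(f^\star)$, $B = 4k^2$, and $r_n^\star \le \wt O(d/n)$ into the conclusion of \pref{thm:modified-bousquet} gives a bound on $|\wh\Gamma(f,f^\star) - \Gamma(f,f^\star)|$ built from four pieces: the scaled radius $b\, r_n^\star$, a cross term $\sqrt{r_n^\star(A + B\,\Gamma(f,f^\star))}$, a Bernstein-type term $\sqrt{B\,\Gamma(f,f^\star)\log(1/\delta)/n}$, and the low-order offset $r_0$. The final simplification uses the identity $\cL(f) = \cL(f^\star) + \Gamma(f,f^\star)$ together with $\cL(f^\star) \le \cL(f)$ to eliminate $\cL(f^\star)$ in favor of $\cL(f)$ (in particular $A + B\,\Gamma(f,f^\star) \le 8k^2 \cL(f)$), collapsing all four pieces into the claimed $\wt O\prn*{(kd + k^2\log(1/\delta))/n + \sqrt{\cL(f)\cdot k^2(d+\log(1/\delta))/n}}$ up to absolute constants.

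The main technical subtlety is the irreducible additive offset $A = 8k^2 \cL(f^\star) > 0$ in the variance bound, which rules out a direct appeal to the classical Bousquet bound (which assumes $\En[g^2] \le B\En[g]$). This is precisely why \pref{thm:modified-bousquet} extends Bousquet's theorem with an $A$ offset and a corresponding additive $r_0$ term, and it also explains why working with the offset empirical loss $\wh\Gamma(\cdot,f^\star)$ rather than with $\wh L_\mathrm{EZ}$ directly is essential: \pref{prop:easyllp-slow-est} shows that $\wh L_\mathrm{EZ}(f^\star)$ fluctuates at the $\Omega(1/\sqrt n)$ rate, but subtracting it out cancels the dominant, $f^\star$-driven noise so that $\wh\Gamma(f,f^\star)$ admits an optimistic rate scaling with $\cL(f)$ rather than with an absolute constant.
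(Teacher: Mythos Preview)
Your proposal is correct and follows essentially the same route as the paper: verify the variance--expectation condition via \pref{lem:class-var-exp}, apply \pref{thm:modified-bousquet} with the boundedness $\nrm{g}_\infty = O(k)$, bound the critical radius $r_n^\star = \wt O(d/n)$ through the $(2k)$-Lipschitz structure and \pref{lem:phin-calc}, and then simplify using $\cL(f^\star) \le \cL(f)$ and $\Gamma(f,f^\star) \le \cL(f)$. Your treatment is in fact slightly more careful than the paper's in two places: you correctly take $b = 2k$ (the paper writes $\nrm{g}_\infty \le k$, which understates the range $[-2k+1,2k-1]$ of the multiplier), and you explicitly note that \pref{lem:phin-calc} must be reinvoked rather than cited verbatim because $\cG$ depends on $(B,\alpha)$ through $\bar f^\star$ as well as through $\bar f$ and $\alpha$.
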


\begin{proof}[Proof.]
By \pref{lem:class-var-exp}, we know that the function class $\cG$ satisfies \pref{ass:var-exp} with $A = 8k^2 \cL(f^\star)$ and $B = 4k^2$. We also know that $\nrm{g}_\infty \le k$ for all $g\in \cG$. Therefore, we can apply \pref{thm:modified-bousquet} to get that with probability at least $1-\delta$ for all $f \in \cF$,
\begin{align*}
    \hspace{1em}&\hspace{-1em} \abs*{\wh{\Gamma}(f, f^\star) - \Gamma(f, f^\star)} \\
    &\le C\prn*{k r^\star_n + \sqrt{r^\star_n \cdot 2k^2\cL(f)} + \sqrt{\frac{k^2 \cL(f) \prn*{\log\frac{1}{\delta} + c\log\log n}}{n}} + \tfrac{k^2 \prn*{\log\tfrac{1}{\delta} +  c\log\log n}}{n} }. \numberthis\label{eq:gamma-uc}
\end{align*}
where $r^\star_n$ is the critical radius and $C>0$ is an absolute numerical constant. 

Since the function $(B, \alpha) \mapsto \prn*{k (2\alpha - 2p) + (2p-1)} \cdot  \prn*{\frac{1}{k}\sum_{j=1}^k f^\star(x_j) - f(x_j)}$ is $2k$-Lipschitz, we have by \pref{lem:phin-calc} that 
\begin{align*}
    r^\star_n = O\prn*{\frac{d \log(k n)}{n}}.
\end{align*}
Plugging this into Eq.~\eqref{eq:gamma-uc} we get the conclusion of \pref{lem:gamma-uc}.
\end{proof}

\subsection{Proof of \pref{thm:easyllp-opt-rate}}\label{app:proof-theorem-easyllp}
Now we will prove the final generalization bound for EasyLLP. Using \pref{lem:gamma-uc}, we get that
\begin{align*}
    \Gamma(\wh{f}_\mathrm{EZ}, f^\star) &\le \wh{\Gamma}(\wh{f}_\mathrm{EZ}, f^\star) + \wt{O} \prn*{\frac{k d + k^2 \log \frac{1}{\delta}}{n} + \sqrt{\frac{ \cL(\wh{f}_\mathrm{EZ}) \cdot k^2 \prn*{d+ \log \frac{1}{\delta}}}{n} }   } \\
    &= \wh{\Gamma}(\wh{f}_\mathrm{EZ}, f^\star) + \wt{O} \prn*{\frac{k d + k^2 \log \frac{1}{\delta}}{n} 
 + \sqrt{\frac{ \prn*{\Gamma(\wh{f}_\mathrm{EZ}, f^\star) + \cL(f^\star)} \cdot k^2 \prn*{d+ \log \frac{1}{\delta}}}{n} } } 
\end{align*}
In fact, we know that since $f^\star \in \cF$, we must have $\wh{\Gamma}(\wh{f}_\mathrm{EZ}, f^\star) \le \wh{\Gamma}(f^\star, f^\star) = 0$, so using \pref{fact:ineq} we can further upper bound this as
\begin{align*}
    \Gamma(\wh{f}_\mathrm{EZ}, f^\star) 
    &\le \wt{O} \prn*{ \frac{k^2 \prn*{d + \log \frac{1}{\delta} }}{n} 
 + \sqrt{\frac{ \cL(f^\star) \cdot k^2 \prn*{d+ \log \frac{1}{\delta}}}{n} }}.
\end{align*}
Plugging in the definition of $\Gamma$ proves the bound.

\subsection{Proof of \pref{corr:easyllp-unknownp}}\label{app:sample-splitting}
We describe the sample splitting version of the EasyLLP learning rule, which allows us to use an estimate $\wh{p}$ instead of the true marginal label proportion $p$. We assume that we are given a dataset of size $2n$, denoted $S = \crl{(B_i, \alpha_i)}_{i=1}^{2n}$. We split it randomly into two equally-sized parts $S$ and $S'$. In the proof, we will use $i\in[n]$ to index bags in $S$ and $i' \in [n]$ to index bags in $S'$.
\begin{enumerate} 
    \item Using $S$, estimate marginal label proportion $\wh{p} = \frac{1}{n}\sum_{i=1}^{n} \alpha_i$.
    \item Return $\wh{f}_\mathrm{EZ} \coloneqq \argmin_{f \in \cF} \frac{1}{n} \sum_{i'=1}^{n} \ell_\mathrm{EZ}\prn*{f, (B_{i'},\alpha_{i'})}$ using $S'$, where $\ell_\mathrm{EZ}\prn{\cdot, \cdot}$ is defined with the plug-in estimate $\wh{p}$ instead of $p$:
    \begin{align*}
        \ell_\mathrm{EZ}\prn*{f, (B,\alpha)} = \prn*{k(\alpha - \wh{p}) + \wh{p}} \cdot \prn*{1- \frac{1}{k} \sum_{j=1}^k f(x_j)} + \prn*{k(\wh{p} - \alpha) + (1-\wh{p})} \cdot \prn*{\frac{1}{k} \sum_{j=1}^k f(x_j)}.
    \end{align*}
\end{enumerate}
For any $q \in [0,1]$ let us define the quantity
\begin{align*}
    \wh{\Gamma}_{q}(f, f^\star) \coloneqq \frac{1}{n} \sum_{i'=1}^{n} \prn*{k (2\alpha_{i'} - 2q) + (2q-1)} \cdot  \prn*{\frac{1}{k}\sum_{j=1}^k f^\star(x_{i',j}) - f(x_{i',j})},
\end{align*}
which represents the empirical offset loss estimated on $S'$ if we plugged in the value $q$ for the marginal label proportion.

By \pref{lem:gamma-uc} we know that for any $f \in \cF$
\begin{align*}
\Gamma_p(f, f^\star) &\le \wh{\Gamma}_p(f, f^\star) + \wt{O} \prn*{\frac{k d + k^2 \log \frac{1}{\delta}}{n} + \sqrt{\frac{ \cL(f) \cdot k^2 \prn*{d+ \log \frac{1}{\delta}}}{n} } }.
\end{align*}
From here, we need to relate $\wh{\Gamma}_{p}(f, f^\star)$ to $\wh{\Gamma}_{\wh{p}}(f, f^\star)$. We can bound the difference as:
\begin{align*}
    \abs*{\wh{\Gamma}_{p}(f, f^\star) - \wh{\Gamma}_{\wh{p}}(f, f^\star)} \le (2k-2) \cdot \abs*{\wh{p} - p} \cdot 
    \Big\lvert \underbrace{\frac{1}{n}\sum_{i'=1}^{n} \frac{1}{k} \sum_{j=1}^k f^\star(x_{i',j}) - f(x_{i',j}) }_{\eqqcolon \Xi(f)} \Big\rvert.
\end{align*}
By Hoeffding's inequality (\pref{thm:hoeffding-inequality}), with probability at least $1-\delta$ over $S$, we have $\abs*{\wh{p} - p} \le \sqrt{\frac{2\log(2/\delta)}{nk}}$. Henceforth we condition on this event holding over $S$.

Now we prove a uniform convergence bound on $\abs{\Xi(\cdot)}$ for all $f \in \cF$. For any $f \in \cF$ we write that
\begin{align*}
    \abs{\Xi(f)} &= \abs*{ \frac{1}{n}\sum_{i'=1}^{n} \frac{1}{k} \sum_{j=1}^k f^\star(x_{i',j}) - f(x_{i',j}) } \le \sqrt{\frac{1}{nk}
    } \cdot \sqrt{\sum_{i'=1}^{n} \sum_{j=1}^k \prn*{f^\star(x_{i',j}) - f(x_{i',j})}^2}  \\
    &\le \sqrt{\frac{1}{nk}
    } \cdot \sqrt{\sum_{i'=1}^{n} \sum_{j=1}^k \prn*{f^\star(x_{i',j}) - y_{i',j}}^2 + \prn{y_{i',j}-f(x_{i',j})}^2} = \sqrt{\wh{L}(f^\star) + \wh{L}(f)}.
\end{align*}
Here, we use $\wh{L}(\cdot)$ to denote the empirical classification loss on $S'$. The first inequality follows by Cauchy-Schwarz. The second inequality uses the fact that for $a,b,c \in \crl{0,1}$ we have $(a-b)(b-c) \le 0$. The last equality follows because $\prn{f(x)-y}^2 = \ind{f(x) \ne y}$.
Now we use the standard uniform convergence guarantee: with probability at least $1-\delta$ over $S'$, for any $f \in \cF$:
\begin{align*}
    \abs{\cL(f) - \wh{L}(f)} \le \wt{O} \prn*{\cL(f) + \frac{d + \log\frac{1}{\delta}}{nk}}.
\end{align*}
So therefore with probability at least $1-\delta$ over $S'$ we have for all $f \in \cF$:
\begin{align*}
    \abs{\Xi(f)} \le \sqrt{\wh{L}(f^\star) + \wh{L}(f)} \le \wt{O}\prn*{ \sqrt{\cL(f) + \frac{d + \log\frac{1}{\delta}}{nk}} }.
\end{align*}
Thus with probability at least $1-2\delta$ over the draws of $S_1$ and $S_2$, we have for all $f \in \cF$:
\begin{align*}
    \abs*{\wh{\Gamma}_{p}(f, f^\star) - \wh{\Gamma}_{\wh{p}}(f, f^\star)} \le \wt{O}\prn*{\frac{d + \log\frac{1}{\delta}}{n} + \sqrt{\frac{\cL(f)\cdot k\log\frac{1}{\delta}}{n} }  } \numberthis\label{eq:p-vs-phat-gamma}
\end{align*}
Now we are ready to prove the final guarantee. Similar to the proof of \pref{thm:easyllp-opt-rate} we compute that
\begin{align*}
\Gamma_p(\wh{f}_\mathrm{EZ}, f^\star) &\le \wh{\Gamma}_p(\wh{f}_\mathrm{EZ}, f^\star) + \wt{O} \prn*{\frac{k d + k^2 \log \frac{1}{\delta}}{n} + \sqrt{\frac{ \cL(\wh{f}_\mathrm{EZ}) \cdot k^2 \prn*{d+ \log \frac{1}{\delta}}}{n} } }\\
&\le \wh{\Gamma}_{\wh{p}}(\wh{f}_\mathrm{EZ}, f^\star) + \wt{O} \prn*{\frac{k d + k^2 \log \frac{1}{\delta}}{n} + \sqrt{\frac{ \cL(\wh{f}_\mathrm{EZ}) \cdot k^2 \prn*{d+ \log \frac{1}{\delta}}}{n} }  } \\
&\le \wt{O} \prn*{\frac{k d + k^2 \log \frac{1}{\delta}}{n} + \sqrt{\frac{ \cL(\wh{f}_\mathrm{EZ}) \cdot k^2 \prn*{d+ \log \frac{1}{\delta}}}{n} }  },
\end{align*}
where the second line uses Eq.~\eqref{eq:p-vs-phat-gamma} and the last line follows from the fact that $\wh{\Gamma}_{\wh{p}}(\wh{f}_\mathrm{EZ}, f^\star) \le \wh{\Gamma}_{\wh{p}}(f^\star, f^\star) = 0$. From here, the proof concludes similarly as the proof of \pref{thm:easyllp-opt-rate} in \pref{app:proof-theorem-easyllp} by using \pref{fact:ineq} and plugging in the definition of $\Gamma_p(\cdot, f^\star)$.
\section{Proof of \pref{thm:main-lower-bound}}\label{app:lower_bounds}
First, we describe the construction, then separately prove the lower bound for both the realizable and agnostic settings.

\subsection{Construction}We define the instance space $\cX = \crl{0,1}^{2^d}$ and let $\cF = \crl{f_i: i \in [2^d]}$ where the function $f_i$ is defined as $f_i(x) = x[i]$. It is clear that $\VC(\cF) \le d$. Now we define a family of distributions $\cD_i$ for $i \in [2^d]$. For some parameter choice $\gamma \in [0, 1/2]$, each $\cD_i$ is defined as follows: the example $x \sim \unif(\crl{0,1}^{2^d})$ and $y = f_i(x)$ with probability $1/2 + \gamma$, $y= 1-f_i(x)$ with probability $1/2 - \gamma$. For any predictor $f$ the classification loss for distribution $\cD_i$ can be written as
\begin{align*}
    \cL_{\cD_i}(f) - \inf_{f' \in \cF} \cL_{\cD_i}(f') = 2\gamma \cdot \En_{x \sim \unif\prn{\crl{0,1}^{2^d}} } 
    \brk*{\ind{f(x) \ne x[i]}}.
\end{align*}
Furthermore, for any predictor $f$, as well as distributions $\cD_i$ and $\cD_j$ we have the separation condition
\begin{align*}
    \hspace{2em}&\hspace{-2em}\cL_{\cD_i}(f) - \inf_{f' \in \cF} \cL_{\cD_i}(f') + \cL_{\cD_j}(f) - \inf_{f' \in \cF} \cL_{\cD_j}(f') \\
    &= 2\gamma \cdot \En_{x \sim \unif\prn{\crl{0,1}^{2^d}} } 
    \brk*{\ind{f(x) \ne x[i]} + \ind{f(x) \ne x[j]}} \ge \gamma. \numberthis\label{eq:separation-condition}
\end{align*}

\subsection{Realizable Setting} For the realizable setting result, we use the construction with $\gamma = 1/2$. We claim that for any learning rule for LLP that PAC learns $\cF$, there exists a distribution $\cD_i$ for which it requires $n = \Omega\prn{d/\log k}$ samples in expectation. Let us define $\bar{\cD}$ to be the averaged distribution where one first draws $i \sim \unif([2^d])$ then samples the bag $(B,\alpha) \sim \cD_i$. Using the separation condition  of Eq.~\eqref{eq:separation-condition}, we invoke Fano's inequality (e.g., Lemma 3 of \cite{yu1997assouad}) to get 
\begin{align*}
    \inf_{\wh{f}} \sup_{i\in [2^d]} \En_{\cD_i} \brk*{\cL_{\cD_i}\prn*{\wh{f}}} \ge \frac{1}{4} \prn*{1 - \frac{n \cdot \frac{1}{2^d} \sum_{i=1}^{2^d} \mathrm{KL}\prn*{\cD_i \Vert \bar{\cD}} + \log 2}{d}} = \frac{1}{4} \prn*{1 - \frac{n \cdot \mathrm{KL}\prn*{\cD_1 \Vert \bar{\cD}} + \log 2}{d}}, \numberthis\label{eq:fano-lb}
\end{align*}
where the equality follows by symmetry of the distributions $\cD_i$.

From here we need to estimate $\mathrm{KL}\prn*{\cD_1 \Vert \bar{\cD}}$. By chain rule for KL divergence, we see that
\begin{align*}
    \mathrm{KL}\prn*{\cD_1 \Vert \bar{\cD}} &= \mathrm{KL}\prn*{\Pr_{\cD_1}[B] \Vert \Pr_{\bar{\cD}}[B]} + \En_{B \sim \cD_1 } \brk*{ \mathrm{KL}\prn*{\Pr_{\cD_1}[\alpha ~|~ B] \Vert \Pr_{\bar{\cD}}[\alpha ~|~ B]}} \\
    &=\En_{B \sim \cD_1 } \brk*{ \mathrm{KL}\prn*{\Pr_{\cD_1}[\alpha ~|~ B] \Vert \Pr_{\bar{\cD}}[\alpha ~|~ B]}} \\
    &= \En_{B \sim \unif(\crl{0,1}^{2^d}) } \brk*{ \mathrm{KL}\prn*{\Pr_{\cD_1}[\alpha ~|~ B] \Vert \Pr_{\bar{\cD}}[\alpha ~|~ B]}},
\end{align*}
since all of the $\cD_i$ have the same marginal over $\cX$. Fix a bag $B =\crl{x_1,\cdots, x_k}$, and let us define the vector $z = \frac{1}{k}\sum_{j=1}^k x_j \in [0,1]^{2^d}$. We calculate that
\begin{align*}
    \mathrm{KL}\prn*{\Pr_{\cD_1}[\alpha ~|~ B] \Vert \Pr_{\bar{\cD}}[\alpha ~|~ B]} &= \sum_{\alpha \in \crl{0, \tfrac1k, \cdots, 1}} \Pr_{\alpha \sim \cD_1} \brk*{\alpha ~|~ B} \cdot \log \frac{\Pr_{\alpha \sim \cD_1} \brk*{\alpha ~|~ B}}{\Pr_{\alpha \sim \bar{\cD}} \brk*{\alpha ~|~ B} } \\
    &= \log \frac{1}{\Pr_{\alpha \sim \bar{\cD}} \brk*{\alpha = z[1] ~|~ B} } \\
    &= \log \frac{1}{\frac{1}{2^d} + \frac{1}{2^d} \sum_{i > 1} \Pr_{\alpha \sim \cD_i} \brk*{ \alpha = z[1] ~|~ B}} \\
    &= \log \frac{1}{\frac{1}{2^d} + \frac{1}{2^d} \sum_{i > 1} \ind{z[i] = z[1]}} \\
    &\le \min \crl*{d, \log \frac{2^d}{\sum_{i > 1}   \ind{z[i] = z[1]}} }.
\end{align*}
The second line follows from the fact that once we fix $B$, the value of $\alpha = z[1]$ is deterministic under $\cD_1$.
Putting it together we get that, we get
\begin{align*}
    \mathrm{KL}\prn*{\cD_1 \Vert \bar{\cD}} \le \En_{B \sim \unif(\crl{0,1}^{2^d})} \brk*{ \min \crl*{d, \log \frac{2^d}{\sum_{i > 1}   \ind{z[i] = z[1]}} } }.
\end{align*}
Observe that $k \cdot z[i]$ is distributed as independent $\mathrm{Bin}(k,1/2)$ variables for all $i \in [2^d]$. Using \pref{lem:discrete_iid}, we have for all $i > 1$ that $\Pr_{B}\brk{ z[i] = z[1]} \ge 1/(k+1)$, since the random variable $z[i]$ takes at most $k+1$ values. Thus, applying Chernoff bounds we have for any $\delta \in (0,1)$,
\begin{align*}
    \Pr_B \brk*{ \sum_{i > 1}   \ind{z[i] = z[1]} \ge \prn*{1-\delta} \frac{2^d - 1}{k+1}} \ge 1 - \exp\prn*{-\frac{\delta^2}{2} \cdot \frac{2^d - 1}{k+1}}.
\end{align*}
Let us pick $\delta = \sqrt{2(k+1) \log d / (2^d-1)}$; by our assumption on $k$ we have $\delta \le 1/2$. This guarantees that the above event, call it $\cE$, happens with probability at least $1- 1/d$. Therefore, we get 
\begin{align*}
    \mathrm{KL}\prn*{\cD_1 \Vert \bar{\cD}} \le d \cdot \frac{1}{d} + \log \frac{2^d \cdot 2(k+1)}{2^{d}-1} \le 2 + \log (k+1). \numberthis\label{eq:kl-ub}
\end{align*}
Therefore, plugging in Eq.~\eqref{eq:kl-ub} into Eq.~\eqref{eq:fano-lb} we see that
\begin{align*}
    \inf_{\wh{f}} \sup_{i\in [2^d]} \En_{\cD_i} \brk*{\cL_{\cD_i}\prn*{\wh{f}}} \ge \frac{1}{4} \prn*{1 - \frac{n \prn*{2 + \log (k+1)} + \log 2}{d}}.
\end{align*}
For $n = C'd/(\log (k+1))$ where $C' > 0$ is a sufficiently small constant we have $\inf_{\wh{f}} \sup_{i\in [2^d]} \En_{\cD_i} \brk*{\cL_{\cD_i}\prn*{\wh{f}}} \ge \frac{1}{8}$. Fix any learning rule $\wh{f}$, and let $\cD_{i^\star}$ be the distribution which witnesses the supremum. We have
\begin{align*}
    \frac{1}{8} \le \En_{\cD_{i^\star}} \brk*{\cL_{\cD_{i^\star}}\prn*{\wh{f}}} \le \Pr_{\cD_{i^\star}} \brk*{\cL_{\cD_{i^\star}}\prn*{\wh{f}} > \frac{1}{16}} + \frac{1}{16} \cdot \Pr_{\cD_{i^\star}} \brk*{\cL_{\cD_{i^\star}}\prn*{\wh{f}} \le \frac{1}{16}}
\end{align*}
which implies that $\Pr_{\cD_{i^\star}} \brk*{\cL_{\cD_{i^\star}}\prn*{\wh{f}} > \frac{1}{16}} \ge 1/15$. We conclude that any learning rule which PAC learns $\cF$ with parameters $(\eps, \delta) = (1/16, 1/15)$ requires $n = \Omega(d/\log k)$ samples.

\subsection{Agnostic Setting}

For the agnostic setting result we use the construction with $\gamma = \eps$. Again, by Fano's inequality we get
\begin{align*}
    \inf_{\wh{f}} \sup_{i\in [2^d]} \En_{\cD_i} \brk*{\cL_{\cD_i}\prn*{\wh{f}} - \inf_{f' \in \cF} \cL_{\cD_i}(f') } &\ge \frac{\eps}{2} \prn*{1 - \frac{n \cdot \frac{1}{2^d} \sum_{i=1}^{2^d} \mathrm{KL}\prn*{\cD_i \Vert \bar{\cD}} + \log 2}{d}} \\
    &= \frac{\eps}{2} \prn*{1 - \frac{n \cdot \mathrm{KL}\prn*{\cD_1 \Vert \bar{\cD}} + \log 2}{d}}, \numberthis\label{eq:fano-lb-agnostic}
\end{align*}
where the equality follows by symmetry of the distributions $\cD_i$.

From here we need to estimate $\mathrm{KL}\prn*{\cD_1 \Vert \bar{\cD}}$. By chain rule for KL divergence, we see that
\begin{align*}
    \mathrm{KL}\prn*{\cD_1 \Vert \bar{\cD}} &= \En_{B \sim \unif(\crl{0,1}^{2^d}) } \brk*{ \mathrm{KL}\prn*{\Pr_{\cD_1}[\alpha ~|~ B] \Vert \Pr_{\bar{\cD}}[\alpha ~|~ B]}}.
\end{align*}

From here we can bound this in two ways. The first way is to use the data-processing inequality for KL divergence. Observe that in distribution $\cD_i$, the label proportion $\alpha$ is distributed as $\mathrm{Bin}(kz[i], 1/2 +\eps) + \mathrm{Bin}(k-kz[i], 1/2-\eps)$. Therefore, by the data processing inequality
\begin{align*}
    \mathrm{KL}\prn*{\Pr_{\cD_1}[\alpha ~|~ B] \Vert \Pr_{\bar{\cD}}[\alpha ~|~ B]} &\le \mathrm{KL}\prn*{\Pr_{\cD_1}[\alpha_\mathrm{clean} ~|~ B] \Vert \Pr_{\bar{\cD}}[\alpha_\mathrm{clean} ~|~ B]} \\
    &\le \min \crl*{d, \log \frac{2^d}{\sum_{i > 1}   \ind{z[i] = z[1]}} },
\end{align*}
where $\alpha_\mathrm{clean} = z[i]$ in distribution $\cD_i$. From here, the proof proceeds similarly as in the realizable setting. We get the lower bound of $\Omega(d/\log k)$, with no dependence on $\eps$.

Alternatively, we can directly calculate the bound on the KL divergence:
\begin{align*}
    \hspace{2em}&\hspace{-2em}\mathrm{KL}\prn*{\Pr_{\cD_1}[\alpha ~|~ B] \Vert \Pr_{\bar{\cD}}[\alpha ~|~ B]} \\
    &\le \frac{1}{2^d}\sum_{i=1}^{2^d} \mathrm{KL}\prn*{\Pr_{\cD_1}[\alpha ~|~ B] \Vert \Pr_{\cD_i}[\alpha ~|~ B]} \\
    &= \frac{2^d-1}{2^d} \cdot  \mathrm{KL}\prn*{\Pr_{\cD_1}[\alpha ~|~ B] \Vert \Pr_{\cD_2}[\alpha ~|~ B]} \\
    &\le \max\crl{kz[1] - kz[2],0} \cdot \mathrm{kl}\prn{1/2 + \eps \Vert 1/2- \eps } + \max\crl{kz[2] - kz[1],0} \cdot \mathrm{kl}\prn{1/2 - \eps \Vert 1/2 + \eps }  \\
    &\le \abs{z[1] - z[2]} \cdot O(k\eps^2).
\end{align*}
The first line uses the convexity of KL. The second line uses the symmetry of the distributions $\cD_i$. The third line follows because under $\cD_i$, $k\alpha \sim \mathrm{Bin}(k z[i], 1/2+\eps) + \mathrm{Bin}(k-k z[i], 1/2-\eps)$, then applying chain rule for KL divergence. The last line applies the bound on the KL divergence between two Bernoulli random variables.

Now we investigate the expected difference between $z[1]$ and $z[2]$. Note that both of these are independently distributed as $\mathrm{Bin}(k, 1/2)/k$. By Hoeffding's inequality, we see that for any $i \in [2^d]$,
\begin{align*}
    \Pr_{B} \brk*{ \abs*{z[i] - \frac{1}{2}} \ge \sqrt{\frac{\log (2k)}{2k}}} \le \frac{1}{k},
\end{align*}
so by union bound, with probability at least $1-2/k$ we have $\abs{z[1] - z[2]} \le \sqrt{\frac{2\log (2k)}{k}}$. Using this, we can compute the bound that
\begin{align*}
    \mathrm{KL}\prn*{\cD_1 \Vert \bar{\cD}} &\le O(k \eps^2) \cdot \En_{B} \brk*{ \abs{z[1] - z[2]}} \\
    &\le O(k\eps^2) \cdot \prn*{\frac{2}{k} + \sqrt{\frac{2\log (2k)}{k} } } \\
    &\le O(\eps^2) \cdot \prn*{2 + \sqrt{ 2k\log (2k) } }.
\end{align*}
Plugging the previous display into Eq.~\eqref{eq:fano-lb-agnostic}, we see that if $n = C'd/(\sqrt{k} \eps^2)$ where $C' > 0$ is a sufficiently small constant we have $\inf_{\wh{f}} \sup_{i\in [2^d]} \En_{\cD_i} \brk*{\cL_{\cD_i}\prn*{\wh{f}} -\inf_{f' \in \cF} \cL_{\cD_i}(f')} \ge \frac{\eps}{4}$. As with the realizable setting proof, we can translate this to a lower bound for PAC learning; we omit the details.

\section{Technical Lemmas}

\begin{fact}\label{fact:ineq}
For any $A, B, C \ge 0$ if $A \le B + C \sqrt{A}$ then $A \le B + C^2 + \sqrt{B}C \le 2B + 2C^2$.
\end{fact}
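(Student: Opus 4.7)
\textbf{Proof proposal for \pref{fact:ineq}.} The plan is to treat the hypothesis $A \le B + C\sqrt{A}$ as a quadratic inequality in $x \coloneqq \sqrt{A} \ge 0$, namely $x^2 - Cx - B \le 0$, and then solve it via the quadratic formula. Since the leading coefficient is positive and $x \ge 0$, this is equivalent to
\[
    x \le \tfrac{1}{2}\prn*{C + \sqrt{C^2 + 4B}}.
\]
Squaring yields
\[
    A = x^2 \le \tfrac{1}{4}\prn*{C + \sqrt{C^2 + 4B}}^2 = \tfrac{1}{2}\prn*{C^2 + 2B + C\sqrt{C^2 + 4B}}.
\]

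Next, I would use the elementary bound $\sqrt{C^2 + 4B} \le C + 2\sqrt{B}$ (which follows by squaring both sides and noting $4C\sqrt{B} \ge 0$) to obtain
\[
    A \le \tfrac{1}{2}\prn*{C^2 + 2B + C\prn*{C + 2\sqrt{B}}} = B + C^2 + C\sqrt{B},
\]
which is the first claimed inequality. For the second inequality, I would invoke AM-GM in the form $C\sqrt{B} \le \tfrac{1}{2}(C^2 + B)$, which immediately gives $B + C^2 + C\sqrt{B} \le \tfrac{3}{2}(B + C^2) \le 2B + 2C^2$.

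There is no real obstacle here; the only small subtlety is checking that the quadratic-formula step is valid when $C = 0$ or $B = 0$, but both edge cases are trivial (if $C = 0$ the hypothesis reduces to $A \le B$, and if $B = 0$ the hypothesis gives $\sqrt{A}(\sqrt{A} - C) \le 0$, hence $\sqrt{A} \le C$). All three displays above remain valid in these degenerate cases, so the proof goes through uniformly.
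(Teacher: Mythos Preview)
Your proof is correct. The paper states \pref{fact:ineq} without proof (it appears only as a stated technical fact in the appendix), so there is no paper proof to compare against; your quadratic-formula argument followed by the bound $\sqrt{C^2 + 4B} \le C + 2\sqrt{B}$ and AM-GM is a clean and complete justification.
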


\begin{theorem}[Hoeffding's Inequality]\label{thm:hoeffding-inequality}
Let $Z_1, \cdots, Z_n$ be independent bounded random variables with $Z_i \in [a,b]$ for all $i \in [n]$. Then 
\begin{align*}
    \Pr\brk*{\abs*{\frac{1}{n}\sum_{t=1}^n Z_i - \En[Z_i]} \ge t} \le 2\exp \prn*{ \frac{-2nt^2}{(b-a)^2}}.
\end{align*}
    
\end{theorem}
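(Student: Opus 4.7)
The plan is to apply the classical Chernoff bounding method combined with Hoeffding's lemma. First I would reduce the two-sided tail to a one-sided bound: applying the one-sided version once to $\{Z_i\}$ and once to $\{-Z_i\}$ and union bounding costs only a factor of two. It then suffices to establish $\Pr\brk*{\frac{1}{n}\sum_{i=1}^n (Z_i - \En[Z_i]) \ge t} \le \exp\prn*{-2nt^2/(b-a)^2}$.

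For any $\lambda > 0$, Markov's inequality applied to the nonnegative random variable $\exp\prn*{\lambda \sum_{i=1}^n (Z_i - \En Z_i)}$ yields
\begin{align*}
\Pr\brk*{\textstyle \sum_{i=1}^n (Z_i - \En Z_i) \ge nt}
&\le \exp(-\lambda nt) \cdot \En\brk*{\exp\prn*{\lambda \textstyle\sum_i (Z_i - \En Z_i)}} \\
&= \exp(-\lambda nt) \cdot \prod_{i=1}^n \En\brk*{\exp\prn*{\lambda (Z_i - \En Z_i)}},
\end{align*}
where the last equality uses independence.

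The main technical ingredient is Hoeffding's lemma: for any mean-zero random variable $X$ with support in $[a',b']$, one has $\En[\exp(\lambda X)] \le \exp\prn*{\lambda^2(b'-a')^2/8}$. I would prove this by upper-bounding $\exp(\lambda x)$ on $[a',b']$ by the chord joining $(a', \exp(\lambda a'))$ to $(b', \exp(\lambda b'))$ (valid by convexity of the exponential), taking expectations using $\En X = 0$ to obtain a closed-form expression $\psi(\lambda)$, and then showing by computing $(\log \psi)''(\lambda) \le (b'-a')^2/4$ and invoking Taylor's theorem that $\log \psi(\lambda) \le \lambda^2(b'-a')^2/8$.

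Applying Hoeffding's lemma to each $X_i \coloneqq Z_i - \En Z_i$, which takes values in an interval of length exactly $b-a$, the product above is bounded by $\exp\prn*{n \lambda^2 (b-a)^2/8}$, so the whole expression is at most $\exp\prn*{-\lambda nt + n \lambda^2 (b-a)^2/8}$. Optimizing over $\lambda > 0$ (the minimizer is $\lambda^\star = 4t/(b-a)^2$) gives the one-sided bound $\exp\prn*{-2nt^2/(b-a)^2}$, and doubling handles the two-sided statement. The only nontrivial step is Hoeffding's lemma itself, which is the short convexity computation sketched above; everything else is routine.
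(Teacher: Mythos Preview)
Your proposal is correct and is precisely the standard Chernoff--Hoeffding argument. The paper, however, does not actually prove this statement: it is listed in the appendix of technical lemmas as a cited background fact with no accompanying proof, so there is nothing to compare against beyond noting that your sketch supplies exactly the classical argument one would expect.
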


\begin{theorem}[Paley-Zygmund]\label{thm:paley-zygmund}
Let $Z \ge 0$ be a random variable with finite variance. For any $\theta \in [0,1]$,
\begin{align*}
    \Pr\brk*{Z \ge \theta \cdot \En Z} \ge (1-\theta)^2 \frac{\En[Z]^2}{\En [Z^2]}.
\end{align*}
\end{theorem}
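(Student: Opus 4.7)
The plan is to prove the Paley--Zygmund inequality via the standard Cauchy--Schwarz argument that splits $\En[Z]$ into contributions from small and large values of $Z$. Concretely, I would begin from the decomposition
\begin{align*}
\En[Z] = \En\brk*{Z \cdot \ind{Z < \theta \En[Z]}} + \En\brk*{Z \cdot \ind{Z \ge \theta \En[Z]}},
\end{align*}
which is valid since $Z \ge 0$ and so both terms are well-defined and nonnegative.

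Next I would bound each piece. The first term is at most $\theta \En[Z]$, because on the event $\crl{Z < \theta \En[Z]}$ the integrand is strictly below $\theta \En[Z]$. The second term is where Cauchy--Schwarz enters: applying it to the product $Z \cdot \ind{Z \ge \theta \En[Z]}$ yields
\begin{align*}
\En\brk*{Z \cdot \ind{Z \ge \theta \En[Z]}} \le \sqrt{\En[Z^2]} \cdot \sqrt{\Pr\brk*{Z \ge \theta \En[Z]}},
\end{align*}
using the fact that the indicator squared equals itself. Combining the two bounds gives
\begin{align*}
(1-\theta) \En[Z] \le \sqrt{\En[Z^2] \cdot \Pr\brk*{Z \ge \theta \En[Z]}},
\end{align*}
and squaring both sides (both are nonnegative) and rearranging delivers the claimed bound $\Pr[Z \ge \theta \En[Z]] \ge (1-\theta)^2 \En[Z]^2 / \En[Z^2]$.

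There is essentially no main obstacle here, since this is a textbook application of Cauchy--Schwarz; the only minor care needed is in the edge cases $\theta = 0$ (trivial, as the RHS is $\le 1$) and $\En[Z] = 0$ (then $Z = 0$ a.s., making both sides zero or the inequality vacuous), and to note that $\En[Z^2]$ is finite by the hypothesis of finite variance so no division-by-zero issues arise in the nontrivial case.
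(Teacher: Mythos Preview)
Your proof is correct and is the standard Cauchy--Schwarz argument for the Paley--Zygmund inequality. The paper does not actually supply a proof of this statement; it is listed among the technical lemmas as a known result without proof, so there is nothing to compare against.
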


\begin{lemma}\label{lem:discrete_iid}
Let $X$ and $Y$ be two i.i.d.~discrete random variables which are supported on a set of size $k$. Then $\Pr \brk*{X = Y} \ge 1/k$.   
\end{lemma}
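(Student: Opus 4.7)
The plan is to reduce the statement to a direct application of Cauchy--Schwarz (equivalently, the power mean / QM-AM inequality) on the pmf of $X$. Since $X$ and $Y$ are both supported on a set of size $k$, I can enumerate the support as $\{a_1,\ldots,a_k\}$ and write $p_i \coloneqq \Pr[X = a_i]$ for $i \in [k]$. The only facts I need are that $\sum_{i=1}^k p_i = 1$ and that $X,Y$ are i.i.d.

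First I would compute $\Pr[X = Y]$ by conditioning on the value of $X$ (or equivalently summing over the diagonal of the joint pmf). By independence,
\begin{align*}
    \Pr[X = Y] = \sum_{i=1}^k \Pr[X = a_i]\,\Pr[Y = a_i] = \sum_{i=1}^k p_i^2.
\end{align*}
Next I would apply Cauchy--Schwarz to the vectors $(p_1,\ldots,p_k)$ and $(1,\ldots,1)$:
\begin{align*}
    1 = \Big(\sum_{i=1}^k p_i\Big)^2 \le k \cdot \sum_{i=1}^k p_i^2,
\end{align*}
which rearranges to $\sum_{i=1}^k p_i^2 \ge 1/k$, completing the proof.

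There is no real obstacle here; the statement is a one-line consequence of Cauchy--Schwarz once the collision probability is expanded as $\sum_i p_i^2$. The only minor point of care is the phrase ``supported on a set of size $k$''---it should be read as ``contained in some set of size $k$,'' so some $p_i$ may equal zero, but this does not affect the argument since the bound $\sum_i p_i^2 \ge 1/k$ is attained exactly by the uniform distribution and the inequality only gets tighter when mass concentrates. Thus the proof is a short two-line calculation and does not require any additional machinery from the paper.
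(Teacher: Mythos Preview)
Your proof is correct and essentially identical to the paper's: both expand $\Pr[X=Y]=\sum_i p_i^2$ and then apply Cauchy--Schwarz (the paper phrases it as $\nrm{v}_1 \le \sqrt{k}\,\nrm{v}_2$, which is the same inequality).
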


\begin{proof}
Let $\Omega$ be the support. The probability that $X=Y$ can be calculated as
\begin{align*}
    \Pr \brk*{X = Y} = \sum_{x \in \Omega} \Pr[X=x]^2 \ge \frac{1}{k} \cdot \prn*{\sum_{x \in \Omega} \Pr[X=x]}^2 = \frac{1}{k}.
\end{align*}
The inequality uses the fact that $\nrm{v}_1 \le \sqrt{k}\nrm{v}_2$ for any $v \in \bbR^k$.
\end{proof}
\colt{}
\arxiv{\section{Experimental Details}\label{app:experiments}}

\subsection{Implementation Details}\label{app:implementation-details}
Our code can be found on GitHub at \url{https://github.com/GXLI97/llp_experiments}. All experiments were run on an NVIDIA RTX A6000 GPU using Tensorflow and Keras. 

We elaborate on the architectures used in our experiments.
\begin{itemize}
    \item The linear model has a single dense output layer with 1 unit and sigmoid activation.
    \item The small two layer NN comprises of a dense layer with 100 units and ReLU activation, followed by a dense output layer with 1 unit and sigmoid activation.
    \item The large two layer NN comprises of a dense layer with 1000 units and ReLU activation, followed by a dense output layer with 1 unit and sigmoid activation.
    \item The small CNN is the same architecture in \citep{busa2023easy}:
    \begin{itemize}
        \item Convolutional layer with 32 kernels of size $3 \times 3$ and ReLU activation.
        \item Max pooling layer with pool $2 \times 2$.
        \item Convolutional layer with 64 kernels of size $3 \times 3$ and ReLU activation.
        \item Max pooling layer with pool size $2 \times 2$.
        \item Flatten layer.
        \item Dropout layer with drop rate $0.5$.
        \item Dense output layer with 1 unit and sigmoid activation.
    \end{itemize}
    \item The large CNN is the same architecture in \citep{busa2023easy}:
    \begin{itemize}
        \item Convolutional layer with 32 kernels of size $3 \times 3$ and ReLU activation.
        \item Convolutional layer with 32 kernels of size $3 \times 3$ and ReLU activation.
        \item Max pooling layer with pool $2 \times 2$.
        \item Dropout layer with drop rate $0.25$.
        \item Convolutional layer with 64 kernels of size $3 \times 3$ and ReLU activation.
        \item Convolutional layer with 64 kernels of size $3 \times 3$ and ReLU activation.
        \item Max pooling layer with pool $2 \times 2$.
        \item Dropout layer with drop rate $0.25$.
        \item Flatten layer.
        \item Dense layer with 512 units and ReLU activation.
        \item Dropout layer with drop rate $0.5$.
        \item Dense output layer with 1 unit and sigmoid activation.
    \end{itemize}
\end{itemize}

\colt{\clearpage}
\subsection{Results Summary}\label{app:additional-results}
\newcommand{\STAB}[1]{\begin{tabular}{@{}c@{}}#1\end{tabular}}

\begin{table}[h] 
\centering
\arxiv{\footnotesize}
\begin{tabular}{c l ccccc} 
\toprule
 & & \ezlog{} & \ezsq{} &
\pmlog{} &
\pmsq{} &
\dsq{} \\
\midrule
\multirow{5}{*}{\STAB{\rotatebox[origin=c]{90}{$k=10$}}} &
Linear & 10.9 & 10.2 & \bf{9.47} & 9.51 & 9.50 \\
& Two Layer Small & 5.32 & 3.93 & \bf{1.56} & \bf{1.56} & \bf{1.56} \\
& Two Layer Large & 4.56 & 3.50 & 1.28 & \bf{1.18} & 1.19 \\
& CNN Small & 3.89 & 2.47 & 1.06 & 1.08 & \bf{1.02} \\
& CNN Large & 3.54 & 1.58 & 0.417 & \bf{0.416} & 0.422 \\
\midrule
\multirow{5}{*}{\STAB{\rotatebox[origin=c]{90}{$k=100$}}} &
Linear & 14.2 & 12.9 & \bf{11.8} & \bf{11.8} & \bf{11.8} \\
& Two Layer Small & 8.72 & 8.33 & \bf{3.77} & 3.84 & 3.80 \\
& Two Layer Large & 9.29 & 8.50 & 3.21 & \bf{2.80} & \bf{2.80} \\
& CNN Small & 6.89 & 5.82 & 2.11 & 2.12 & \bf{2.08} \\
& CNN Large & 5.86 & 4.63 & 0.826 & \bf{0.774} & 0.841 \\
\midrule
\multirow{5}{*}{\STAB{\rotatebox[origin=c]{90}{$k=1000$}}} &
Linear & 21.3 & 21.5 & \bf{19.9} & \bf{19.9} & \bf{19.9}\\
& Two Layer Small & 20.5 & 20.5 & \bf{20.1} & 20.2 & 20.2 \\
& Two Layer Large & 19.6 & 21.0 & \bf{18.2} & 18.5 & 18.4 \\
& CNN Small & 16.1 & 16.3 & \bf{14.1} & 14.4 & 14.3 \\
& CNN Large & 15.3 & 15.2 & \bf{13.1} & 13.7 & 13.7 \\
\toprule
\end{tabular}
\vspace{-1em}
\caption{MNIST Test Error (\%) for LLP Algorithms. Best error is reported in \bf{bold}.}\label{tab:results-summary-mnist}
\arxiv{\vspace{-0.5em}}
\end{table}

\begin{table}[h] 
\centering
\arxiv{\footnotesize}
\begin{tabular}{c l ccccc} 
\toprule
 & & \ezlog{} & \ezsq{} &
\pmlog{} &
\pmsq{} &
\dsq{} \\
\midrule
\multirow{5}{*}{\STAB{\rotatebox[origin=c]{90}{$k=10$}}} &
Linear & 19.2 & 18.7 & \bf{18.4} & \bf{18.4} & \bf{18.4} \\
& Two Layer Small & 16.3 & 15.8 & \bf{14.6} & 14.7 & 14.7 \\
& Two Layer Large & 18.0 & 16.5 & \bf{13.2} & 13.4 & 13.4 \\
& CNN Small & 10.5 & 10.2 & 8.30 & \bf{8.09} & 8.23 \\
& CNN Large & 10.5 & 10.2 & 8.19 & \bf{8.11} & 8.18 \\
\midrule
\multirow{5}{*}{\STAB{\rotatebox[origin=c]{90}{$k=100$}}} &
Linear & 22.1 & 21.0 & 20.5 & \bf{20.4} & \bf{20.4} \\
& Two Layer Small & 20.1 & 20.1 & \bf{19.3} & \bf{19.3} & \bf{19.3}\\
& Two Layer Large & 20.3 & 24.5 & \bf{19.5} & 19.7 & 19.7 \\
& CNN Small & 17.4 & 17.3 & 13.0 & 13.0 & \bf{12.9} \\
& CNN Large & 20.4 & 18.7 & \bf{13.2} & 13.3 & 13.4 \\
\midrule
\multirow{5}{*}{\STAB{\rotatebox[origin=c]{90}{$k=1000$}}} &
Linear & 24.0 & 45.7 & \bf{28.5} & \bf{28.5} & \bf{28.5} \\
& Two Layer Small & 37.2 & 49.8 & 36.8 & \bf{36.7} & \bf{36.7} \\
& Two Layer Large & 27.1 & 54.0 & \bf{35.1} & 37.9 & 37.4 \\
& CNN Small & 33.5 & 33.6 & 35.7 & 35.8 & \bf{35.6} \\
& CNN Large & 36.1 & 36.1 & 35.9 & 36.8 & \bf{36.6} \\
\toprule
\end{tabular}
\vspace{-1em}
\caption{CIFAR Test Error (\%) for LLP Algorithms. Best error is reported in \bf{bold}.}\label{tab:results-summary-cifar}
\vspace{-1.5em}
\end{table}

\end{document}